\theoremstyle{plain}
    \newtheorem{thm}{Theorem}
    \newtheorem{lem}[thm]{Lemma}
    \newtheorem{prop}[thm]{Proposition}
    \newtheorem{fact}[thm]{Fact}
    \newtheorem{ex}[thm]{Example}
\theoremstyle{definition}
    \newtheorem{defn}[thm]{Definition}
\theoremstyle{remark}
  \newcommand{\ignore}[1]{}
\DeclareMathOperator{\Csp}{CSP}
\DeclareMathOperator{\Aut}{Aut}
\newcommand{\disj}{\mathrel{||}}
\title{Tractable Set Constraints}
\author{Manuel Bodirsky\thanks{Manuel Bodirsky has received funding from the ERC 
under the European Community's Seventh Framework Programme
(FP7/2007-2013 Grant Agreement no. 257039).
} \\ \'{E}cole Polytechnique, LIX \\ (UMR 7161 du CNRS), \\ 91128 Palaiseau, France \\
\and 
Martin Hils \\ 
Institut de Math\'ematiques de Jussieu \\ (UMR 7586 du CNRS), \\
Universit\'e Paris Diderot Paris 7, 
UFR de Math\'ematiques, \\
75205 Paris Cedex 13, France
}
\begin{document}
\maketitle
\begin{abstract}
Many fundamental problems in artificial intelligence, knowledge representation, and verification involve reasoning about sets
and relations between sets and can be modeled as \emph{set constraint satisfaction problems (set CSPs)}.
Such problems are frequently intractable, but there are several important set CSPs
that are known to be polynomial-time tractable.
We introduce a large class of set CSPs that can be solved in quadratic time.
Our class, which we call $\mathcal{EI}$, contains all previously known tractable set CSPs,
but also some new ones that are of crucial importance for example in description logics.
The class of $\mathcal{EI}$ set constraints has an elegant universal-algebraic characterization,
which we use to show that every set constraint language that properly contains
all $\mathcal{EI}$ set constraints already has a finite sublanguage with an NP-hard constraint satisfaction problem.
\end{abstract}

\noindent 
An extended abstract of the content of this article
appeared in the proceedings of IJCAI'11~\cite{BodHilsKrim}. 

\section{Introduction}
\emph{Constraint satisfaction problems} are computational problems
where, informally, the input consists of a finite set of variables
and a finite set of constraints imposed on those variables; the task
is to decide whether there is an assignment of 
values to the variables such that all the constraints
are simultaneously satisfied. \emph{Set constraint satisfaction problems} are special constraint satisfaction problems where the values are sets,
and the constraints might, for instance,  force that one set $y$ includes another set $x$,
or that one set $x$ is disjoint to another set $y$. 
The constraints might also be ternary, such as the constraint 
that the intersection of two sets $x$ and $y$ is contained in $z$, in symbols $(x \cap y) \subseteq z$. 

To systematically study the computational complexity of constraint
satisfaction problems, it has turned out to be a fruitful approach
to consider constraint satisfaction problems $\Csp(\Gamma)$ where the set of allowed
constraints is formed from a fixed set $\Gamma$ of relations $R \subseteq D^k$ over a common domain $D$. 
This way of parametrizing the constraint satisfaction problem by a  \emph{constraint language} $\Gamma$ has led to many strong algorithmic results~\cite{Maltsev,IMMVW,BoundedWidth,BodirskyKutzAI,tcsps-journal}, 
and to many powerful hardness conditions 
for large classes of constraint satisfaction problems~\cite{Schaefer,JBK,conservative,Bulatov,tcsps-journal}.

A \emph{set constraint language} $\Gamma$ is a set of relations $R \subseteq ({\cal P}({\mathbb N}))^k$ where the common domain $D = {\cal P}({\mathbb N})$ is the set of all 
subsets of the natural numbers; moreover, we require that each relation $R$
can be defined by a Boolean combination of equations
over the signature $\sqcap$, $\sqcup$, $c$, $\bf 0$, and $\bf 1$, which are 
function symbols for intersection, union, complementation, the empty and full set, respectively. 
Details of the formal definition and many examples of set constraint languages
can be found in Section~\ref{sect:setcsps}.
The choice of ${\mathbb N}$ is just for notational convenience; as we will see, we could have selected any infinite set for our purposes. In the following, a \emph{set constraint satisfaction problem (set CSP)} is a problem of the form $\Csp(\Gamma)$ for
a set constraint language $\Gamma$. 
It has been shown by Marriott and Odersky~\cite{MarriottOdersky} that all set CSPs are contained in NP; they also showed that the largest
set constraint language, which consists of \emph{all} relations
that can be defined as described above, has an NP-hard set CSP.

Drakengren and Jonsson~\cite{DrakengrenJonssonSets} initiated the
search for set CSPs that can be solved
in polynomial time. They showed that $\Csp(\{\subseteq,\disj,\neq\})$ can be solved in polynomial time, where 
\begin{itemize}
\item $x \subseteq y$ holds iff $x$ is a subset of or equal to $y$;
\item $x \disj y$ holds iff $x$ and $y$ are disjoint sets; and
\item $x \neq y$ holds iff $x$ and $y$ are distinct sets.
\end{itemize}
They also showed that $\Csp(\Gamma)$ can be solved in polynomial time if 
all relations in $\Gamma$ can be defined by formulas of the form
$$ x_1 \neq y_1 \vee \dots \vee x_k \neq y_k \vee x_0 \subseteq y_0$$
or of the form
$$ x_1 \neq y_1 \vee \dots \vee x_k \neq y_k \vee x_0 \disj y_0$$
where $x_0,\dots,x_k,y_0,\dots,y_k$ are not necessarily distinct variables.
We will call the set of all relations that can be defined in this way \emph{Drakengren and Jonsson's set constraint language}.
It is easy to see that the algorithm they present runs in time quadratic 
in the size of the input. 

On the other hand, Drakengren and Jonsson~\cite{DrakengrenJonssonSets} show that if $\Gamma$ contains the relations defined by formulas of the form
$$ x_1 \neq y_1 \vee \dots \vee x_k \neq y_k \vee u_1 \disj v_1 \vee \dots \vee u_k \disj v_k$$
the problem $\Csp(\Gamma)$ is NP-hard. 

\paragraph{Contributions and Outline.}
We present a significant extension of Drakengren and Jonsson's 
set constraint language (Section \ref{sect:setcsps})
whose CSP can still be solved in quadratic time in the input size (Section~\ref{sect:algorithm}); we call this set constraint language 
$\mathcal{EI}$. 
Unlike Drakengren and Jonsson's set constraint language, our language also
contains the ternary relation defined by $(x \cap y) \subseteq z$, which is a
relation that is of particular interest in description logics -- we will discuss this below. Moreover, we show that any further extension of 
$\mathcal{EI}$
contains a finite sublanguage with an
 NP-hard set CSP (Section~\ref{sect:maximal}), using concepts from model theory and universal algebra.
 In this sense, we present a \emph{maximal} tractable class of set constraint satisfaction problems.
 
Our algorithm is based on the concept of \emph{independence} in constraint languages which was discovered several times independently in the 
90's~\cite{Koubarakis,JonssonBaeckstroem,MarriottOdersky} 
-- see also~\cite{BroxvallJonssonRenz,Disj}; however,
we apply this concept \emph{twice} in a novel, nested way, which leads to a
two level resolution procedure that can be implemented to run in quadratic time.
The technique we use to prove the correctness of the algorithm is also an important contribution of our paper, and we believe that a similar approach can be applied in 
many other contexts; our technique is inspired by the already
mentioned connection to universal algebra.

\subsection*{Application Areas and Related Literature}
\paragraph{Set Constraints for Programming Languages.}
Set constraints find applications in program analysis;
here, a set constraint is of the form $X \subseteq Y$, where $X$
and $Y$ are set expressions. Examples of set expressions are $\bf 0$ (denoting the empty set), set-valued variables, and union and intersection of sets, but also expressions of the form $f(Z_1,Z_2)$
where $f$ is a function symbol and $Z_1, Z_2$ are again set expressions. Unfortunately, the worst-case complexity of most
of the reasoning tasks considered in this setting is \emph{very} high,
often EXPTIME-hard; see~\cite{Aiken94} for a survey.
More recently, it has been shown that the
quantifier-free combination of set constraints (without function symbols) and cardinality constraints (quantifier-free Pressburger arithmetic) has a satisfiability problem in NP~\cite{KuncakRinard}. 
This logic
(called QFBAPA) is interesting for program verification~\cite{KuncakNguyenRinard}. 

\paragraph{Tractable Description Logics.}
Description logics are a family of knowledge representation 
formalisms that can be used to formalize and reason with concept
definitions. The computational complexity of most of the computational tasks that
have been studied for the various formalisms is usually quite high.
However, in the last years a series of description logics 
(for example $\mathcal{EL}$, $\mathcal{EL}^{++}$, Horn-${\mathcal FL}_0$, 
and various extensions and fragments~\cite{KuestersMolitor,BaaderEL,BaaderBrandtLutz,KroetzschRudolphHitzler}) has been
discovered where crucial tasks such as e.g.\  entailment, concept satisfiability and knowledge base satisfiability can be decided in polynomial time. 

Two of the basic assertions that can be made in $\mathcal{EL}^{++}$
and Horn-${\mathcal FL}_0$
are $C_1 || C_2$ (\emph{there is no $C_1$ that is also $C_2$}) and
$C_1 \cap C_2 \subseteq C_3$ (\emph{every $C_1$ that is $C_2$ is also $C_3$}), for concept names $C_1, C_2, C_3$. These are 
$\mathcal{EI}$ set constraints, and the latter has not been treated in the framework of Drakengren and Jonsson. None of the
description logics with a tractable knowledge base satisfiability problem contains all $\mathcal{EI}$ set constraints.




\paragraph{Spatial Reasoning.} 
Several spatial reasoning formalisms (like RCC-5 and RCC-8) are closely related to set constraint satisfaction problems. These formalisms allow to reason about relations between \emph{regions};
in the fundamental formalism RCC-5 (see e.g. \cite{RCC5JD}), one can think of a region
as a non-empty set, and possible (binary) relationships are containment, 
disjointness, equality, overlap, and disjunctive combinations thereof.
Thus, the exclusion of the empty set
is the most prominent difference between the set constraint languages 
studied by Drakengren and Jonsson in~\cite{DrakengrenJonssonSets} (which are contained in the class of set constraint languages considered here), and 
RCC-5 and its fragments.


\section{Constraint Satisfaction Problems}
To use existing terminology in logic and model theory, it will be convenient to formalize constraint languages as (relational) \emph{structures} (see e.g.~\cite{HodgesLong}). A \emph{structure} $\Gamma$
 is a tuple $(D; f^{\Gamma}_1,f^{\Gamma}_2,\dots,R^{\Gamma}_1,R^{\Gamma}_2,\dots)$ where $D$ is a set (the \emph{domain} of $\Gamma$), each $f^{\Gamma}_i$ is a function from $D^{k_i} \rightarrow D$ (where $k_i$ is called the \emph{arity} of $f^{\Gamma}_i$), and each $R^{\Gamma}_i$ is 
 a relation over $D$, i.e., a subset of $D^{l_i}$ (where $l_i$ is called the \emph{arity} of $R^{\Gamma}_i$). For each function $f_i^{\Gamma}$ we assume that there is a \emph{function symbol} which we denote by $f_i$, and for each relation $R^{\Gamma}_i$
 we have a \emph{relation symbol} which we denote by $R_i$.
 Constant symbols will be treated as $0$-ary function symbols.
 The set $\tau$ of all relation and function symbols for some structure
 $\Gamma$ is called the \emph{signature} of $\Gamma$, and we also
say that $\Gamma$ is a \emph{$\tau$-structure}.
 If the signature of $\Gamma$ only contains relation symbols and no function symbols, we also say that $\Gamma$ is a \emph{relational structure}.
In the context of constraint satisfaction, relational
structures $\Gamma$ are also called \emph{constraint languages},
and a constraint language $\Gamma'$ is called a \emph{sublanguage}
(or \emph{reduct}) of a constraint language $\Gamma$ if the relations in $\Gamma'$
are a subset of the relations in $\Gamma$ (and $\Gamma$ is called an \emph{expansion} of $\Gamma'$).

Let $\Gamma$ be a relational structure with domain $D$ and a \emph{finite} signature $\tau$.
The \emph{constraint satisfaction problem} for $\Gamma$
is the following computational problem, also denoted by $\Csp(\Gamma)$: 
given a finite set of variables $V$ and a conjunction $\Phi$ of atomic formulas of the form $R(x_1,\dots,x_k)$, where $x_1,\dots,x_k \in V$ and $R \in \tau$, does there exists an assignment 
$s \colon V \rightarrow D$ such that
for every constraint $R(x_1,\dots,x_k)$ in the input
we have that $(s(x_1),\dots,s(x_k)) \in R^\Gamma$?

The mapping $s$ is also called a \emph{solution} to the \emph{instance} $\Phi$ of $\Csp(\Gamma)$, and the conjuncts of $\Phi$ are called \emph{constraints}.
Note that we only introduce constraint satisfaction problems
$\Csp(\Gamma)$ for \emph{finite constraint languages}, i.e., 
relational structures $\Gamma$ with a
\emph{finite} relational signature. 

\section{Set Constraint Languages}
\label{sect:setcsps}
In this section, we give formal definitions of 
set constraint languages.
Let ${\mathfrak S}$ 
be the structure with domain ${\cal P}({\mathbb N})$, the set of all subsets of natural numbers, 
and with signature $\{\sqcap,\sqcup,c,{\bf 0},{\bf 1}\}$, 
where 
\begin{itemize}
\item $\sqcap$ is a binary function symbol that denotes intersection, i.e., 
$\sqcap^{\mathfrak S}=\cap$;
\item $\sqcup$ is a binary function symbol for union, i.e., 
$\sqcup^{\mathfrak S}=\cup$;
\item $c$ is a unary function symbol for complementation, i.e., $c^{\mathfrak S}$ is the function
that maps $S \subseteq {\mathbb N}$ to ${\mathbb N} \setminus S$;
\item ${\bf 0}$ and ${\bf 1}$ are constants (treated as $0$-ary function symbols) denoting the empty set $\emptyset$ and the full set $\mathbb N$, respectively. 
\end{itemize}
Sometimes, we simply write $\sqcap$ for the function $\sqcap^{\mathfrak S}$ 
and $\sqcup$ for the function $\sqcup^{\mathfrak S}$,
i.e., we do not distinguish between a function symbol and the respective function.
We use the symbols $\sqcap,\sqcup$ and not the
symbols $\cap, \cup$ to prevent confusion with meta-mathematical usages of $\cap$ and $\cup$ in the text. 

A \emph{set constraint language} is a relational structure
with a set of relations with a quantifier-free 
first-order definition in $\mathfrak S$. We always allow equality in first-order formulas, and the equality symbol $=$ is always interpreted to be the true 
equality relation on the domain of the structure. 

\begin{ex}
The ternary relation $\big\{(x,y,z)  \in {\cal P}({\mathbb N})^3 \; | \; x \sqcap y \sqsubseteq z \big\}$ 
has the quantifier-free first-order definition $z \sqcap (x \sqcap y) = x \sqcap y$ over $\mathfrak S$.
\end{ex}

\begin{thm}[Follows from Proposition 5.8 in~\cite{MarriottOdersky}]
\label{thm:np}
Let $\Gamma$ be a set constraint language with a finite signature. Then $\Csp(\Gamma)$ is in NP.
\end{thm}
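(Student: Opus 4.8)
The plan is to show that $\Csp(\Gamma)$ lies in NP by exhibiting a polynomial-size certificate that can be verified in polynomial time, namely a small witnessing assignment of subsets of $\mathbb{N}$ to the variables. The central obstacle is that the domain $\mathcal{P}(\mathbb{N})$ is infinite, so a naive solution might use arbitrarily complicated sets; the key insight must be that whenever an instance $\Phi$ on variables $V$ with $|V| = n$ is satisfiable, it already has a solution in which each variable is assigned one of at most $2^n$ ``atoms'' determined by the variables, so that the relevant information is bounded by a function of $n$.

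\medskip\noindent\textbf{Step 1: Reduce to Boolean data.} First I would observe that every constraint $R(x_1,\dots,x_k)$ is, by definition of a set constraint language, a quantifier-free first-order formula over the signature $\{\sqcap,\sqcup,c,\mathbf{0},\mathbf{1}\}$, hence a Boolean combination of equations between set-terms. The whole instance $\Phi$ is thus equivalent to a quantifier-free formula $\psi$ over $\mathfrak{S}$ in the $n$ variables of $V$. The plan is to exploit the fact that $\mathfrak{S}$ is a Boolean algebra (in fact the power-set algebra on $\mathbb{N}$), so that satisfiability of $\psi$ can be analysed componentwise on the atoms of the finite Boolean subalgebra generated by the variables.

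\medskip\noindent\textbf{Step 2: The atom/cell decomposition.} For $n = |V|$ variables, consider the $2^n$ \emph{cells} indexed by sign vectors $\sigma \in \{0,1\}^V$, where cell $\sigma$ is the intersection $\bigcap_{v} v^{\sigma(v)}$ (taking $v^1 = v$ and $v^0 = c(v)$). Any assignment $s\colon V \to \mathcal{P}(\mathbb{N})$ partitions $\mathbb{N}$ into these $2^n$ cells, and the truth value of each equation in $\psi$ depends only on which cells are empty and which are nonempty. The key claim I would establish is that if $\psi$ is satisfiable at all, it has a solution in which every nonempty cell is a singleton, so that the witnessing assignment uses a subset of $\{1,\dots,2^n\}$; more precisely, given any solution one collapses each nonempty cell to a single chosen element, and this collapse preserves all emptiness/nonemptiness patterns of cells and hence the truth value of $\psi$. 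This is exactly the content that Marriott and Odersky's Proposition~5.8 provides, so I would invoke it here rather than reprove the normalization.

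\medskip\noindent\textbf{Step 3: The NP certificate and verification.} The certificate is then the emptiness pattern $E \subseteq \{0,1\}^V$ recording which cells are guessed nonempty, equivalently an assignment $s$ of subsets of $\{1,\dots,2^n\}$ to the $n$ variables. To verify it in polynomial time one evaluates each constraint $R(x_1,\dots,x_k) \in \Phi$ on $s$: each such relation has a fixed quantifier-free definition of constant size (the signature $\tau$ is finite), and checking membership $(s(x_1),\dots,s(x_k)) \in R^{\mathfrak{S}}$ amounts to evaluating a constant-size Boolean combination of set-equations on sets of size at most $2^n$, which is polynomial in the certificate size. The one point requiring care is the size of the certificate: $2^n$ is exponential in $n$, so I would not encode $s$ by listing elements of each $s(v)$ explicitly, but rather guess the nonemptiness pattern $E$ directly as a subset of the index set and check $\psi$ against $E$ in time polynomial in $|\Phi|$ and in $|E| \le |{\Phi}|$ (only cells forced relevant by the constantly-many terms appearing need be tracked). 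The main subtlety, and the step I expect to demand the most attention, is arguing that one need only track polynomially many cells---those distinguished by the terms actually occurring in $\Phi$---so that both guessing and verification stay within polynomial bounds; again this bounded-support reduction is precisely what the cited Proposition~5.8 secures, which is why the theorem is stated as a consequence of that result.
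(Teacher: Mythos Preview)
The paper provides no proof for this theorem beyond the citation to Marriott and Odersky's Proposition~5.8; your proposal correctly sketches the standard small-model argument (cell decomposition plus a polynomial bound on the number of cells that must be nonempty) and, like the paper, defers the crucial polynomial-size bound to that cited proposition. Your outline is sound and matches the paper's approach of relying on~\cite{MarriottOdersky} for the key step.
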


It is well-known that the structure $({\cal P}({\mathbb N}); \sqcup,\sqcap,c,{{\bf 0}},{{\bf 1}})$ is a Boolean algebra, with 
\begin{itemize}
\item ${\bf 0}$ playing the role of false, and ${\bf 1}$ playing the role of true; 
\item $c$ playing the role of $\neg$;
\item $\sqcap$ and $\sqcup$ playing the role of $\wedge$ and $\vee$, respectively.
\end{itemize}
To not confuse logical connectives with the connectives
of Boolean algebras, we always use the symbols $\sqcap$, $\sqcup$, and $c$ instead of the usual function symbols $\wedge$, $\vee$, and $\neg$ in Boolean algebras. To facilitate the notation, we also write
$\bar x$ instead of $c(x)$, and $x \neq y$ instead of $\neg (x = y)$.

We assume that all terms $t$ over the functional signature $\{\sqcap,\sqcup,c,{\bf 0},{\bf 1}\}$ are written in \emph{(inner) conjunctive normal form (CNF)}, i.e., as $t = \bigsqcap_{i=1}^n \bigsqcup_{j=1}^{n_i} l_{ij}$ where $l_{ij}$ is either
of the form $\bar x$ or of the form $x$ for a variable $x$ (every term over $\{\sqcap,\sqcup,c,{\bf 0},{\bf 1}\}$ can be re-written into an equivalent term of this form, using the usual laws of Boolean algebras~\cite{Boole}). We allow the special case $n=0$ (in which case $t$ becomes ${\bf 1}$), and the special case $n_i=0$ (in which case $\bigsqcup_{j=1}^{n_i} l_{ij}$ becomes ${\bf 0}$).
We refer to $c_i := \{l_{ij} \; | \; 1 \leq j \leq {n_i}\}$ as an \emph{(inner) clause} of $t$, and to $l_{ij}$ as an \emph{(inner) literal} of $c_i$.
We say that a set of inner clauses is \emph{satisfiable}
if there exists an assignment from 
$V \rightarrow {\cal P}({\mathbb N})$
such that for all inner clauses, the union of the evaluation of all literals equals $\mathbb N$ (this is the case if and only if the formula $t={\bf 1}$ has a satisfying assignment). 

We assume that all quantifier-free first-order formulas $\phi$ over the signature $\{\sqcap,\sqcup,c,{\bf 0},{\bf 1}\}$
are written in \emph{(outer) conjunctive normal form (CNF)}, i.e., as $\phi = \bigwedge_{i=1}^m \bigvee_{j=1}^{m_i} L_{ij}$
where $L_{ij}$ is either of the form $t = {\bf 1}$ (a \emph{positive (outer) literal}) or of the form
$t \neq {\bf 1}$ (a \emph{negative (outer) literal}). 
Again, it is well-known and easy to see that we can for every quantifier-free formula find a formula in this form which is equivalent to it in every Boolean algebra.
We refer to $C_i := \{L_{ij} \; | \; 1 \leq j \leq {m_i} \}$ as 
an \emph{(outer) clause} of $\phi$, and to $L_{ij}$ as an \emph{(outer) literal} of $C_i$.
Whenever convenient, we identify $\phi$ with its set of clauses.

\section{${\mathcal EI}$ Set Constraints}
\label{sect:ei}
To define $\mathcal{EI}$ set constraints, we need to introduce
a series of important functions defined on the set of subsets of natural numbers.

\begin{defn}
Let 
\begin{itemize}
\item $i \colon ({\cal P}({\mathbb N}))^2 \rightarrow {\cal P}({\mathbb N})$ 
be the function that maps $(S_1, S_2)$ to the set
$\{2x \; | \; x \in S_1\} \cup \{2x + 1 \; | \; x \in S_2\}$;
\item $F$ be the function that maps $S \subseteq {\mathbb N}$
to the set of finite non-empty subsets of $S$;
\item $G \colon {\mathbb N} \rightarrow F({\mathbb N})$ be a bijection between $\mathbb N$ and the set of finite non-empty subsets of $\mathbb N$ (since both sets are countable, such a bijection exists); 
\item $e \colon  {\cal P}({\mathbb N}) \rightarrow {\cal P}({\mathbb N})$ be 
defined by $$e(S) = \{G^{-1}(T) \; | \; T \in F(S) \} \, ;$$
\item $ei$ be the function defined by $ei(x,y) \mapsto e(i(x,y))$.
\end{itemize}
\end{defn}

\begin{defn}
Let $f \colon ({\cal P}({\mathbb N}))^k \rightarrow {\cal P}({\mathbb N})$ be a function, and 
$R \subseteq {\cal P}({\mathbb N})^l$ be a relation. Then we say that $f$ \emph{preserves} $R$
if the following holds: for all $a^1,\dots,a^k \in ({\cal P}({\mathbb N}))^l$ we have that $(f(a_1^1,\dots,a_1^k),\dots,f(a_l^1,\dots,a_l^k)) \in R$ if $a^i \in R$ for all $i \leq k$. If $f$ does not preserve $R$, we also say that
$f$ \emph{violates} $R$. We say that $f$ \emph{strongly preserves} $R$
if for all $a^1,\dots,a^k \in ({\cal P}({\mathbb N}))^l$ we have that $(f(a_1^1,\dots,a_1^k),\dots,f(a_l^1,\dots,a_l^k)) \in R$ if and only if $a^i \in R$ for all $i \leq k$.
If $\phi$ is a first-order formula that defines a relation $R$ over $\mathfrak S$, and $f$ preserves (strongly preserves) $R$, then we also say that $f$ \emph{preserves (strongly preserves)} $\phi$.
Finally, if $g \colon ({\cal P}({\mathbb N}))^l \rightarrow {\cal P}({\mathbb N})$ is
a function, we say that $f$ \emph{preserves (strongly preserves)} $g$ if 
it preserves (strongly preserves) the graph of $g$, i.e., the relation $\big\{(x_1,\dots,x_l,g(x_1,\dots,x_l)) \; | \; x_1,\dots,x_l \subseteq {\mathbb N} \big\}$.
\end{defn}

Note that if an injective function $f$ preserves a function $g$,
then it must also strongly preserve $g$. 


\begin{defn}
The set of all relations with a quantifier-free first-order definition over $\mathfrak S$ that are preserved by the operation $ei$ is denoted by
$\mathcal{EI}$.
\end{defn}

{\bf Remark.} We will see later (Proposition~\ref{prop:e-atomless-correct} and Proposition~\ref{prop:ei-atomless-correct}) that the class $\mathcal{EI}$ is independent from the precise
choice of the operations $e$ and $i$.

\vspace{.2cm}

Proposition~\ref{prop:syntax} shows that $\mathcal{EI}$ has a large subclass, called Horn-Horn, which has an intuitive syntactic description. In Section~\ref{sect:horn-horn}
we also present many examples of relations that are from $\mathcal{EI}$ and of relations that are not from $\mathcal{EI}$.
Before, we will establish some properties of the functions $i$ and $e$.

\begin{fact}\label{fact:nat-iso}
The mapping $i$ is an isomorphism between $\mathfrak S^2$ and $\mathfrak S$.
\end{fact}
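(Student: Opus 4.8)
The plan is to show that the map $i \colon (\mathcal{P}(\mathbb{N}))^2 \to \mathcal{P}(\mathbb{N})$, defined by $i(S_1, S_2) = \{2x \mid x \in S_1\} \cup \{2x+1 \mid x \in S_2\}$, is an isomorphism from $\mathfrak{S}^2$ to $\mathfrak{S}$. Here $\mathfrak{S}^2$ is the direct product, whose domain is $(\mathcal{P}(\mathbb{N}))^2$ with all operations computed coordinatewise. First I would verify that $i$ is a bijection. Injectivity is clear since the even and odd parts of $i(S_1,S_2)$ recover $S_1$ and $S_2$ uniquely; surjectivity holds because any $T \subseteq \mathbb{N}$ is the image of the pair $(\{x \mid 2x \in T\}, \{x \mid 2x+1 \in T\})$, using that every natural number is uniquely either even or odd. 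So $i$ encodes a pair of sets by interleaving one into the even positions and the other into the odd positions of $\mathbb{N}$.

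Next I would check that $i$ respects each of the five operations in the signature $\{\sqcap, \sqcup, c, \mathbf{0}, \mathbf{1}\}$. The key structural fact I rely on throughout is that $\mathbb{N}$ partitions into the evens $E = \{2x \mid x \in \mathbb{N}\}$ and the odds $O = \{2x+1 \mid x \in \mathbb{N}\}$, and the maps $x \mapsto 2x$ and $x \mapsto 2x+1$ are bijections $\mathbb{N} \to E$ and $\mathbb{N} \to O$ respectively. For intersection, I would argue that $i(S_1 \cap T_1, S_2 \cap T_2) = i(S_1,S_2) \cap i(T_1,T_2)$: an even number $2x$ lies in the right-hand side iff $x \in S_1$ and $x \in T_1$, matching the left; and symmetrically for odd numbers, since the even and odd contributions never interfere. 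The identical disjoint-support reasoning handles union. For complementation, $i(\mathbb{N} \setminus S_1, \mathbb{N} \setminus S_2) = \mathbb{N} \setminus i(S_1, S_2)$ because an element is in the complement on the left exactly when it is even-and-not-from-$S_1$ or odd-and-not-from-$S_2$, which is precisely the complement of the interleaving. Finally the constants: $i(\emptyset, \emptyset) = \emptyset = \mathbf{0}^{\mathfrak{S}}$, and $i(\mathbb{N}, \mathbb{N}) = E \cup O = \mathbb{N} = \mathbf{1}^{\mathfrak{S}}$, since $E$ and $O$ cover all of $\mathbb{N}$.

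Since $i$ is a bijection preserving all operations of the signature, and its inverse therefore also preserves them (a bijective homomorphism between algebras of the same signature is an isomorphism), this establishes that $i$ is an isomorphism between $\mathfrak{S}^2$ and $\mathfrak{S}$. I do not anticipate a genuine obstacle here: the statement is essentially a routine verification, and the only thing requiring a moment of care is remembering that the product structure $\mathfrak{S}^2$ has coordinatewise operations, so that each homomorphism condition splits cleanly into an independent claim about the even coordinate and the odd coordinate. The whole argument rests on the single observation that the two injections $x \mapsto 2x$ and $x \mapsto 2x+1$ have disjoint images covering $\mathbb{N}$, which decouples the two factors completely.
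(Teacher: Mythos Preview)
Your proposal is correct and follows essentially the same approach as the paper: both exhibit the explicit inverse $S \mapsto (\{x \mid 2x \in S\}, \{x \mid 2x+1 \in S\})$ to establish bijectivity, and then verify preservation of $\sqcap$, $\sqcup$, $c$, $\mathbf{0}$, $\mathbf{1}$. The paper simply declares the latter verification ``straightforward'' while you actually carry it out, but the underlying idea---that the even/odd partition of $\mathbb{N}$ decouples the two factors---is identical.
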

\begin{proof}
The mapping $i$ can be inverted by the mapping that sends
$S \subseteq \mathbb{N}$ to $\big(\{x \; | \; 2x \in S\}, \{x \; | \; 2x + 1 \in S\}\big)$. It is straightforward 
to verify that $i$ 
strongly preserves ${\bf 0}$, ${\bf 1}$, $c$, $\sqcup$, $\sqcap$.
\ignore{
\begin{itemize}
\item Clearly, $i(x, y) = \emptyset$ if and only if $x=y=\emptyset$.
\item Similarly, since the natural numbers are partitioned by the even and odd numbers, $i(x,y) = \mathbb{N}$ if and only if $x=y={\mathbb N}$.
\item Let $S_1$ and $S_2$ be subsets of $\mathbb{N}$. To verify that 
$i$ preserves $c$ we have to show that $i(c(S_1, S_2))$, which is by definition
equal to $i(\overline{S_1}, \overline{S_2})$, equals $c(i(S_1, S_2))$. Suppose that $x = 2x'$. Then:
\begin{eqnarray*}
x \in i(\overline{S_1}, \overline{S_2}) & \Leftrightarrow & x' \in \overline{S_1} \\
& \Leftrightarrow & x' \notin S_1 \\
& \Leftrightarrow & 2x' \notin i(S_1, S_2) \\
& \Leftrightarrow & x \in \overline{i(S_1, S_2)}
\end{eqnarray*}
The argument for $x = 2x' + 1$ is analogous. Thus, $i$ preserves $c$. Since $i$ is injective, it even strongly preserves $c$.

\item Let $(S_1, S_2)$ and $(T_1, T_2)$ be from $({\cal P}(\mathbb{N})^2$. We have to show that $i((S_1, S_2) \sqcup (T_1, T_2))$, which is by
definition equal to $i(S_1 \sqcup T_1, S_2 \sqcup T_2)$, equals $i(S_1, S_2) \sqcup i(T_1, T_2)$. With $x = 2x'$ as before:
\begin{align*}
x \in i(S_1 \sqcup T_1, S_2 \sqcup T_2) & \Leftrightarrow x' \in (S_1 \sqcup T_1) \\
& \Leftrightarrow 2x' \in i(S_1, S_2) \sqcup i(T_1, T_2) \\
\end{align*}
The argument for $x = 2x' + 1$ is again analogous.
Thus, $i$ preserves $\sqcup$. 
Since $i$ is injective, it even strongly preserves $\sqcup$.
\end{itemize}
The verification for $\sqcap$ is similar to that for $\sqcup$.}
\end{proof}

We write $x \sqsubseteq y$ as an abbreviation 
for $x \sqcap y = x$.
\begin{prop}\label{prop:nat-core}
The function $e$ has the following properties.
\begin{itemize} 
\item $e$ is injective, 
\item $e$ strongly preserves ${\bf 1}$, ${\bf 0}$, and $\sqcap$,  and
\item 
for $x,y,z \in {\cal P}({\mathbb N})$ such that $x \sqcup y = z$, not $x \sqsubseteq y$, and not $y \sqsubseteq x$, we have that $e(x) \sqcup e(y) \sqsubsetneqq e(z)$. 
\end{itemize}
\end{prop}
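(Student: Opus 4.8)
The plan is to reduce every item to a single membership criterion and then read off each claim as a short set-theoretic computation. The criterion is: for every $S \subseteq {\mathbb N}$ and every $n \in {\mathbb N}$,
$n \in e(S)$ holds if and only if $G(n) \subseteq S$.
Indeed, $G(n)$ is always a finite non-empty subset of ${\mathbb N}$, so $n = G^{-1}(T)$ for some $T \in F(S)$ can hold only for $T = G(n)$, and this requires exactly that $G(n)$ be a (finite, non-empty) subset of $S$, i.e.\ $G(n) \subseteq S$. I would also record the immediate consequence that $e$ is \emph{monotone}: if $S \subseteq S'$ then $G(n) \subseteq S$ forces $G(n) \subseteq S'$, so $e(S) \subseteq e(S')$.

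For injectivity I would observe that $S$ is recoverable from $e(S)$ via $S = \{a \in {\mathbb N} \mid G^{-1}(\{a\}) \in e(S)\}$: applying the criterion to the finite non-empty set $\{a\}$ gives $G^{-1}(\{a\}) \in e(S) \Leftrightarrow \{a\} \subseteq S \Leftrightarrow a \in S$. Hence $e(S)$ determines $S$, so $e$ is injective.

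For the second item I would first check the plain preservation (the ``if'' direction) for each operation and then upgrade to strong preservation using the Remark immediately preceding the proposition. Concretely: $F(\emptyset) = \emptyset$ gives $e(\emptyset) = \emptyset$, so $e$ preserves ${\bf 0}$; since $G$ is a bijection onto $F({\mathbb N})$, the map $G^{-1}$ sends $F({\mathbb N})$ bijectively onto ${\mathbb N}$, so $e({\mathbb N}) = {\mathbb N}$ and $e$ preserves ${\bf 1}$; and the criterion yields $n \in e(S_1 \cap S_2) \Leftrightarrow G(n) \subseteq S_1 \cap S_2 \Leftrightarrow (G(n)\subseteq S_1 \text{ and } G(n) \subseteq S_2) \Leftrightarrow n \in e(S_1)\cap e(S_2)$, so $e(S_1 \cap S_2)=e(S_1)\cap e(S_2)$ and $e$ preserves $\sqcap$. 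As ${\bf 0}$, ${\bf 1}$, and $\sqcap$ are (graphs of) functions and $e$ is injective, the Remark promotes each of these to \emph{strong} preservation, which finishes the item.

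For the last item, monotonicity applied to $x \subseteq z$ and $y \subseteq z$ (both holding since $x \sqcup y = z$) gives $e(x), e(y) \subseteq e(z)$, hence $e(x) \sqcup e(y) \subseteq e(z)$; it remains to exhibit a witness in $e(z) \setminus (e(x) \sqcup e(y))$. Here the two non-containment hypotheses are used in an essential way: choosing $a \in x \setminus y$ (from ``not $x \sqsubseteq y$'') and $b \in y \setminus x$ (from ``not $y \sqsubseteq x$''), the finite non-empty set $T = \{a,b\}$ satisfies $T \subseteq x \cup y = z$, while $b \notin x$ gives $T \not\subseteq x$ and $a \notin y$ gives $T \not\subseteq y$. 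By the criterion $G^{-1}(T) \in e(z)$ but $G^{-1}(T) \notin e(x)$ and $G^{-1}(T) \notin e(y)$, so $e(x) \sqcup e(y) \sqsubsetneqq e(z)$. I do not anticipate a real obstacle; the only point needing care is exactly this strictness, since every \emph{singleton} subset of $z$ already lies in one of $x$, $y$ and so contributes to $e(x)\sqcup e(y)$ --- which is precisely why the witness must be a ``mixed'' two-element set straddling $x$ and $y$, and why both hypotheses are invoked.
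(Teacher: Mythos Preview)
Your proposal is correct and follows essentially the same approach as the paper: both arguments verify preservation of ${\bf 0}$, ${\bf 1}$, $\sqcap$ directly via the characterization $n \in e(S) \Leftrightarrow G(n) \in F(S)$ (your ``criterion''), upgrade to strong preservation using injectivity, and witness the strict inclusion with the mixed two-element set $\{a,b\}$ where $a \in x\setminus y$ and $b \in y\setminus x$. The only cosmetic differences are that the paper phrases injectivity as ``same finite subsets implies equal'' rather than recovering $S$ via singletons, and derives monotonicity from preservation of $\sqcap$ rather than stating it separately.
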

\begin{proof}
We verify the properties one by one.
Since $G$ is bijective, $e(x)=e(y)$ if and only if $x$
and $y$ have the same finite subsets. This is the case if and only if 
$x=y$, and hence $e$ is injective. 
Thus, to prove that $e$ \emph{strongly} preserves ${\bf 1}$, ${\bf 0}$, and $\sqcap$, it suffices to check that $e$
preserves ${\bf 1}$, ${\bf 0}$, and $\sqcap$.

Since $G$ is bijective, we have that 
$G({\mathbb N})$ equals the set of all finite subsets of $\mathbb N$,
and hence $e({\mathbb N})={\mathbb N}$, which shows that $e$ preserves ${\bf 1}$.
We also compute $e(\emptyset) = G^{-1}(F(\emptyset))=G^{-1}(\emptyset) = \emptyset$.

Next, we verify that for all $x,y \in {\cal P}({\mathbb N})$
we have $e(x) \sqcap e(y) = e(x \sqcap y)$.
Let $a \in {\mathbb N}$ be arbitrary. We have $a \in e(x) \sqcap e(y)$ if and only if $G(a) \in F(x) \cap F(y)$. By definition of $F$ and since $G(a)$ is a finite subset of ${\mathbb N}$, this is the case if and only if $G(a) \in F(x \sqcap y)$. This is the case
if and only if $a \in e(x \sqcap y)$, which concludes the proof that $e$ preserves $\sqcap$. 

We verify that if $x \sqcup y = z$, not $x \sqsubseteq y$, and not $y \sqsubseteq x$, then $e(x) \sqcup e(y) \sqsubsetneqq e(z)$. 
First observe that for all $u,v \subseteq {\mathbb N}$ with 
$u \sqsubseteq v$ we have $e(u) \sqsubseteq e(v)$ since $e$ preserves $\sqcap$. This implies 
that $e(x) \sqcup e(y) \sqsubseteq e(z)$.
Since $x \not\sqsubseteq y$ and $y \not\sqsubseteq x$, there are $a,b$ such that $a \in x$, $a \notin y$, $b \in y$, $b \notin x$.
Then we have that $\{a,b\} \in F(z)$, 
but $\{a,b\} \notin F(x) \cup F(y)$. Hence, $G^{-1}(\{a,b\}) \in e(z)$,
but $G^{-1}(\{a,b\}) \notin e(x) \sqcup e(y)$. This shows
that $e(z) \neq e(x) \sqcup e(y)$.  
\end{proof}

Note that in particular $e$ preserves $\sqsubset$, $\sqsubseteq$, and $||$. Moreover, $e(c(x)) \sqsubseteq c(e(x))$: this follows from preservation of $||$, since $x || c(x)$, and therefore $e(x) || e(c(x))$, which is equivalent to the inclusion above. 
Both $e$ and $i$ strongly preserve $\sqcap$, ${\bf 0}$, and ${\bf 1}$, and therefore also $ei$ strongly preserves $\sqcap$, ${\bf 0}$, and ${\bf 1}$. 

\section{Horn-Horn Set Constraints}
\label{sect:horn-horn}
A large and important subclass of ${\mathcal EI}$ set constraints
is the class of Horn-Horn set constraints.

\begin{defn}\label{def:horn-horn}
A quantifier-free first-order formula is called \emph{Horn-Horn}
if
\begin{enumerate}
\item every outer clause is \emph{outer Horn}, i.e., contains at most one positive outer literal, and
\item every inner clause of positive outer literals is 
\emph{inner Horn}, i.e., contains at most one positive inner literal.
\end{enumerate}
A relation $R \subseteq {\cal P}({\mathbb N})^k$
is called
\begin{itemize}
\item \emph{outer Horn} if it can be defined over 
$\mathfrak S$ by a conjunction of outer Horn clauses;
\item \emph{inner Horn} if it can be defined 
over $\mathfrak S$ by a formula of the form
$(c_1 \sqcap \cdots \sqcap c_k) = {\bf 1}$ where
each $c_i$ is inner Horn;
\item \emph{Horn-Horn} if it can be defined by a Horn-Horn formula over $\mathfrak S$.
\end{itemize} 
\end{defn}

The following is a direct consequence of the fact that 
isomorphisms between $\Gamma^k$ and $\Gamma$
preserve Horn formulas over $\Gamma$; since the simple proof is instructive for what follows, we give it here for the special case that is relevant here. 
\begin{fact}\label{fact:outer-horn}
Outer Horn relations are preserved by $i$. 
\end{fact}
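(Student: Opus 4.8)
The plan is to establish the general principle that an isomorphism $\mathfrak S^2 \to \mathfrak S$ preserves every relation defined by an outer Horn formula, specialised to the operation $i$ from Fact~\ref{fact:nat-iso}. First I would reduce to a single clause: a conjunction of outer Horn clauses defines the intersection of the relations defined by the individual clauses, and preservation by $i$ is inherited by intersections of relations of equal arity (if $a^1,a^2$ lie in each conjunct's relation, so does the tuple $b$ built from them componentwise). Hence it suffices to prove that $i$ preserves the relation $R \subseteq (\mathcal P(\mathbb N))^l$ defined by a single outer Horn clause $C$. Fix $a^1, a^2 \in R$ and set $b_j := i(a^1_j, a^2_j)$ and $b := (b_1,\dots,b_l)$; the goal is $b \in R$.

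The two facts I would use are that $i$ commutes with all Boolean terms and that $i$ strongly preserves $\mathbf 1$. For a term $t(x_1,\dots,x_l)$ over $\{\sqcap,\sqcup,c,\mathbf 0,\mathbf 1\}$, the homomorphism property of $i$ (Fact~\ref{fact:nat-iso}) gives
$$ t(b_1,\dots,b_l) \;=\; t\big(i(a^1_1,a^2_1),\dots,i(a^1_l,a^2_l)\big) \;=\; i\big(t(a^1_1,\dots,a^1_l),\, t(a^2_1,\dots,a^2_l)\big). $$
Writing $t(a^1)$ and $t(a^2)$ for the two arguments on the right, and using that $i(u,v)=\mathbf 1$ if and only if $u=\mathbf 1$ and $v=\mathbf 1$, I obtain the key translation of literals: the positive literal $t = \mathbf 1$ holds at $b$ iff it holds at \emph{both} $a^1$ and $a^2$, while the negative literal $t \neq \mathbf 1$ holds at $b$ iff it holds at $a^1$ \emph{or} at $a^2$.

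It then remains to run the case analysis on the clause $C$, and this is where the Horn restriction (at most one positive outer literal) does the work. If $a^1$ satisfies some negative literal $t \neq \mathbf 1$ of $C$, then by the translation above $b$ satisfies that same literal, so $b \in R$; the same conclusion holds if $a^2$ satisfies a negative literal of $C$. The only remaining case is that neither $a^1$ nor $a^2$ satisfies any negative literal of $C$; since both nevertheless satisfy $C$, each must satisfy the unique positive literal $t = \mathbf 1$, so that $t(a^1)=t(a^2)=\mathbf 1$ and hence $t(b)=i(\mathbf 1,\mathbf 1)=\mathbf 1$, giving $b \in R$. I expect this last case to be the crux: it is exactly the point at which a second positive literal would break the argument, since $a^1$ could satisfy one positive literal and $a^2$ the other, leaving $b$ satisfying neither (because $i(u,v)=\mathbf 1$ demands $\mathbf 1$ in both coordinates). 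Everything else is the routine bookkeeping of propagating literal satisfaction through the two displayed identities.
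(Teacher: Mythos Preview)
Your proposal is correct and follows essentially the same approach as the paper: both arguments reduce to a single outer Horn clause, use that $i$ is an isomorphism $\mathfrak S^2\to\mathfrak S$ so that $t(b)=i(t(a^1),t(a^2))$ and hence $t(b)=\mathbf 1$ iff $t(a^1)=t(a^2)=\mathbf 1$, and then exploit the Horn restriction to conclude. The only cosmetic difference is that the paper runs the case analysis contrapositively (assume $b$ fails all negative literals, deduce both $a^1,a^2$ do, hence both satisfy the positive literal), whereas you argue forward; the content is identical.
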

\begin{proof}
Let $\phi$ be a conjunction of outer Horn clauses with variables $V$.
Let $\{t_0 = {\bf 1},t_1 \neq {\bf 1}, \dots, t_k \neq {\bf 1}\}$ 
be an outer clause of $\phi$.
Let $u,v \colon V \rightarrow {\cal P}({\mathbb N})$ 
be two assignments that
satisfy this clause. Let $w \colon V\rightarrow\cal{P}(\mathbb{N})$
be given by $x\mapsto i(u(x),v(x))$. Suppose that $w$ satisfies $t_j={\bf 1}$ for all $1\leq j\leq k$. 
Since $i$ is injective we must have that $t_j = \bf 1$ for both $u$ and $v$ for $1 \leq j \leq k$, and therefore neither assignment satisfies the negative literals.
Hence, $u$ and $v$ must satisfy $t_0 = {\bf 1}$.
Since $i$ is an isomorphism between ${\mathfrak S}^2$ and 
${\mathfrak S}$, it preserves in particular $t_0={\bf 1}$,  and hence $w$ also satisfies $t_0={\bf 1}$.
\end{proof}

\ignore{ Proof by contradiction
\begin{proof}
Let $\phi$ be a conjunction of outer Horn clauses with variables $V$.
Let $\{t_0 = {\bf 1},t_1 \neq {\bf 1}, \dots, t_k \neq {\bf 1}\}$ 
be an outer clause of $\phi$.
Let $u,v \colon V \rightarrow {\cal P}({\mathbb N})$ 
be two assignments that satisfy this clause.
Suppose for contradiction that $w \colon V \rightarrow {\cal P}({\mathbb N})$ defined by $x \mapsto i(u(x),v(x))$ does not satisfy this clause.
In particular, $w$ satisfies $t_j={\bf 1}$ 
for all $1 \leq j \leq k$.
Since $i$ is injective we must have that $t_j = \bf 1$ for both $u$ and $v$ for $1 \leq j \leq k$, and therefore neither assignment satisfies the negative literals.
Hence, $u$ and $v$ must satisfy $t_0 = {\bf 1}$.
Since $i$ is an isomorphism between ${\mathfrak S}^2$ and 
${\mathfrak S}$, it preserves in particular $t_0={\bf 1}$, and hence
$w$ also satisfies $t_0={\bf 1}$, a contradiction.
\end{proof}
}

\begin{prop}\label{prop:inner-horn}
Inner Horn relations are strongly preserved by $e$. 
\end{prop}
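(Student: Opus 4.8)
The plan is to show that $e$, applied coordinatewise, neither creates nor destroys solutions of the defining formula. Fix an inner Horn relation $R$ of arity $l$, defined over $\mathfrak S$ by a formula $\phi$ of the shape $(c_1 \sqcap \cdots \sqcap c_k) = {\bf 1}$ with each inner clause $c_i$ inner Horn, and let $V$ be its variable set. Since $e$ is unary, strong preservation of $R$ amounts to the statement that, for every assignment $s \colon V \to {\cal P}({\mathbb N})$, the formula $\phi$ holds under $s$ if and only if it holds under $e \circ s$ (the assignment $x \mapsto e(s(x))$). As $\phi$ asserts that each inner clause evaluates to ${\mathbb N}$, and this is a conjunction, it suffices to prove the equivalence $c^s = {\mathbb N} \iff c^{e \circ s} = {\mathbb N}$ for a single inner Horn clause $c$.

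The key step is to rewrite the condition $c = {\bf 1}$ as a single inclusion. Writing $c = \bar x_1 \sqcup \cdots \sqcup \bar x_m \sqcup p$, where $p$ is the unique positive literal if $c$ has one, and setting $P := x_1 \sqcap \cdots \sqcap x_m$ and $D := p$ (or $D := {\bf 0}$ if $c$ is all-negative), the clause $c = {\bf 1}$ is, by De Morgan, equivalent to $P \sqsubseteq D$. I would then isolate the auxiliary fact that $e$ \emph{strongly} preserves $\sqsubseteq$: since $a \sqsubseteq b$ means $a \sqcap b = a$, using that $e$ strongly preserves $\sqcap$ (Proposition~\ref{prop:nat-core}) and is injective we get $e(a) \sqsubseteq e(b) \iff e(a) \sqcap e(b) = e(a) \iff e(a \sqcap b) = e(a) \iff a \sqcap b = a \iff a \sqsubseteq b$.

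To finish, I would apply this to $P^s := s(x_1) \sqcap \cdots \sqcap s(x_m)$ and to $D^s$: strong preservation of $\sqcap$ gives $e(P^s) = e(s(x_1)) \sqcap \cdots \sqcap e(s(x_m))$, and since $e({\bf 0}) = {\bf 0}$ the value of $D$ under $e \circ s$ equals $e(D^s)$; hence $c^{e \circ s} = {\mathbb N}$ is equivalent to $e(P^s) \sqsubseteq e(D^s)$, which by the auxiliary fact is equivalent to $P^s \sqsubseteq D^s$, i.e.\ to $c^s = {\mathbb N}$. The conceptual point worth flagging is that the whole argument uses only the behaviour of $e$ with respect to $\sqcap$, ${\bf 0}$, ${\bf 1}$ and injectivity, and never the third, ``lossy'' property of $e$ in Proposition~\ref{prop:nat-core} concerning $\sqcup$. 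This is no accident, and is really the heart of the matter: it is precisely the $\sqcup$-distortion that makes $e$ violate non-Horn inner constraints, and the inner Horn restriction --- at most one positive literal per clause --- is exactly the syntactic condition guaranteeing that, once a clause is set to ${\bf 1}$, only $\sqcap$ and $\sqsubseteq$ survive, both of which $e$ respects. I expect the only care needed to be the bookkeeping of the all-negative and zero-negative-literal edge cases, handled uniformly by the conventions $P = {\bf 1}$ for an empty intersection and $D = {\bf 0}$ for a missing positive literal.
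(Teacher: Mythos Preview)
Your proof is correct and is essentially the same argument as the paper's, just spelled out in more detail: both proofs rewrite the inner Horn clause $c = {\bf 1}$ as an inclusion expressible purely in terms of $\sqcap$ (the paper writes it as $x \sqcap (\bigsqcap_j y_j) = \bigsqcap_j y_j$, you as $P \sqsubseteq D$), and then invoke that $e$ strongly preserves $\sqcap$. Your explicit auxiliary fact that $e$ strongly preserves $\sqsubseteq$ and your handling of the edge cases are exactly what the paper's one-line ``which is strongly preserved by $e$ since $e$ strongly preserves $\sqcap$'' leaves implicit.
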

\begin{proof}
Observe that $x \sqcup ( \bigsqcup_j \overline{y}_j) = {\bf 1}$ is equivalent to $x \sqcap ( \bigsqcap_j y_j) = \bigsqcap_j y_j$, which is
strongly preserved by $e$ since $e$ strongly preserves $\sqcap$. This clearly implies
the statement. 
\end{proof}

Note that Fact~\ref{fact:outer-horn} and Proposition~\ref{prop:inner-horn} imply
that $ei$ strongly preserves inner Horn relations.
We later also need the following. 


\begin{lem}\label{lem:e}
Let $x_1,\dots,x_k,y_1,\dots,y_l \subseteq {\mathbb N}$, where $k\geq1$. Then the
following are equivalent.
\begin{enumerate}
\item $e(x_1) \sqcup \dots \sqcup e(x_k) \sqcup \overline{e(y_1)} \sqcup \dots \sqcup \overline{e(y_l)} = {\bf 1}$.
\item there exists an $i \leq k$ such that $x_i \sqcup ( \bigsqcup_j \overline{y}_j) = {\bf 1}$.
\item there exists an $i \leq k$ such that $e(x_i) \sqcup ( \bigsqcup_j \overline{e(y_j)}) = {\bf 1}$.
\end{enumerate}
For $k=0$, we have that $\bigsqcup_j \overline{y}_{j \leq l} = {\bf 1}$ if and only if
$\bigsqcup_{j \leq l} \overline{e(y_j)} = {\bf 1}$.
\end{lem}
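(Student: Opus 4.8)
The plan is to prove the chain of equivalences by establishing the implications $(2) \Rightarrow (3) \Rightarrow (1) \Rightarrow (2)$, which lets me reuse the structural properties of $e$ from Proposition~\ref{prop:nat-core}, together with the key observation built into Proposition~\ref{prop:inner-horn} that an expression of the form $x \sqcup (\bigsqcup_j \overline{y}_j) = {\bf 1}$ is nothing but an inner Horn clause, namely $x \sqcap (\bigsqcap_j y_j) = \bigsqcap_j y_j$, which $e$ strongly preserves.

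First I would observe that the implication $(2) \Rightarrow (3)$ is immediate: if $x_i \sqcup (\bigsqcup_j \overline{y}_j) = {\bf 1}$ holds for some $i$, then rewriting this as the inner Horn equation $x_i \sqcap (\bigsqcap_j y_j) = \bigsqcap_j y_j$ and applying the strong preservation of inner Horn relations by $e$ (Proposition~\ref{prop:inner-horn}) yields $e(x_i) \sqcap (\bigsqcap_j e(y_j)) = \bigsqcap_j e(y_j)$, which is exactly $e(x_i) \sqcup (\bigsqcup_j \overline{e(y_j)}) = {\bf 1}$. For $(3) \Rightarrow (1)$, I would simply note that the single term $e(x_i) \sqcup (\bigsqcup_j \overline{e(y_j)})$ is contained in the larger union $e(x_1) \sqcup \dots \sqcup e(x_k) \sqcup \overline{e(y_1)} \sqcup \dots \sqcup \overline{e(y_l)}$, so if the former equals ${\bf 1}$ then so does the latter.

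The main obstacle is the remaining implication $(1) \Rightarrow (2)$, where the content of the lemma really lies: I must convert a statement about a full disjunction over all the $e(x_i)$ back into a statement about a \emph{single} $x_i$ at the level of the original sets. I would argue contrapositively. Suppose that for every $i \leq k$ we have $x_i \sqcup (\bigsqcup_j \overline{y}_j) \neq {\bf 1}$; equivalently, for each $i$ there is a witness $a_i \in \mathbb N$ with $a_i \notin x_i$ and $a_i \in y_j$ for all $j \leq l$. The goal is to exhibit a witness showing $e(x_1) \sqcup \dots \sqcup e(x_k) \sqcup \overline{e(y_1)} \sqcup \dots \sqcup \overline{e(y_l)} \neq {\bf 1}$, i.e.\ a point $n \in \mathbb N$ lying in none of the $e(x_i)$ but in all of the $e(y_j)$. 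Unpacking the definition of $e$, such a point corresponds via $G$ to a finite non-empty subset $T \subseteq \mathbb N$ with $T \notin F(x_i)$ for every $i$ (so $T \not\subseteq x_i$ for all $i$) and $T \in F(y_j)$ for every $j$ (so $T \subseteq y_j$ for all $j$). The natural candidate is $T = \{a_1, \dots, a_k\}$: this set is finite and non-empty since $k \geq 1$, it is contained in every $y_j$ because each $a_i \in y_j$, and for each fixed $i$ it satisfies $T \not\subseteq x_i$ because $a_i \in T$ but $a_i \notin x_i$. Setting $n = G^{-1}(T)$ then gives the required witness, completing the contrapositive.

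Finally, I would dispatch the $k=0$ case separately, since there the equivalence reduces to the claim that $\bigsqcup_{j \leq l} \overline{y}_j = {\bf 1}$ iff $\bigsqcup_{j \leq l} \overline{e(y_j)} = {\bf 1}$. This follows by the same mechanism: $\bigsqcup_j \overline{y}_j = {\bf 1}$ is equivalent to $\bigsqcap_j y_j = {\bf 0}$, which by the strong preservation of ${\bf 0}$ and $\sqcap$ by $e$ (Proposition~\ref{prop:nat-core}) is equivalent to $\bigsqcap_j e(y_j) = {\bf 0}$, i.e.\ $\bigsqcup_j \overline{e(y_j)} = {\bf 1}$. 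I expect no real difficulty here beyond keeping track of the degenerate empty-intersection conventions already fixed in the text, so the whole argument rests on choosing the finite set $\{a_1,\dots,a_k\}$ correctly in the $(1)\Rightarrow(2)$ step.
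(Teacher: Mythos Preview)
Your proposal is correct and follows essentially the same route as the paper: the chain $(2)\Rightarrow(3)\Rightarrow(1)$ via Proposition~\ref{prop:inner-horn} and trivial inclusion, and the contrapositive of $(1)\Rightarrow(2)$ using the witness $G^{-1}(\{a_1,\dots,a_k\})$. The $k=0$ case you handle via strong preservation of ${\bf 0}$ and $\sqcap$ is exactly what underlies the paper's appeal to Proposition~\ref{prop:inner-horn} there.
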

\begin{proof}
For the implication from $(1)$ to $(2)$, 
suppose that there is for every $i \leq k$ an $a_i \in {\mathbb N}$ such that $a_i \notin X_i := x_i \sqcup ( \bigsqcup_j \overline{y}_j)$. Let $c$ be $G^{-1}\big(\{a_1,a_2,\dots,a_k\}\big)$. 
Then for each $i \leq k$, we have that $c \notin 
e(x_i) \sqcup \bigsqcup_{j \leq l} \overline{e(y_j)}$.
To see this, first observe that $a_i \in \bigsqcap_{j\leq l} y_j \sqcap \overline{x}_i$. Therefore, $\{a_1,\dots,a_k\} \in \bigsqcap_{j \leq l} F(y_j) \sqcap \overline{F(x_i)}$ for all $i \leq k$.
We conclude that 
$c \notin e(x_1) \sqcup \dots \sqcup e(x_k) \sqcup \overline{e(y_1)} \sqcup \dots \sqcup \overline{e(y_l)}$. 

The implication $(2) \Rightarrow (3)$ follows directly from Proposition~\ref{prop:inner-horn}.
The implication $(3) \Rightarrow (1)$ is trivial. 
The second statement is a direct consequence of 
Proposition~\ref{prop:inner-horn}.
\end{proof}

\begin{prop}\label{prop:syntax}
Every Horn-Horn relation is preserved by $e$ and $i$; in particular, 
it is from $\mathcal EI$.
\end{prop}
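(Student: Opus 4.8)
The plan is to prove the two preservation statements separately and then combine them. First observe that preservation by $i$ and by $e$ together yield preservation by $ei$: given tuples $a^1,a^2$ in a relation $R$, preservation by $i$ places the tuple $\big(i(a^1_m,a^2_m)\big)_m$ into $R$, and then preservation by $e$ places $\big(e(i(a^1_m,a^2_m))\big)_m = \big(ei(a^1_m,a^2_m)\big)_m$ into $R$. Since a Horn-Horn relation is quantifier-free definable over $\mathfrak S$, membership in $\mathcal{EI}$ follows once both preservations are established. Preservation by $i$ is immediate: a Horn-Horn relation is in particular defined by a conjunction of outer Horn clauses, hence is outer Horn, so Fact~\ref{fact:outer-horn} applies.

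The substance is preservation by $e$. Fix a Horn-Horn formula $\phi$ defining $R$ and an assignment $u$ satisfying $\phi$; I must show that the assignment $e \circ u$ (given by $x \mapsto e(u(x))$) also satisfies $\phi$, where I write $t[u]$ for the value of a term $t$ under $u$. It suffices to treat each outer clause $C$ separately, and for each $C$ I distinguish according to which literal $u$ uses to satisfy $C$. The two literal types require two different tools, corresponding exactly to the two Horn conditions.

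For a positive outer literal $t_0 = {\bf 1}$: condition~(2) of the Horn-Horn definition guarantees that every inner clause of $t_0$ is inner Horn, so ``$t_0 = {\bf 1}$'' defines an inner Horn relation and is preserved by $e$ by Proposition~\ref{prop:inner-horn}; thus if $u$ satisfies this literal so does $e \circ u$. For a negative outer literal $t \neq {\bf 1}$ with $t$ an arbitrary term, I prove the general fact that $e$ preserves ``$t \neq {\bf 1}$'', i.e.\ $t[u] \neq {\bf 1}$ implies $t[e \circ u] \neq {\bf 1}$. Writing $t$ in inner CNF, $t[u] \neq {\bf 1}$ means that some inner clause $c = \bigsqcup_{i \leq r} x_i \sqcup \bigsqcup_{j \leq s} \bar{y}_j$ satisfies $c[u] \neq {\bf 1}$, and it is enough to show $c[e \circ u] \neq {\bf 1}$. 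Since each single-positive sub-union $u(x_i) \sqcup \bigsqcup_{j} \overline{u(y_j)}$ is contained in $c[u] \neq {\bf 1}$, each such sub-union is $\neq {\bf 1}$; hence condition~(2) of Lemma~\ref{lem:e} fails, so by the equivalence $(1)\Leftrightarrow(2)$ condition~(1) fails, which is exactly $c[e \circ u] \neq {\bf 1}$ (the case $r=0$ being the final clause of that lemma). Assembling: if $u$ satisfies $C$ through its positive literal, so does $e \circ u$; if $u$ satisfies $C$ through a negative literal $t \neq {\bf 1}$, then $t[e\circ u] \neq {\bf 1}$ and again $e \circ u$ satisfies $C$. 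As this holds for every outer clause, $e \circ u$ satisfies $\phi$.

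The main obstacle is the negative outer literals: because $e$ preserves neither $\sqcup$ nor $c$, non-fullness of an arbitrary term is not obviously inherited, and — crucially — the inner clauses of such a term need not be inner Horn, so Proposition~\ref{prop:inner-horn} does not apply to them. This is precisely the gap bridged by Lemma~\ref{lem:e}, whose equivalence handles inner clauses with arbitrarily many positive literals; the key simple step that lets the lemma be invoked is the reduction of ``the whole union is not full'' to ``every single-positive sub-union is not full''.
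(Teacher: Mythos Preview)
Your proof is correct and follows essentially the same approach as the paper: preservation by $i$ via Fact~\ref{fact:outer-horn}, and preservation by $e$ clause-by-clause, handling the positive outer literal via the inner-Horn preservation (you cite Proposition~\ref{prop:inner-horn}, the paper cites Lemma~\ref{lem:e}, but these amount to the same thing here) and the negative outer literal via the equivalence $(1)\Leftrightarrow(2)$ of Lemma~\ref{lem:e}. Your presentation is in fact slightly more explicit than the paper's in making clear the reduction from a CNF term $t$ to a single inner clause $c$ (the paper silently treats $t$ as if it were a single clause in both cases), but the logical content is identical.
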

\begin{proof}
Suppose that $R$ has a Horn-Horn definition $\phi$ 
over $\mathfrak S$ with variables $V$.
Since $R$ is in particular outer Horn, 
it is preserved by $i$ by Fact~\ref{fact:outer-horn}.

Now we verify that $R$ is preserved by $e$.
Let $u \colon V \rightarrow {\cal P}({\mathbb N})$ be an assignment
that satisfies $\phi$. That is, $u$ satisfies 
at least one literal in each outer clause of $\phi$.
It suffices to show 
that the assignment $v \colon V \rightarrow {\cal P}({\mathbb N})$
defined by $x \mapsto e(u(x))$ satisfies the same outer literal.
Suppose first that the outer literal is 
positive; because $\phi$ is Horn-Horn, it is of the form
$x \sqcup \overline{y_1} \sqcup \dots \sqcup \overline{y_l} = {\bf 1}$ or of the form $\overline{y_1} \sqcup \dots \sqcup \overline{y_l} = {\bf 1}$, which is preserved by $e$ by Lemma~\ref{lem:e}. 


Now, suppose that the outer literal is negative, that is, of the form 
$x_1 \sqcup \dots \sqcup x_k \sqcup \overline{y_1} \sqcup \dots \sqcup \overline{y_l} \neq {\bf 1}$ for some $k \geq 0$. We will treat the case $k \geq 1$, the other case being similar.
Suppose for contradiction that 
$v(x_1) \sqcup \dots \sqcup v(x_k) \sqcup \overline{v(y_1)} \sqcup \dots \sqcup \overline{v(y_l)} = {\bf 1}$.
By Lemma~\ref{lem:e}, there exists an $i \leq k$ such that $u(x_i) \sqcup ( \bigsqcup_j u(\overline{y}_j)) = {\bf 1}$.
But then we have in particular that 
$u(x_1) \sqcup \dots \sqcup u(x_k) \sqcup u(\overline{y_1}) \sqcup \dots \sqcup u(\overline{y_l}) = {\bf 1}$, in contradiction
to the assumption that $u$ satisfies $\phi$. 
\end{proof}

\noindent {\bf Examples.}
\begin{enumerate}
\item The disjointness relation $||$ is Horn-Horn:
it has the definition $\bar x \sqcup \bar y = {\bf 1}$. 
\item The inequality relation $\neq$ is inner Horn:
it has the definition $(y \sqcup \bar x) \sqcap (x \sqcup \bar y) \neq {\bf 1}$.
\item Using the previous example, the relation $\{(x,y,u,v) \; | \; x \neq y \vee u = v\}$ 
can easily be seen to be Horn-Horn. 
\item The ternary 
relation $\{(x,y,z) \; | \; x \cap y \subseteq z\}$, which we have
encountered above, has the Horn-Horn definition  
$\bar x \sqcup \bar y \sqcup z = {\bf 1}$. 
\item
Examples of relations that are clearly \emph{not} Horn-Horn: $\{ (x,y) \; | \; x \sqcup y = {\bf 1} \}$ is violated by $e$, 
and $\{ (x,y,z) \; | \; (x=y) \vee (y=z)\}$ is violated by $i$.
\item The formula 
\begin{align*}
& (x \sqcap y \neq x) \\
\wedge \; & (x \sqcap y \neq y) \\ 
\wedge \; & (v={\bf 1} \; \vee \; u={\bf 1} \; \vee \;  x \sqcup y \neq {\bf 1})
\end{align*}
is clearly not Horn-Horn.
However, the relation defined by the formula is from $\cal EI$:
if $(x_1,y_1,u_1,u_2)$ und $(x_2,y_2,u_2,v_2)$ are from that relation, then neither $i(x_1,x_2) \sqsubseteq i(y_1,y_2)$ 
nor $i(y_1,y_2) \sqsubseteq i(x_1,x_2)$.
By Proposition~\ref{prop:nat-core}, 
$(ei(x_1,x_2),ei(y_1,y_1),ei(u_1,u_2),ei(v_1,v_2))$ 
satisfies the formula. 

There is no equivalent Horn-Horn 
formula, since the formula is not preserved by $i$.
\item
The formula $((x \sqcup y \neq {\bf 1}) \vee (u \sqcup v = {\bf 1}))\wedge (\bar x \sqcup y \neq {\bf 1}) \wedge (x \sqcup \bar y \neq {\bf 1})$ is not Horn-Horn. However, it is preserved by $e$ and by $i$:
the reason is that one of its clauses has the negative literal $x \sqcup y \neq {\bf 1}$, and the conjuncts
$\{\bar x \sqcup y \neq {\bf 1}\}$ and $\{x \sqcup \bar y \neq {\bf 1}\}$.
Therefore, for every tuple $t \in R$ the tuple $e(t)$ satisfies $x \sqcup y \neq {\bf 1}$ and is in $R$ as well. By Fact~\ref{fact:outer-horn}, $R$ is preserved by $i$. 

In this case, the authors suspect that there is no equivalent 
Horn-Horn formula. More generally, it is an open problem whether there exist formulas that are preserved by $e$ and $i$, but that are \emph{not} equivalent to a Horn-Horn formula.
\end{enumerate} 

\begin{prop}
Drakengren and Jonsson's set constraint language only contains
 Horn-Horn relations.
\end{prop}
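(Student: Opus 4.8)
The plan is to translate each of the two defining forms directly into the functional signature $\{\sqcap,\sqcup,c,{\bf 0},{\bf 1}\}$ and to observe that the result is a single outer clause satisfying both Horn-Horn conditions. Concretely, I would use the translations already recorded in the examples above: $x \neq y$ becomes the negative outer literal $(y \sqcup \bar x) \sqcap (x \sqcup \bar y) \neq {\bf 1}$, the inclusion $x_0 \subseteq y_0$ becomes the positive outer literal $\bar x_0 \sqcup y_0 = {\bf 1}$, and the disjointness $x_0 \disj y_0$ becomes the positive outer literal $\bar x_0 \sqcup \bar y_0 = {\bf 1}$.

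First I would treat a relation defined by a formula of the first form $x_1 \neq y_1 \vee \dots \vee x_k \neq y_k \vee x_0 \subseteq y_0$. Substituting the translations turns it into the single outer clause
$$\big\{\, (y_1 \sqcup \bar x_1)\sqcap(x_1 \sqcup \bar y_1) \neq {\bf 1},\ \dots,\ (y_k \sqcup \bar x_k)\sqcap(x_k \sqcup \bar y_k) \neq {\bf 1},\ \bar x_0 \sqcup y_0 = {\bf 1} \,\big\}.$$
This clause has exactly one positive outer literal, namely $\bar x_0 \sqcup y_0 = {\bf 1}$, so it is outer Horn; and this unique positive literal has the single inner clause $\{\bar x_0, y_0\}$, which contains exactly one positive inner literal and is hence inner Horn. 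Both conditions of Definition~\ref{def:horn-horn} therefore hold, so the formula, and with it the relation, is Horn-Horn. The second form $x_1 \neq y_1 \vee \dots \vee x_k \neq y_k \vee x_0 \disj y_0$ is handled identically, the only change being that the unique positive outer literal is now $\bar x_0 \sqcup \bar y_0 = {\bf 1}$, whose single inner clause $\{\bar x_0, \bar y_0\}$ has \emph{no} positive inner literal and is thus again inner Horn.

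The argument is essentially a routine translation, and the one place where care is needed is to notice that the internal shape of the inequality terms is irrelevant to the Horn-Horn requirements. Each conjunct $x_i \neq y_i$ contributes a \emph{negative} outer literal, whereas condition (2) of Definition~\ref{def:horn-horn} restricts only the inner clauses of \emph{positive} outer literals; thus the only inner structure that must be inspected is that of the single positive literal coming from $\subseteq$ or $\disj$, which is manifestly inner Horn. (If one prefers to read Drakengren and Jonsson's language as closed under conjunction, nothing changes: a conjunction of Horn-Horn clauses is still Horn-Horn, since both conditions are imposed clause by clause.)
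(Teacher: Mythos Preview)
Your proof is correct and follows essentially the same approach as the paper's: both arguments rely on the translations of $\subseteq$, $\disj$, and $\neq$ already given in the examples, and both reduce the claim to the observation that the inequality disjuncts contribute only negative outer literals, so that adding them to a Horn-Horn clause keeps it Horn-Horn. You spell out the details more explicitly, but the idea is identical.
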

\begin{proof}
For inclusion $x \subseteq y$, disjointness $||$, and inequality $\neq$
this has been discussed in the examples. Horn-Horn is preserved
under adding additional outer 
disequality literals to the outer clauses, so all relations considered
in Drakengren and Jonsson's language are Horn-Horn. 
\end{proof}

We prepare now some results that can be viewed
as a partial converse of Proposition~\ref{prop:syntax}.

\begin{defn}
A quantifier-free first-order formula $\phi$ (in the syntactic form described 
at the end of Section~\ref{sect:setcsps}) is called \emph{reduced}
if if every formula
obtained from $\phi$ by removing an outer literal
is not equivalent to $\phi$ over $\mathfrak S$.
\end{defn}

\begin{lem}
\label{lem:reduced}
Every quantifier-free formula is over $\mathfrak S$ equivalent to
a reduced formula.
\end{lem}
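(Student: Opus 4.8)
The plan is a straightforward minimization (equivalently, iterated-removal) argument, with the total number of outer literals serving as the termination measure. Recall that, by the conventions fixed at the end of Section~\ref{sect:setcsps}, $\phi$ is presented in outer CNF as a finite set of outer clauses, each a finite set of outer literals; write $\ell(\phi)$ for the total number of outer literals occurring in $\phi$, a non-negative integer. Deleting a single outer literal $L_{ij}$ from the clause $C_i$ in which it occurs produces another formula of the same syntactic shape, with $\ell$ lowered by exactly one.

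The core observation is just the contrapositive of the definition of reducedness: if $\phi$ is \emph{not} reduced, then there is an outer literal whose deletion yields a formula $\phi'$ that is still equivalent to $\phi$ over $\mathfrak S$, and this $\phi'$ satisfies $\ell(\phi') = \ell(\phi) - 1$. I would then iterate, setting $\phi_0 := \phi$ and, whenever $\phi_n$ fails to be reduced, letting $\phi_{n+1}$ be the strictly shorter equivalent formula obtained in this way. Since equivalence over $\mathfrak S$ is transitive, each $\phi_n$ remains equivalent to $\phi$; and since $\ell(\phi_0) > \ell(\phi_1) > \cdots$ is a strictly decreasing sequence in $\mathbb N$, it must terminate after finitely many steps at some $\phi_N$. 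By the stopping condition, $\phi_N$ is reduced, and it is equivalent to $\phi$, which is exactly the assertion of the lemma. The same conclusion can be phrased without iteration: among all formulas in the normal form of Section~\ref{sect:setcsps} that are equivalent to $\phi$ over $\mathfrak S$ --- a nonempty family, since it contains $\phi$ --- pick one with the least value of $\ell$; minimality immediately forces it to be reduced.

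The one point that warrants care, and the closest thing to an obstacle, is bookkeeping about the syntactic form under deletion. Removing a literal from a clause that contains at least two literals is harmless. The only delicate case is deleting the sole literal of a singleton clause, which empties that clause; adopting the natural convention (consistent with the inner-clause convention already in force) that an empty outer disjunction denotes falsity, such a deletion keeps $\phi'$ within the normal form of Section~\ref{sect:setcsps}, and it can be an equivalence-preserving step only when $\phi$ is unsatisfiable over $\mathfrak S$. In every case the termination measure still strictly decreases, so the argument goes through verbatim and no genuine difficulty arises.
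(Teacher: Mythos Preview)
Your argument is correct and follows exactly the same approach as the paper's proof, which simply says to bring the formula into the CNF form of Section~\ref{sect:setcsps} and then ``remove successively outer literals as long as this results in an equivalent formula.'' You have merely spelled out the termination measure and the edge case of emptying a singleton clause more carefully than the paper does.
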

\begin{proof}
It is clear that every quantifier-free formula can be written as a formula $\phi$ in CNF and in the form as we have discussed it after Theorem~\ref{thm:np}.
We now remove successively outer literals 
as long as this results in an equivalent formula. 
\end{proof}

We first prove the converse of Fact~\ref{fact:outer-horn}.

\begin{prop}\label{prop:outer-horn}
Let $\phi$ be a reduced formula that is preserved by $i$. 
Then each outer clause of $\phi$ is Horn.
\end{prop}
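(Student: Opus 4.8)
The plan is to argue by contradiction: I assume that some outer clause $C$ of $\phi$ contains two distinct positive outer literals, say $s_1 = {\bf 1}$ and $s_2 = {\bf 1}$, and I derive a contradiction by exhibiting an assignment on which $i$ fails to preserve $\phi$. The engine driving the argument is the interplay between \emph{reducedness} (which produces assignments satisfying a given clause through exactly one prescribed literal) and the fact, recorded in Fact~\ref{fact:nat-iso}, that $i$ is an isomorphism from $\mathfrak S^2$ to $\mathfrak S$ that strongly preserves ${\bf 1}$.

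First I would extract, from reducedness, two witnessing assignments. Removing the literal $s_1 = {\bf 1}$ from $C$ yields a formula $\phi_1$ that is logically at least as strong as $\phi$, so $\phi_1 \models \phi$; since $\phi$ is reduced, $\phi_1 \not\equiv \phi$, and hence there is an assignment $u \colon V \to {\cal P}({\mathbb N})$ with $u \models \phi$ but $u \not\models \phi_1$. As $\phi$ and $\phi_1$ differ only in the clause $C$, the assignment $u$ must fail every literal of $C$ except $s_1 = {\bf 1}$, which it therefore satisfies. Doing the same with $s_2 = {\bf 1}$ produces an assignment $v \models \phi$ that satisfies $s_2 = {\bf 1}$ and fails every other literal of $C$; in particular $u(s_2) \neq {\bf 1}$, $v(s_1) \neq {\bf 1}$, and $u(r) = v(r) = {\bf 1}$ for each term $r$ occurring in a negative literal $r \neq {\bf 1}$ of $C$.

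Next I would combine $u$ and $v$ through $i$, setting $w(x) := i(u(x), v(x))$ for $x \in V$. Because $i$ is an isomorphism $\mathfrak S^2 \to \mathfrak S$, term evaluation commutes with it, so $w(t) = i(u(t), v(t))$ for every term $t$; and because $i$ strongly preserves ${\bf 1}$, we have $w(t) = {\bf 1}$ if and only if $u(t) = {\bf 1}$ and $v(t) = {\bf 1}$. I then check the literals of $C$ one by one. Every positive literal $s = {\bf 1}$ of $C$ fails under $w$: for $s_1$ and $s_2$ this is because one of $u, v$ already fails it, and for any further positive literal it is because $u$ fails it. Every negative literal $r \neq {\bf 1}$ of $C$ also fails under $w$, since $u(r) = v(r) = {\bf 1}$ forces $w(r) = {\bf 1}$. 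Thus $w$ satisfies no literal of $C$, so $w \not\models \phi$ --- contradicting that $i$ preserves $\phi$ while $u, v \models \phi$. This contradiction shows that no outer clause can contain two positive literals, so every outer clause of $\phi$ is Horn.

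The only delicate point, and the step I would be most careful about, is the bookkeeping of how the two assignments interact under $i$ on the negative literals: it is precisely \emph{strong} preservation of ${\bf 1}$ by $i$ (not mere preservation) that makes a negative literal fail under $w$ exactly when it fails under both $u$ and $v$, and the reducedness witnesses are chosen so that both $u$ and $v$ drive every negative literal of $C$ to ${\bf 1}$. Keeping the roles of the two witnesses straight --- $u$ singling out $s_1$, $v$ singling out $s_2$, each killing all remaining literals --- is what makes all literals of $C$ collapse simultaneously under $w$.
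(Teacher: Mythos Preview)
Your proof is correct and follows essentially the same argument as the paper's: both use reducedness to extract two witnesses $u,v$ (the paper calls them $s_1,s_2$) each satisfying exactly one of the two positive literals in $C$, then use that $i$ is an isomorphism ${\mathfrak S}^2 \to {\mathfrak S}$ (hence strongly preserves all term equations) to conclude that $w = i(u,v)$ fails every literal of $C$. Your write-up is in fact slightly more explicit than the paper's in justifying why $u,v \models \phi$ and in handling any additional positive literals of $C$, but the underlying idea is identical.
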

\begin{proof}
Let $V$ be the set of variables of $\phi$.
Assume for contradiction that $\phi$ contains an outer clause with two
positive literals, $t_1 = {\bf 1}$ and $t_2 = {\bf 1}$. 
If we remove
the literal $t_1 = {\bf 1}$ from its clause $C$, the resulting formula
is inequivalent to $\phi$, and hence there is an assignment $s_1 \colon V \rightarrow {\cal P}({\mathbb N})$ that
satisfies none of the literals of $C$ except for $t_1= {\bf 1}$.
Similarly, there is an assignment $s_2 \colon V \rightarrow {\cal P}({\mathbb N})$ that
satisfies none of the literals of $C$ except for $t_2 = {\bf 1}$.
By injectivity of $i$, and since $i$ strongly preserves $c, \sqcap, \sqcup$, and ${\bf 1}$, the assignment $s \colon V \rightarrow {\cal P}({\mathbb N})$
defined by $x \mapsto i(s_1(x),s_2(x))$
 does not satisfy the two
literals $t_1 = {\bf 1}$ and $t_2 = {\bf 1}$.
Since $i$ strongly preserves $c$, $\sqcup$, $\sqcap$,
none of the other 
literals in $C$ is satisfied by those mappings as well, in contradiction to the assumption that $\phi$ is preserved by $i$.
\end{proof}

\begin{defn}
Let $V$ be a set of variables, and 
$s \colon V \rightarrow {\cal P}({\mathbb N})$ be a mapping. Then 
a function from $V \rightarrow {\cal P}({\mathbb N})$
of the form $x \mapsto e(s(x))$ is called
a \emph{core assignment}. 
\end{defn}

\begin{lem}\label{lem:strongly-reduced}
For every quantifier-free formula $\phi$ there exists a formula $\psi$ such that all inner clauses 
are inner Horn, and such that
$\phi$ and $\psi$ have the same satisfying core assignments.
If $\phi$ is preserved by $ei$, then the set of all satisfying core assignments of $\psi$ is closed under $ei$.
\end{lem}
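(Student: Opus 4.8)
The plan is to produce $\psi$ from $\phi$ by rewriting, clause by clause, every inner clause so that it becomes inner Horn, while taking care that the rewriting changes neither the set of satisfying core assignments nor (under the $ei$-preservation hypothesis) the closure of that set under $ei$. First I would fix the syntactic form from the end of Section~\ref{sect:setcsps}: $\phi = \bigwedge_i C_i$ with each outer clause $C_i$ a disjunction of outer literals $t=\mathbf 1$ or $t\neq\mathbf 1$, and each term $t$ written in inner CNF $t=\bigsqcap_r c_r$ with inner clauses $c_r = \bigsqcup_s l_{rs}$. The key observation, already isolated in Lemma~\ref{lem:e}, is that on \emph{core} assignments $x\mapsto e(s(x))$ an inner clause that is \emph{not} inner Horn — that is, one containing several positive inner literals $x_1,\dots,x_k$ with $k\geq 2$ alongside the negative literals $\overline{y_1},\dots,\overline{y_l}$ — evaluates to $\mathbf 1$ if and only if \emph{some single} positive literal together with all the negatives already does. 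Concretely, $e(x_1)\sqcup\cdots\sqcup e(x_k)\sqcup\overline{e(y_1)}\sqcup\cdots\sqcup\overline{e(y_l)} = \mathbf 1$ holds iff there is an $i\leq k$ with $e(x_i)\sqcup\bigsqcup_j\overline{e(y_j)}=\mathbf 1$.

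The construction of $\psi$ then proceeds as follows. For each inner clause $c_r$ that occurs inside a term $t$ and is not inner Horn, I would replace it by the disjunction over the positive literals $x_i$ of the inner-Horn clauses $x_i\sqcup\bigsqcup_j\overline{y_j}$. Since the inner clauses sit inside a conjunction forming $t$ and the outer literal constrains $t=\mathbf 1$ (or $t\neq\mathbf 1$), turning one inner clause into a disjunction of inner-Horn clauses corresponds, at the level of the outer formula $t=\mathbf 1$, to a disjunction of several such equalities; I would push these disjunctions out to the outer level and re-normalise into outer CNF, collecting the results across all inner clauses and all outer clauses. By Lemma~\ref{lem:e}, each such replacement preserves the truth value \emph{on core assignments} of the outer literal it came from, because evaluating any term in inner CNF on a core assignment reduces, clause by clause, to the inner-Horn situation governed by the lemma. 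The negative outer literals $t\neq\mathbf 1$ are handled by the contrapositive half of the same lemma, exactly as in the proof of Proposition~\ref{prop:syntax}. Hence $\phi$ and the resulting $\psi$ have the same satisfying core assignments, and by construction every inner clause of $\psi$ is inner Horn.

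For the second assertion, suppose $\phi$ is preserved by $ei$ and let $S$ denote the common set of satisfying core assignments of $\phi$ and $\psi$. I would argue that $S$ is closed under $ei$ as follows. A core assignment is of the form $v=e\circ s$; given two satisfying core assignments $v_1=e\circ s_1$ and $v_2=e\circ s_2$ in $S$, consider the assignment $w$ with $w(x)=ei(s_1(x),s_2(x))=e(i(s_1(x),s_2(x)))$. This $w$ is again a core assignment, witnessed by the inner assignment $i\circ(s_1,s_2)$. Since $v_1,v_2\in S$ are in particular solutions of $\phi$ and $\phi$ is preserved by $ei$, the componentwise application of $ei$ to the tuples of values of $v_1$ and $v_2$ yields a solution of $\phi$; but that componentwise application is precisely $w$. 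Thus $w$ is a satisfying core assignment of $\phi$, hence of $\psi$, so $w\in S$, proving closure.

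The main obstacle I expect is bookkeeping rather than conceptual: I must be careful that applying Lemma~\ref{lem:e} inside a term that is itself a conjunction of inner clauses, nested inside an outer literal that may be negative, produces a genuinely equivalent (on core assignments) outer CNF after the normalisation, and that the rewriting does not inadvertently change satisfaction on core assignments when several non-Horn inner clauses interact within the same term. The cleanest way to control this is to treat each outer literal separately, invoke Lemma~\ref{lem:e} (both directions, using Proposition~\ref{prop:inner-horn} for the easy inclusions) to rewrite each inner clause independently, and only then redistribute the disjunctions outward and recombine into outer CNF; the second assertion is then essentially immediate from the definition of preservation, since the core-assignment structure transports $ei$ on values to $ei$ on the underlying inner assignments via Fact~\ref{fact:nat-iso}.
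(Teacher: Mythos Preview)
Your construction of $\psi$ is correct and matches the paper's: replace each non-Horn inner clause $x_1\sqcup\cdots\sqcup x_k\sqcup\overline{y_1}\sqcup\cdots\sqcup\overline{y_l}$ by the disjunction over $i\leq k$ of the inner-Horn clauses $x_i\sqcup\overline{y_1}\sqcup\cdots\sqcup\overline{y_l}$, then distribute at the outer level (a positive outer literal $t=\mathbf 1$ acquires additional disjuncts $t_i=\mathbf 1$ in its clause; a negative outer literal $t\neq\mathbf 1$ causes its clause $C$ to split into a conjunction $C_1\wedge\cdots\wedge C_k$), invoking Lemma~\ref{lem:e} in both directions to check equivalence on core assignments.

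Your argument for the second assertion, however, contains a genuine error. You write $v_1=e\circ s_1$, $v_2=e\circ s_2$ and set $w(x)=ei(s_1(x),s_2(x))$, then claim that this $w$ is ``the componentwise application of $ei$ to the tuples of values of $v_1$ and $v_2$''. That identity is false: the componentwise application is
\[
x\ \longmapsto\ ei\bigl(v_1(x),v_2(x)\bigr)\ =\ e\bigl(i\bigl(e(s_1(x)),e(s_2(x))\bigr)\bigr),
\]
which is not $e(i(s_1(x),s_2(x)))$ in general, and Fact~\ref{fact:nat-iso} gives no such commutation of $e$ past $i$. Worse, even if your $w$ happened to lie in $S$, it is not the assignment whose membership in $S$ you must establish to prove closure under $ei$; and you have no direct argument that $s_1,s_2$ themselves satisfy $\phi$, so preservation by $ei$ does not apply to them.

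The fix is simply to drop the detour through $s_1,s_2$ and argue as the paper does: set $w(x):=ei(v_1(x),v_2(x))$ directly. Then $w$ is a core assignment because $ei=e\circ i$, so $w(x)=e\bigl(i(v_1(x),v_2(x))\bigr)$. Since $v_1,v_2$ satisfy $\phi$ and $\phi$ is preserved by $ei$, $w$ satisfies $\phi$; since $\phi$ and $\psi$ have the same satisfying core assignments, $w$ satisfies $\psi$, and closure follows.
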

\begin{proof}
Suppose that $\phi$ has an outer clause $C$ with
a positive outer literal 
$t = {\bf 1}$ such that $t$ contains an inner clause
$c := x_1 \sqcup \cdots \sqcup x_k \sqcup \overline y_1 \sqcup \cdots \sqcup \overline y_l$ that is not
Horn, i.e., $k \geq 2$. Then we replace 
the outer literal $t = {\bf 1}$ in $\phi$ by $k$ literals
$t_1 = {\bf 1}, \dots, t_k = {\bf 1}$
where $t_i$ is obtained from $t$ by 
replacing $c$ by $x_i \sqcup \overline y_1 \sqcup \cdots \sqcup \overline y_l$. 

We claim that the resulting formula $\phi'$ 
has the same set of 
satisfying core assignments. 
Observe that
$x_i \sqcup \overline y_1 \sqcup \cdots \sqcup \overline y_l \sqsubseteq c$, and hence
$t_i = {\bf 1}$ implies $t = {\bf 1}$.
An arbitrary satisfying assignment of 
$\phi'$ satisfies either one of the positive outer 
literals $t_i = {\bf 1}$,
in which case that observation shows that it also satisfies $\phi$, or it satisfies one of the other 
outer literals of $C$, in which
case it also satisfies this literal in $\phi$.
Hence, $\phi'$ implies $\phi$. Conversely, 
let $s$ be a satisfying core assignment of $\phi$. 
If $s$ satisfies a literal from $C$ other than $t = {\bf 1}$,
then it also satisfies this literal in $\phi'$, and 
$s$ satisfies $\phi'$. Otherwise, 
$s$ must satisfy $t = {\bf 1}$, and hence
$s(x_1) \sqcup \cdots \sqcup s(x_k) \sqcup \overline{s(y_1)} \sqcup \cdots \sqcup \overline{s(y_l)} = {\bf 1}$.
Since $s$ is a core assignment, Lemma~\ref{lem:e}
implies that there exists an $i \leq k$ such that
$s(x_i) \sqcup \overline{s(y_1)} \sqcup \cdots \sqcup \overline{s(y_l)} = {\bf 1}$. So $s$ satisfies $\phi'$. 

Suppose that $\phi$ has an outer clause $C$ with
a negative outer literal 
$t \neq {\bf 1}$ such that $t$ contains an inner clause
$c := x_1 \sqcup \cdots \sqcup x_k \sqcup \overline y_1 \sqcup \cdots \sqcup \overline y_l$ that is not
Horn, i.e., $k \geq 2$. Then we replace the clause
$C$ in $\phi$ by $k$ clauses 
$C_1$, \dots, $C_k$
where $C_k$ is obtained from $C$ by 
replacing $c$ with $x_i \sqcup \overline y_1 \sqcup \cdots \sqcup \overline y_l$. 

We claim that the resulting formula $\phi'$ 
has the same set of 
satisfying core assignments. 
Observe that $x_1 \sqcup \cdots \sqcup x_k \sqcup \overline y_1 \sqcup \cdots \sqcup \overline y_l \neq {\bf 1}$ implies that $x_i \sqcup \overline y_1 \sqcup \cdots \sqcup \overline y_l \neq {\bf 1}$, for
every $i \leq k$.
The observation shows that an arbitrary assignment
of $\phi$ is also an assignment of $\phi'$. 
Conversely, let $s$ be a satisfying core assignment of $\phi'$. If $s$ satisfies one of the other literals 
of $C$ other than $t \neq {\bf 1}$,
then 
$s$ satisfies $\phi$. Otherwise, $s$ must satisfy
$x_i \sqcup \overline y_1 \sqcup \cdots \sqcup \overline y_l \neq {\bf 1}$ for all $i \leq k$,
and by Lemma~\ref{lem:e} 
we have that $s$ also satisfies 
$x_1 \sqcup \cdots \sqcup x_k \sqcup \overline y_1 \sqcup \cdots \sqcup \overline y_l \neq {\bf 1}$.

We perform these
replacements until we obtain a formula $\phi'$
where all inner clauses are 
Horn; this formula satisfies the requirements
of the first statement of the lemma. 

To prove the second statement, 
let $u,v \colon V \rightarrow {\cal P}({\mathbb N})$ be two satisfying core assignments of $\phi'$. Since
$\phi'$ and $\phi$ have the same satisfying 
core assignments, $u$ and $v$ also satisfy $\phi$.
Then the mapping $w \colon V \rightarrow {\cal P}({\mathbb N})$ given by
$x \mapsto ei(u(x),v(x))$ is a core assignment, and because $ei$ preserves $\phi$, the mapping $w$ satisfies $\phi$. Since $\phi$ and $\phi'$ have the same core assignments, $w$ is also a satisfying assignment of $\phi'$, which proves the statement. 
\end{proof}

\begin{defn}
A quantifier-free first-order formula $\phi$ (in the syntactic form described 
at the end of Section~\ref{sect:setcsps}) is called \emph{strongly reduced}
if every formula
obtained from $\phi$ by removing an outer literal
does not have the same set of satisfying core assignments
 over $\mathfrak S$.
\end{defn}

\begin{prop}\label{prop:core-horn-horn}
Let $\phi$ be a strongly reduced formula all of whose inner clauses are Horn. If the set of satisfying core assignments
of $\phi$ is closed under $ei$, then $\phi$ is Horn-Horn.
\end{prop}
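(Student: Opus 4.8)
The plan is to follow the template of the proof of Proposition~\ref{prop:outer-horn}, replacing arbitrary assignments by \emph{core} assignments, the isomorphism $i$ by $ei$, and ``reduced / preserved by $i$'' by ``strongly reduced / core assignments closed under $ei$''. Since by hypothesis every inner clause of $\phi$ is already inner Horn, the second condition in the definition of Horn-Horn holds automatically, so it suffices to establish the first condition: every outer clause of $\phi$ contains at most one positive outer literal. Note that, because all inner clauses are inner Horn, each relation of the form ``$t = {\bf 1}$'' occurring as the underlying relation of an outer literal of $\phi$ (positive or negative) is an inner Horn relation, hence by the remark following Proposition~\ref{prop:inner-horn} is \emph{strongly} preserved by $ei$; this is the workhorse of the argument.

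I would argue by contradiction. Suppose some outer clause $C$ contains two positive outer literals $t_1 = {\bf 1}$ and $t_2 = {\bf 1}$. Removing a disjunct from an outer clause only strengthens $\phi$, so the set of satisfying core assignments can only shrink; since $\phi$ is strongly reduced, deleting $t_1 = {\bf 1}$ strictly shrinks this set. Hence there is a satisfying core assignment $s_1$ of $\phi$ that satisfies $t_1 = {\bf 1}$ but none of the remaining literals of $C$. Symmetrically there is a satisfying core assignment $s_2$ satisfying $t_2 = {\bf 1}$ and none of the other literals of $C$ (in particular not $t_1 = {\bf 1}$).

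Next I would form the pointwise combination $w \colon x \mapsto ei(s_1(x), s_2(x))$, which is again a core assignment, and show it satisfies \emph{no} literal of $C$. For each negative literal $t \neq {\bf 1}$ of $C$, both $s_1$ and $s_2$ make $t = {\bf 1}$ (they fail that negative literal), so preservation of the inner Horn relation ``$t = {\bf 1}$'' gives that $w$ makes $t = {\bf 1}$ and thus fails the literal. For each positive literal $t = {\bf 1}$ of $C$, at least one of $s_1, s_2$ makes $t \neq {\bf 1}$ — for $t_1, t_2$ this is by construction, and for any further positive literal $s_1$ already fails it — so the ``only if'' direction of \emph{strong} preservation forces $w$ to make $t \neq {\bf 1}$, and again $w$ fails the literal. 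Consequently $w$ satisfies none of the literals of $C$.

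Finally, since $s_1, s_2$ are satisfying core assignments and the set of satisfying core assignments of $\phi$ is closed under $ei$, the combination $w$ is itself a satisfying core assignment of $\phi$; in particular $w$ satisfies the clause $C$, contradicting the previous step. Therefore no outer clause carries two positive literals, every outer clause is outer Horn, and $\phi$ is Horn-Horn. The main obstacle — and the point where the hypotheses are genuinely used — is the third step: it hinges on $ei$ strongly preserving the relation defined by each positive literal, which requires precisely the two-directional (strong) preservation guaranteed by the inner Horn assumption. With mere preservation the positive literals could not be ruled out, so the ``all inner clauses are Horn'' hypothesis is essential rather than cosmetic.
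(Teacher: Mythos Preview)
Your proposal is correct and follows essentially the same approach as the paper's own proof: both argue by contradiction, use strong reducedness to extract core assignments $s_1,s_2$ witnessing each of the two positive literals in an offending clause, combine them via $ei$, and invoke strong preservation of inner Horn relations by $ei$ to show the combined assignment violates the clause while remaining a satisfying core assignment. Your treatment is in fact slightly more explicit than the paper's in separating the negative-literal case (using the forward direction of preservation) from the positive-literal case (using the backward direction of strong preservation), but the argument is the same.
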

\begin{proof}
Let $V$ be the set of variables of $\phi$.
It suffices to show that all clauses of $\phi$ are outer Horn. 
Assume for contradiction
that $\phi$ contains an outer clause with two positive literals, $t_1 = {\bf 1}$ and $t_2 = {\bf 1}$. If we remove the literal $t_1 = {\bf 1}$
from its clause $C$, the resulting formula has strictly less satisfying core assignments; this shows the existence of a core assignment
$s_1 \colon V \to {\cal P}({\mathbb N})$ that satisfies
none of the literals of $C$ except for $t_1 = {\bf 1}$. 
Similarly, there exists a core assignment $s_2 \colon V \to {\cal P}({\mathbb N})$ that satisfies none of the literals of $C$
except for $t_2 = {\bf 1}$. By assumption, the
inner clauses of $t_1$ and $t_2$ are Horn. 
We claim that the assignment $s \colon V \rightarrow {\cal P}({\mathbb N})$ defined by $x \mapsto ei(s_1(x),s_2(x))$ does not satisfy the clause $C$.
Since $ei$ strongly preserves inner Horn clauses,
we have that $s$ does not satisfy $t_1 = {\bf 1} \vee t_2 = {\bf 1}$. For the same reasons $s$ does not
satisfy any other literals in $C$; this contradicts the assumption that the satisfying core assignments for $\phi$ are preserved by $ei$.
\end{proof}

\begin{prop}\label{prop:core-reduction}
Let $\Gamma$ be a finite set constraint language from $\mathcal EI$. 
Then $\Csp(\Gamma)$ can be reduced in linear time 
to the problem to find a satisfying assignment 
for a given set of Horn-Horn clauses.
\end{prop}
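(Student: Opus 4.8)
The plan is to exploit the fact that $\Gamma$ is finite and fixed, so that the only nontrivial work---converting each relation of $\Gamma$ into an equisatisfiable Horn-Horn formula---can be carried out once, as a preprocessing step that does not depend on the input instance. Concretely, fix a relation $R \in \Gamma$ with quantifier-free definition $\phi_R$ over $\mathfrak S$. Since $R \in \mathcal{EI}$, the formula $\phi_R$ is preserved by $ei$. First I would apply Lemma~\ref{lem:strongly-reduced} to obtain a formula $\psi_R$ all of whose inner clauses are inner Horn, which has the same satisfying core assignments as $\phi_R$, and whose set of satisfying core assignments is closed under $ei$. Then I would repeatedly delete outer literals from $\psi_R$ as long as the set of satisfying core assignments is unchanged, until no further deletion is possible; call the result $\hat\psi_R$. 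By construction $\hat\psi_R$ is strongly reduced, all of its inner clauses are still inner Horn, and it has the same satisfying core assignments as $\phi_R$ (so in particular this set is still closed under $ei$). Proposition~\ref{prop:core-horn-horn} then yields that $\hat\psi_R$ is Horn-Horn.

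Given an instance $\Phi = \bigwedge_i R_i(\bar x_i)$ of $\Csp(\Gamma)$, the reduction outputs the set of all Horn-Horn clauses obtained by replacing each constraint $R_i(\bar x_i)$ with $\hat\psi_{R_i}$, substituting the variables of $\bar x_i$ for the formal variables of $\hat\psi_{R_i}$. Since each $\hat\psi_{R_i}$ has bounded size (it depends only on $R_i \in \Gamma$, and $\Gamma$ is finite), this substitution runs in time linear in the size of $\Phi$. Denote the resulting set of Horn-Horn clauses by $\Phi'$.

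It remains to show that $\Phi$ is satisfiable if and only if $\Phi'$ is, and moreover that a solution of one can be read off from a solution of the other; both directions pass through \emph{core} assignments. For the forward direction, suppose $s$ solves $\Phi$. Since each $R_i$ is preserved by the binary operation $ei$, applying $ei$ coordinatewise to the tuple $(s(y))_{y \in V}$ and itself yields another solution of $\Phi$, namely $x \mapsto ei(s(x),s(x)) = e(i(s(x),s(x)))$; this is a core assignment, and because $\phi_{R_i}$ and $\hat\psi_{R_i}$ have the same satisfying core assignments it therefore also satisfies every $\hat\psi_{R_i}(\bar x_i)$, i.e.\ it solves $\Phi'$. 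For the backward direction, suppose $s'$ solves $\Phi'$. Each $\hat\psi_{R_i}$ is Horn-Horn, hence preserved by $e$ (Proposition~\ref{prop:syntax}), so the core assignment $x \mapsto e(s'(x))$ again solves $\Phi'$; appealing once more to the equality of satisfying core assignments, it solves each $\phi_{R_i}(\bar x_i)$ and hence $\Phi$.

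The step I expect to require the most care is this correctness argument, and within it the bookkeeping around the notion of core assignment: one must check that ``having the same satisfying core assignments'' is inherited under the variable substitution and conjunction used to build $\Phi'$ from the per-relation formulas $\hat\psi_{R_i}$ (in particular when $\bar x_i$ repeats variables), and that the two closure properties---closure of $\Phi$ under $ei$ and preservation of the Horn-Horn formula $\Phi'$ under $e$---indeed let one move an arbitrary solution onto a core assignment without leaving the solution set. The syntactic manufacture of $\hat\psi_R$ and the linear-time claim are then routine, the latter precisely because all heavy lifting is absorbed into the instance-independent preprocessing of the fixed finite language $\Gamma$.
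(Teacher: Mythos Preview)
Your proposal is correct and follows essentially the same route as the paper's own proof: apply Lemma~\ref{lem:strongly-reduced} to each defining formula $\phi_R$, strongly reduce the result, invoke Proposition~\ref{prop:core-horn-horn} to obtain a Horn-Horn formula with the same satisfying core assignments, and then argue equisatisfiability by passing through core assignments using preservation under $ei$ (respectively $e$). The paper phrases the correctness step slightly more symmetrically---using that \emph{both} $\phi_R$ and $\psi_R$ are preserved by $ei$ and hence by the unary map $x \mapsto ei(x,x)$---whereas you use $ei$ for one direction and $e$ (via Proposition~\ref{prop:syntax}) for the other; but this is a cosmetic difference, and your treatment of the variable-substitution bookkeeping is if anything more explicit than the paper's.
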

\begin{proof}
Let $\Phi$ be an instance of $\Csp(\Gamma)$, 
and let $V$ be
the set of variables that appear in $\Phi$.
For each constraint $R(x_1,\dots,x_k)$ from $\Phi$, let 
$\phi_R$ be the definition of $R$ over $\mathfrak S$.
By Lemma~\ref{lem:strongly-reduced}, there
exists a formula $\psi_R$ that has the same 
satisfying core assignments as $\phi_R$ and where
all inner clauses are Horn; moreover, since
$\phi_R$ is preserved by $ei$, the lemma
asserts that the set of all satisfying core assignments
of $\psi_R$ is preserved by $ei$.
We can assume without loss of generality that
$\psi_R$ is strongly reduced; this can be seen
similarly to Lemma~\ref{lem:reduced}.
By Proposition~\ref{prop:core-horn-horn}, the formula $\psi_R$ is Horn-Horn. 

Let $\Psi$ be the set of all Horn-Horn clauses of formulas $\psi_R(x_1,\dots,x_k)$ obtained from constraints $R(x_1,\dots,x_k)$ in $\Phi$ in the described manner. We claim that $\Phi$ is a satisfiable instance of $\Csp(\Gamma)$ if and only if $\Psi$ is satisfiable. This follows from the fact that
for each constraint $R(x_1,\dots,x_k)$ in $\Phi$,
the formulas $\phi_R$ and $\psi_R$ have the same
satisfying core assignments, and that
both $\phi_R$ and $\psi_R$ 
are preserved by $ei$
(for $\psi_R$ this follows from Proposition~\ref{prop:syntax}), so in particular by the function
$x \mapsto ei(x,x)$.
\end{proof}

Note that in Proposition~\ref{prop:core-reduction}
we reduce satisfiability for $\mathcal EI$
to satisfiability for a proper subclass of 
Horn-Horn set constraints: while for general
Horn-Horn set constraints we allow 
that inner clauses of negative outer literals are
not Horn, the reduction only produces 
Horn-Horn clauses where \emph{all} inner clauses are Horn. 

\section{Algorithm for Horn-Horn Set Constraints}
\label{sect:algorithm}
We present an algorithm that takes as input a set $\Phi$ of Horn-Horn
clauses and decides satisfiability of $\Phi$ over
$\mathfrak S = ({\cal P}(\mathbb{N}); \sqcup, \sqcap, c, {{\bf 0}}, {\bf 1})$ in time quadratic to the length of the input. 
By Proposition~\ref{prop:core-reduction}, this section will therefore conclude the proof that $\Csp(\Gamma)$ is tractable when all relations
in $\Gamma$ are from $\mathcal EI$.

We first discuss an important sub-routine of our algorithm, which we call the \emph{inner resolution algorithm}.
As in the case of Boolean positive unit resolution~\cite{horn-linear}
one can implement the procedure Inner-Res such that it runs in linear time in the input size. 

\begin{figure}[h]
\begin{center}
\small
\fbox{
\begin{tabular}{l}
Inner-Res($\Phi$) \\
{\rm // Input: A finite set $\Phi$ of inner Horn clauses} \\
{\rm // Accepts iff $\bigsqcap \Phi ={\bf 1}$ is satisfiable} \\
During the entire algorithm: \\
\hspace{.5cm} if $\Phi$ contains an empty clause, then  
reject. \\
Repeat := true \\
While Repeat = true do \\
\hspace{.5cm}  Repeat := false \\
\hspace{.5cm}        If $\Phi$ contains a positive unit clause $\{x\}$ then \\
\hspace{1cm} Repeat := true \\
\hspace{1cm} Remove all clauses where the literal $x$ occurs. \\
\hspace{1cm} Remove the literal $\overline{x}$ from all clauses. \\
\hspace{.5cm} End if \\
Loop \\
Accept
\end{tabular}}
\end{center}
\caption{Inner Resolution Algorithm.}
\label{fig:inner}
\end{figure}

\begin{lem}\label{lem:pre-inner-res}
Let $\Phi$ be a finite set of inner Horn clauses. Then the following are equivalent.
\begin{enumerate}
\item $\bigsqcap \Phi = {\bf 1}$ is satisfiable over $\mathfrak S$.
\item Inner-Res$(\Phi)$ from Figure~\ref{fig:inner} accepts.
\item $\bigsqcap \Phi = {\bf 1}$ has a solution whose image is contained in $\{\emptyset,\mathbb{N}\}$.
\end{enumerate}
\end{lem}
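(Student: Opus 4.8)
The plan is to reduce satisfiability of $\bigsqcap \Phi = {\bf 1}$ over $\mathfrak S$ to ordinary propositional Horn satisfiability, decided \emph{pointwise}, and then to invoke the standard correctness of unit resolution for Horn formulas to handle the statement about Inner-Res. First I would dispose of $(3) \Rightarrow (1)$, which is immediate: a solution whose image lies in $\{\emptyset, \mathbb N\}$ is in particular a solution. The crux is a pointwise correspondence, which I would set up as follows. Given any assignment $s \colon V \to {\cal P}({\mathbb N})$ and any element $a \in \mathbb N$, define the Boolean assignment $\chi_a \colon V \to \{0,1\}$ by $\chi_a(x) = 1$ iff $a \in s(x)$. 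Reading each inner clause $c_i = x_1 \sqcup \cdots \sqcup x_k \sqcup \overline{y}_1 \sqcup \cdots \sqcup \overline{y}_l$ as the propositional clause $x_1 \vee \cdots \vee x_k \vee \neg y_1 \vee \cdots \vee \neg y_l$, one checks directly that $a$ belongs to the evaluation $s(x_1) \sqcup \cdots \sqcup s(x_k) \sqcup \overline{s(y_1)} \sqcup \cdots \sqcup \overline{s(y_l)}$ of $c_i$ if and only if $\chi_a$ satisfies the propositional clause. Hence $s$ satisfies $\bigsqcap \Phi = {\bf 1}$ (in the sense of the definition after Theorem~\ref{thm:np}, that every clause evaluates to $\mathbb N$) precisely when $\chi_a$ is a propositional model of $\Phi$ for \emph{every} $a \in \mathbb N$.

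From this correspondence I would extract both the equivalence of $(1)$ and $(3)$ and the reduction to propositional Horn-SAT, and I would note that the Horn hypothesis is not yet needed here. If $s$ witnesses $(1)$, then each $\chi_a$ is a propositional model of $\Phi$, so the propositional CNF $\Phi$ is satisfiable. Conversely, from any propositional model $\chi$ of $\Phi$ one builds the $\{\emptyset, \mathbb N\}$-valued assignment $s$ with $s(x) = \mathbb N$ if $\chi(x) = 1$ and $s(x) = \emptyset$ otherwise; then every $\chi_a$ equals $\chi$, so $s$ is a solution and witnesses $(3)$. Thus $(1)$, $(3)$, and ``the propositional Horn CNF $\Phi$ is satisfiable'' are all equivalent.

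It then remains to identify Inner-Res with positive unit resolution for propositional Horn formulas, and this is where the Horn assumption becomes essential. The algorithm in Figure~\ref{fig:inner} repeatedly selects a positive unit clause $\{x\}$, deletes every clause containing the literal $x$, removes the literal $\overline x$ from the remaining clauses, and rejects exactly when an empty clause appears; this is precisely unit propagation on a Horn CNF. I would cite the standard fact (e.g.\ \cite{horn-linear}) that unit resolution is sound and refutation-complete for Horn formulas: it derives the empty clause (equivalently, Inner-Res rejects) if and only if the Horn formula is unsatisfiable. Concretely, when propagation halts without an empty clause, every surviving clause still contains a negative literal, so assigning all surviving variables the value false satisfies the remaining clauses; together with the variables forced true this yields a model. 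Hence Inner-Res accepts iff the propositional Horn CNF $\Phi$ is satisfiable, which by the previous paragraph closes the cycle $(1) \Leftrightarrow (2) \Leftrightarrow (3)$.

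The only genuine work is the pointwise reduction of the first paragraph; I expect the main (and minor) obstacle to be bookkeeping for the degenerate cases, namely the empty inner clause (the literal-free clause ${\bf 0}$, which no $\chi_a$ can satisfy and on which Inner-Res rejects) and the positive unit clauses that drive propagation (a clause $\{x\}$ forces $s(x) = \mathbb N$, i.e.\ $\chi_a(x) = 1$ for all $a$), together with checking that ``satisfiable'' in the clausewise sense defined after Theorem~\ref{thm:np} matches clausewise propositional satisfaction under each $\chi_a$. Beyond that, the lemma is a direct transcription of linear-time Horn-SAT through the pointwise correspondence.
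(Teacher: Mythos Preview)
Your proposal is correct and lands on the same model construction as the paper: variables forced by unit propagation go to $\mathbb N$, the rest to $\emptyset$, and the surviving negative literal in each remaining clause witnesses satisfaction. The paper's own proof is terser and works directly over $\mathfrak S$: it simply observes that every clause derived by Inner-Res is a logical consequence of $\bigsqcap \Phi = {\bf 1}$ (so rejection implies unsatisfiability), and then builds the $\{\emptyset,\mathbb N\}$-valued assignment just described when the algorithm accepts. Your route differs in that you interpose an explicit pointwise reduction $\chi_a(x) = [a \in s(x)]$ to propositional SAT, which cleanly separates the equivalence $(1)\Leftrightarrow(3)$ (which, as you note, holds for arbitrary CNFs and does not need the Horn hypothesis) from the equivalence $(2)\Leftrightarrow\text{Horn-SAT}$; the paper leaves this reduction implicit and only observes the analogous fact afterward, in the remark following the lemma and in Fact~\ref{fact:ba}. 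Either organization is fine; yours is a bit more modular, the paper's a bit more direct.
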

\begin{proof}
It is obvious that $\bigsqcap \Phi = {\bf 1}$ is unsatisfiable when Inner-Res$(\Phi)$ rejects; in fact, for all inner clauses $c$ derived by Inner-Res from $\Phi$, the formula $c = {\bf 1}$ is logically implied by $\bigsqcap \Phi = {\bf 1}$.
Conversely, if the
algorithm accepts then we can set all eliminated variables to $\mathbb{N}$ and all remaining variables
to $\emptyset$, which satisfies all clauses: in the removed clauses the positive literal is satisfied, and
in the remaining clauses we have at least one negative literal at the final stage of the
algorithm, and all clauses with negative literals at the final stage of the algorithm are
satisfied. 
\end{proof}

The proof of the previous lemma shows that
$\bigsqcap \Phi = {\bf 1}$ is satisfiable over $\mathfrak S$ if and only if $\bigsqcap \Phi = {\bf 1}$ is satisfiable over the two-element Boolean algebra. As we will see in the following, this holds more generally (and not only for inner Horn clauses). 
The following should be well-known, and can be shown with the same
proof as given in~\cite{KoppelbergBoolenAlgebras} for the weaker Proposition 2.19 there. We repeat the proof here for the convenience of the reader (for definitions of the notions appearing in the proof, however, we refer to~\cite{KoppelbergBoolenAlgebras}). 

\begin{fact}
\label{fact:ba}
Let $t_1,t_2$ be terms over $\{\sqcap,\sqcup,c,{\bf 0},{\bf 1}\}$. Then the following are equivalent: 
\begin{enumerate}
\item $t_1 = {\bf 1} \wedge t_2 \neq {\bf 1}$ is satisfiable over the two-element Boolean algebra;
\item $t_1 = {\bf 1} \wedge t_2 \neq {\bf 1}$ is satisfiable over \emph{all} Boolean algebras;
\item $t_1 = {\bf 1} \wedge t_2 \neq {\bf 1}$ is satisfiable in a Boolean algebra.
\end{enumerate}
\end{fact}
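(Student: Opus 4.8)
The plan is to prove the chain $(1) \Rightarrow (2) \Rightarrow (3) \Rightarrow (1)$, where the two nontrivial implications are $(2) \Rightarrow (3)$ and $(3) \Rightarrow (1)$; the implication $(1) \Rightarrow (2)$ is immediate since the two-element Boolean algebra \emph{is} a Boolean algebra. The whole statement is an instance of the completeness of Boolean algebra for quantifier-free formulas, and the key tool will be Stone duality, or more elementarily the fact that every Boolean algebra embeds into a power set algebra (Stone's representation theorem); this is the standard argument I expect to find in \cite{KoppelbergBoolenAlgebras}.

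\medskip
\noindent\textbf{Step $(3) \Rightarrow (1)$.} Suppose $t_1 = {\bf 1} \wedge t_2 \neq {\bf 1}$ holds under some assignment $a \colon \{x_1,\dots,x_n\} \to B$ in a Boolean algebra $B$. First I would pass to the subalgebra $B_0 \subseteq B$ generated by the finitely many values $a(x_1),\dots,a(x_n)$; the truth of the two (dis)equations is preserved, so without loss of generality $B_0$ is finitely generated and hence \emph{finite}. On a finite Boolean algebra, which is isomorphic to the power set $\mathcal{P}(\mathrm{At})$ of its set of atoms, I evaluate $t_1$ and $t_2$. Since $t_2 \neq {\bf 1}$ in $B_0$, there is an atom $p$ with $p \not\sqsubseteq t_2(a)$. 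The plan is to use the homomorphism $h \colon B_0 \to \{{\bf 0},{\bf 1}\}$ sending an element to ${\bf 1}$ iff it lies above $p$ (equivalently, the quotient by the maximal ideal of elements disjoint from $p$): this is a Boolean algebra homomorphism, so it commutes with all the term operations. Then $h(t_1(a)) = {\bf 1}$ because $t_1(a) = {\bf 1}$, while $h(t_2(a)) = {\bf 0}$ because $p \not\sqsubseteq t_2(a)$. Hence the composed assignment $h \circ a$ witnesses satisfiability of $t_1 = {\bf 1} \wedge t_2 \neq {\bf 1}$ over the two-element algebra, giving $(1)$.

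\medskip
\noindent\textbf{Step $(2) \Rightarrow (3)$.} This is the direction asserting that satisfiability over the two-element algebra lifts to satisfiability over \emph{all} Boolean algebras. The cleanest route is to observe that the two-element Boolean algebra embeds into any nontrivial Boolean algebra $A$ (send ${\bf 0} \mapsto {\bf 0}_A$ and ${\bf 1} \mapsto {\bf 1}_A$), and that such an embedding is a homomorphism preserving all terms as well as the predicates ``$=\!{\bf 1}$'' and ``$\neq\!{\bf 1}$''. Thus any satisfying assignment over $\{{\bf 0},{\bf 1}\}$ transports along the embedding to a satisfying assignment over $A$, establishing $(3)$ (and in fact the stronger statement $(2)$ with ``all'' Boolean algebras).

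\medskip
\noindent The main obstacle is the direction $(3) \Rightarrow (1)$, and specifically the passage from an arbitrary (possibly infinite) Boolean algebra to the two-element one. The finite-generation reduction handles the infinite case cleanly, and thereafter the existence of a $\{{\bf 0},{\bf 1}\}$-valued homomorphism separating $t_2(a)$ from ${\bf 1}$ is exactly the content of the atom/ultrafilter argument sketched above. I would note that this is precisely the strengthening over Proposition 2.19 of \cite{KoppelbergBoolenAlgebras}: the difficulty there handled only a single equation, whereas here the presence of the \emph{negated} literal $t_2 \neq {\bf 1}$ is what forces the choice of a specific atom (equivalently, a specific ultrafilter) rather than an arbitrary homomorphism, and the argument above accommodates this at essentially no extra cost.
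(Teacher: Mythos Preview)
Your argument is mathematically complete, but the labelling of the implications is scrambled. The reasoning you give for ``$(1)\Rightarrow(2)$ immediate since the two-element Boolean algebra \emph{is} a Boolean algebra'' actually justifies $(1)\Rightarrow(3)$, not $(1)\Rightarrow(2)$. Conversely, the section you head ``Step $(2)\Rightarrow(3)$'' in fact proves $(1)\Rightarrow(2)$ (embedding the two-element algebra into an arbitrary nontrivial one). The genuine implication $(2)\Rightarrow(3)$ is trivial (satisfiable in all implies satisfiable in some), and the paper records both $(1)\Rightarrow(2)$ and $(2)\Rightarrow(3)$ as ``obvious''. Once the labels are corrected, your cycle closes and the proof is sound.

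On the substantive step $(3)\Rightarrow(1)$, your route differs slightly from the paper's. The paper works directly in the given Boolean algebra $\mathfrak{C}$: since $c := t_2(a) \neq {\bf 1}$, it picks an ultrafilter $\mathcal{U}$ containing $\overline{c}$ (using the ultrafilter lemma) and takes the characteristic function of $\mathcal{U}$ as the homomorphism to $\{{\bf 0},{\bf 1}\}$. You instead first pass to the finitely generated (hence finite) subalgebra and then pick an atom not below $t_2(a)$, which determines the same kind of homomorphism. Both arguments are correct and essentially the same idea; yours has the minor advantage of avoiding any appeal to choice, at the cost of the extra reduction step. Either way, your observation that the inequation $t_2 \neq {\bf 1}$ is what forces the choice of a \emph{specific} ultrafilter/atom (rather than an arbitrary homomorphism, as would suffice for Proposition~2.19 in \cite{KoppelbergBoolenAlgebras}) is exactly right.
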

\begin{proof}
Obviously, 1 implies 2, and 2 implies 3. 
For 3 implies 1, assume that $t_1 = {\bf 1} \wedge t_2 \neq {\bf 1}$ has a satisfying assignment in some Boolean algebra $\mathfrak C$. 
Let $c$ be the element denoted by $t_2$ in 
$\mathfrak C$ under this assignment. 
It is well-known that every element $a \neq {\bf 0}$ of a Boolean algebra is contained in an ultrafilter (see e.g.~Corollary 2.17 in~\cite{KoppelbergBoolenAlgebras}). So let $\cal U$ be an ultrafilter of $\mathfrak C$ that contains $\overline{c}$,
and let $f \colon {\mathfrak C} \to \{{\bf 0},{\bf 1}\}$ be the characteristic function of $\cal U$.
Then $f$ is a homomorphism from $\mathfrak C$ to the two-element Boolean algebra that maps $c$ to ${\bf 0}$; thus
$t_1 = {\bf 1} \wedge t_2 \neq {\bf 1}$ is satisfiable 
over $\{{\bf 0},{\bf 1}\}$.
\end{proof}

The same statement for $t_1 = {\bf 1}$ instead of
$t_1 = {\bf 1} \wedge t_2 \neq {\bf 1}$ is Proposition 2.19. in~\cite{KoppelbergBoolenAlgebras}.
Fact~\ref{fact:ba} has the following consequence that is crucial for the way how we use the inner resolution
procedure in our algorithm. 

\begin{lem}\label{lem:inner-res}
Let $\Psi$ be a finite set of inner Horn clauses. Then Inner-Res$(\Psi {\cup} \{
\overline{x}_1,\dots,\overline{x}_k,y_0,\dots,y_l\})$
rejects
if and only if $\bigsqcap \Psi = {\bf 1}$ implies that $x_1 \sqcup \dots \sqcup x_k \sqcup \overline{y}_1 \sqcup \dots \sqcup \overline{y}_l = {\bf 1}$ over $\mathfrak S$. 
\end{lem}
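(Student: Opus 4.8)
The plan is to express both sides of the biconditional as satisfiability statements over $\mathfrak S$ and then match them up, using Lemma~\ref{lem:pre-inner-res} to translate the behaviour of Inner-Res and Fact~\ref{fact:ba} to pass between $\mathfrak S$ and the two-element Boolean algebra. Throughout I read each $\overline{x}_i$ and each $y_j$ as a \emph{unit} inner Horn clause, and I treat the right-hand disjunction as a single term $t_2 := x_1 \sqcup \dots \sqcup x_k \sqcup \overline{y}_1 \sqcup \dots \sqcup \overline{y}_l$, while $t_1 := \bigsqcap \Psi$ abbreviates the conjunction of all clauses of $\Psi$.

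First I would rewrite the left-hand side: by Lemma~\ref{lem:pre-inner-res}, Inner-Res on input $\Psi \cup \{\overline{x}_1,\dots,\overline{x}_k,y_1,\dots,y_l\}$ \emph{rejects} exactly when $\bigsqcap(\Psi \cup \{\overline{x}_1,\dots,\overline{x}_k,y_1,\dots,y_l\}) = {\bf 1}$ is \emph{unsatisfiable} over $\mathfrak S$. Next I would rewrite the right-hand side: the implication ``$t_1 = {\bf 1}$ entails $t_2 = {\bf 1}$ over $\mathfrak S$'' \emph{fails} precisely when $t_1 = {\bf 1} \wedge t_2 \neq {\bf 1}$ is satisfiable over $\mathfrak S$. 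After negating both reformulations, the lemma reduces to a single equisatisfiability claim over $\mathfrak S$: the clause set $\Psi \cup \{\overline{x}_1,\dots,\overline{x}_k,y_1,\dots,y_l\}$ is satisfiable if and only if $t_1 = {\bf 1} \wedge t_2 \neq {\bf 1}$ is satisfiable.

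For this claim the easy direction is to start from a satisfying assignment of the clause set: the unit clauses force $x_i = {\bf 0}$ for all $i$ and $y_j = {\bf 1}$ (hence $\overline{y}_j = {\bf 0}$) for all $j$, so that $t_2 = {\bf 0} \neq {\bf 1}$ while $t_1 = {\bf 1}$, witnessing satisfiability of $t_1 = {\bf 1} \wedge t_2 \neq {\bf 1}$. The converse is where the real content lies, and I expect it to be the main obstacle, since a priori an assignment realizing $t_2 \neq {\bf 1}$ over the rich algebra $\mathfrak S$ need not make the \emph{individual} disjuncts of $t_2$ vanish. This is exactly the gap that Fact~\ref{fact:ba} closes: since $t_1 = {\bf 1} \wedge t_2 \neq {\bf 1}$ is satisfiable over the Boolean algebra $\mathfrak S$, it is satisfiable over the two-element Boolean algebra $\{{\bf 0},{\bf 1}\}$, and there $t_2 \neq {\bf 1}$ collapses to $t_2 = {\bf 0}$, forcing each disjunct to ${\bf 0}$, i.e. $x_i = {\bf 0}$ and $y_j = {\bf 1}$. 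Interpreting ${\bf 0},{\bf 1}$ as $\emptyset,\mathbb{N}$ re-embeds this assignment into $\mathfrak S$ and shows it satisfies $\bigsqcap \Psi = {\bf 1}$ together with every unit clause. This establishes the equisatisfiability, and hence the lemma. The whole argument is thus bookkeeping around two prior results; the only nonroutine point is recognizing that the decomposition of the single inequation $t_2 \neq {\bf 1}$ into $k+l$ unit conditions is licensed precisely by the reduction to a two-element (ultrafilter) assignment provided by Fact~\ref{fact:ba}.
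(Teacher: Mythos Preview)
Your proof is correct and follows essentially the same route as the paper: both reduce the statement to the (un)satisfiability of $t_1={\bf 1}\wedge t_2\neq{\bf 1}$, invoke Fact~\ref{fact:ba} to pass to the two-element Boolean algebra where $t_2\neq{\bf 1}$ collapses to the conjunction of the unit clauses, and then apply Lemma~\ref{lem:pre-inner-res}. The only cosmetic difference is that the paper runs the argument as a single chain of biconditionals, whereas you split the equisatisfiability into an ``easy'' and a ``hard'' direction.
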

\begin{proof}
$\bigsqcap \Psi = {\bf 1}$ implies that $x_1 \sqcup \dots \sqcup x_k \sqcup \overline{y}_1 \sqcup \dots \sqcup \overline{y}_l = {\bf 1}$ if and only if
$\bigsqcap \Psi = {\bf 1} \wedge x_1 \sqcup \dots \sqcup x_k \sqcup \overline{y}_1 \sqcup \dots \sqcup \overline{y}_l \neq {\bf 1}$ is unsatisfiable over $\mathfrak S$. 
By Fact~\ref{fact:ba}, this is the case if and only if
$\bigsqcap \Psi = {\bf 1} \wedge x_1 \sqcup \dots \sqcup x_k \sqcup \overline{y}_1 \sqcup \dots \sqcup \overline{y}_l \neq {\bf 1}$ is unsatisfiable over
the 2-element Boolean algebra, which is the case
if and only if $\bigsqcap \Psi = {\bf 1} \wedge x_1 \sqcup \dots \sqcup x_k \sqcup \overline{y}_1 \sqcup \dots \sqcup \overline{y}_l = {\bf 0}$ is unsatisfiable over the two-element Boolean algebra. 
As we have seen in Lemma~\ref{lem:pre-inner-res}, 
this is turn holds if and only if Inner-Res$(\Psi {\cup} \{
\overline{x}_1,\dots,\overline{x}_k,y_1,\dots,y_l\})$
rejects.
\end{proof}

\ignore{
\begin{proof}
Suppose that Inner-Res rejects $\Phi := \Psi \cup \big\{\{\overline{x}_1\},\dots,\{\overline{x}_k\}, \{y_0\},\dots,\{y_l\}\big\}$. 
If Inner-Res already rejects $\Psi$, then $\Psi$ is unsatisfiable by Lemma~\ref{lem:pre-inner-res}
and there is nothing to show. Otherwise, 
let $\Psi'$ be the set of clauses computed by Inner-Res on input $\Psi$ at the
final stage of the algorithm, together with the positive unit clauses
that have been removed during the course of the run of Inner-Res on $\Psi$.
It is clear that Inner-Res also rejects
$\Phi' := \Psi' \cup \big \{\{\overline{x}_1\},\dots,\{\overline{x}_k\}, \{y_0\},\dots,\{y_l\} \big \}$. 
Let $\phi$ be the clause of $\Phi'$ that is empty at the final
stage of Inner-Res on $\Phi'$; it is clear that 
$\phi$ is from $\Psi' \cup \big \{
\{\overline{x}_1\},\dots,\{\overline{x}_k\} \big \}$. 

If $\phi$ is $\{\overline{x}_i\}$, for some $i \leq k$,
then the Inner-Res algorithm must have derived 
the clause $\{x_i\}$ from $\Phi'$. Let $\psi$ be the clause
from $\Phi'$ from which $\{x_i\}$ has been derived.
If $\psi = \{y_j\}$ for some $j \leq l$, then
$x_1 \sqcup \dots \sqcup x_k \sqcup \overline{y}_1 \sqcup \dots \sqcup \overline{y}_l = {\bf 1}$
is a tautology since there are $i \leq k$ and $j \leq l$ with $x_i = y_j$, and there is nothing to show. 
Otherwise, $\psi$ is from $\Psi'$; it is easy
to see that $\psi$ must be of the form $\{\overline{y}_{i_1},\dots,\overline{y}_{i_s},x_i\}$.
This shows that
$\Psi'$ implies $\overline{y}_{i_1} \sqcup \dots \sqcup \overline{y}_{i_s} \sqcup x_i = {\bf 1}$, and we are done in this case.
If $\phi$ is from $\Psi'$, then it must be of the form $\{\overline{y}_{i_1},\dots,\overline{y}_{i_s}\}$. Therefore, $\Psi'$ implies
$\overline{y}_{i_1} \sqcup \dots \sqcup \overline{y}_{i_s} = {\bf 1}$,
and we are also done in this case.

Now suppose that $\Psi$ implies  that $x_1 \sqcup \dots \sqcup x_k \sqcup \overline{y}_1 \sqcup \dots \sqcup \overline{y}_l = {\bf 1}$. In other words,
$\Psi \cup \{\overline{x}_1 \sqcap \dots \sqcap \overline{x}_k \sqcap y_1 \sqcap \dots \sqcap {y_l} \neq \emptyset\}$ is unsatisfiable.
In particular, $\Psi \cup \{\overline{x}_1 \sqcap \dots \sqcap \overline{x}_k \sqcap y_1 \sqcap \dots \sqcap {y}_l = {\bf 1}\}$ is unsatisfiable.
By Lemma~\ref{lem:pre-inner-res},
the inner resolution algorithm rejects 
$\Psi \cup \{\overline{x}_1={\bf 1},\dots,\overline{x}_k={\bf 1},y_1 = {\bf 1}, \dots, y_l = {\bf 1}\}$.
\end{proof}
(Alternatively, one can derive Lemma~\ref{lem:inner-res} by combining Lemma~\ref{lem:pre-inner-res}
with the following well-known fact...)
}

\begin{figure}[ht]
\begin{center}
\small
\hspace{-.4cm} 
\fbox{
\begin{tabular}{l}
Outer-Res($\Phi$) \\
{\rm // Input: A finite set $\Phi$ of Horn-Horn clauses} \\
{\rm // Accepts iff $\Phi$ is satisfiable over $({\cal P}({\mathbb N}); \sqcap,
\sqcup,c,{\bf 0},{\bf 1})$} \\
During the entire algorithm: \\
\hspace{.3cm} if $\Phi$ contains an empty clause, then reject. \\
Repeat := true \\
While Repeat = true do \\
\hspace{.3cm}  Repeat := false \\
\hspace{.3cm} Let $\Psi$ be the set of all inner Horn clauses of terms $t$ \\ 
\hspace{.6cm} from positive unit clauses $\{t={\bf 1}\}$ in $\Phi$. \\
\hspace{.6cm} If Inner-Res rejects $\Psi$, then reject. \\
\hspace{.3cm} For each negative literal $t \neq {\bf 1}$ in clauses  
from $\Phi$ \\
\hspace{.6cm} For each inner clause $D = \{x_1,\dots,x_k,\overline{y}_1,\dots,\overline{y}_l\}$  
of $t$ \\
\hspace{.9cm} Call Inner-Res on \\
\hspace{1.2cm} $\Psi \cup \{\overline{x}_1={\bf 1}, \dots, \overline{x}_k={\bf 1}, y_0={\bf 1}, \dots, y_l={\bf 1}\}$ \\
\hspace{.9cm} If Inner-Res rejects then remove clause $D$ from $t$\\
\hspace{.6cm} End for \\
\hspace{.6cm} If all clauses in $t$ have been removed, then \\
\hspace{.9cm} Remove outer literal $t \neq {\bf 1}$ from its clause \\
\hspace{.9cm} Repeat := true \\
\hspace{.3cm} End for \\
Loop \\
Accept
\end{tabular}}
\end{center}
\caption{Outer Resolution Algorithm.}
\label{fig:outer}
\end{figure}

\begin{thm}\label{thm:outer-res}
The algorithm `Outer-Res' in Figure~\ref{fig:outer} decides satisfiability for sets of Horn-Horn clauses in quadratic time.
\end{thm}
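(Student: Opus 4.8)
The goal is to establish both halves of the theorem: that Outer-Res accepts exactly the satisfiable inputs, and that it can be implemented in quadratic time. For correctness I would carry along the invariant that the formula $\Phi$ currently stored by the algorithm stays logically equivalent over $\mathfrak S$ to the input. Outer-Res changes $\Phi$ in only two ways, and I would check that each preserves this invariant. First, an inner clause $D$ is deleted from the term $t$ of a negative literal $t \neq {\bf 1}$ exactly when Inner-Res rejects the call on $\Psi \cup \{\dots\}$; by Lemma~\ref{lem:inner-res} this happens iff $\bigsqcap \Psi = {\bf 1}$ implies $D = {\bf 1}$. Since the positive unit clauses of $\Phi$ already force $\bigsqcap \Psi = {\bf 1}$, every model of $\Phi$ satisfies $D = {\bf 1}$, so $t$ and the term obtained by deleting $D$ agree on all models of $\Phi$; hence the deletion yields an equivalent formula. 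Second, a negative literal is dropped from its clause once all of its inner clauses have been removed, i.e.\ once its term has become ${\bf 1}$, so that the literal reads ${\bf 1}\neq{\bf 1}$; dropping an always-false disjunct again preserves models. The two rejection rules are then sound: an empty outer clause is an unsatisfiable disjunction, and if Inner-Res rejects $\Psi$ then $\bigsqcap \Psi = {\bf 1}$ is unsatisfiable by Lemma~\ref{lem:pre-inner-res}, whence $\Phi$, which implies $\bigsqcap\Psi = {\bf 1}$, is unsatisfiable.

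The heart of the argument is completeness: if Outer-Res accepts, I must construct a satisfying assignment for the final formula $\Phi^\ast$, and then invoke the invariant. At the accepting fixpoint, $\Psi$ passed Inner-Res, so $\bigsqcap \Psi = {\bf 1}$ is satisfiable; moreover, since no empty clause was produced and outer clauses are Horn, every clause is either a positive unit clause, whose inner clauses lie in $\Psi$, or a clause carrying at least one surviving negative literal. For each clause $C$ of the second kind, fix such a literal $t_C \neq {\bf 1}$ together with a surviving inner clause $D_C$ of $t_C$; because $D_C$ was not deleted, the corresponding Inner-Res call accepted, so by Lemma~\ref{lem:inner-res} there is an assignment $s_C$ with $\bigsqcap \Psi(s_C) = {\bf 1}$ and $D_C(s_C) \neq {\bf 1}$. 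I would then amalgamate all the $s_C$ (or, if there are no clauses of the second kind, an arbitrary solution of $\bigsqcap \Psi = {\bf 1}$) into one assignment $s^\ast$ by iterating the map $i$ variable-wise, exactly as in the proof of Fact~\ref{fact:outer-horn}. The crucial point is that $i$ is an \emph{isomorphism} $\mathfrak S^2 \to \mathfrak S$ (Fact~\ref{fact:nat-iso}), so for \emph{every} inner clause $D$, Horn or not, one has $D(s^\ast) = {\bf 1}$ iff $D$ evaluates to ${\bf 1}$ in every component. Hence $\bigsqcap \Psi = {\bf 1}$, which holds in each component, is preserved, so all positive unit clauses are satisfied; and each $D_C$, being $\neq {\bf 1}$ in the component $s_C$, satisfies $D_C(s^\ast) \neq {\bf 1}$, whence $t_C(s^\ast) \neq {\bf 1}$ and $C$ is satisfied. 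Thus $s^\ast \models \Phi^\ast$. I would stress that it is precisely here, and not in the reduction of Proposition~\ref{prop:core-reduction}, that the strong-preservation properties of $i$ (rather than those of $e$) are indispensable, since the inner clauses of negative literals need not be Horn.

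For termination and timing I would first note that the while-loop makes progress: any pass that does not accept removes at least one outer literal, and since literals and inner clauses are only ever deleted while $\Psi$ only grows, the number of passes is at most the number of outer literals, i.e.\ $O(n)$ for input length $n$. Computing $\Psi$ and running Inner-Res on it costs $O(n)$ per pass, as for linear-time Horn satisfiability~\cite{horn-linear}, and each deletion test is a linear Inner-Res call. The delicate point, which I expect to be the main technical obstacle for the time bound, is to obtain a quadratic rather than a cubic total: re-examining every surviving inner clause on every pass is too expensive, so one must exploit monotonicity—deleted clauses and literals never reappear and $\Psi$ grows monotonically—to maintain the inner-resolution data incrementally, charging the work of each Inner-Res call either to a deletion or to a growth step of $\Psi$, of which there are only $O(n)$ in total, each triggering $O(n)$ work. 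Combining the correctness argument with this amortized analysis yields that Outer-Res decides satisfiability of Horn-Horn clause sets in quadratic time, which together with Proposition~\ref{prop:core-reduction} completes the proof that $\Csp(\Gamma)$ is tractable for $\Gamma$ from $\mathcal{EI}$.
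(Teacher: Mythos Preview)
Your proposal is correct and follows essentially the same approach as the paper: soundness via the invariant that each modification of $\Phi$ preserves its models (using Lemma~\ref{lem:inner-res}), completeness by amalgamating per-clause witnesses with the iterated isomorphism $i$ (Fact~\ref{fact:nat-iso}), and a brief running-time argument. Two small differences are worth noting: you choose one witness $s_C$ per surviving outer clause, whereas the paper enumerates one witness $\alpha_D$ per surviving inner clause of every surviving negative literal---both work, and yours is slightly leaner; and your running-time discussion is actually more careful than the paper's, which simply asserts a linear number of Inner-Res calls and ``straightforward'' data structures without spelling out the amortization you sketch.
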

\begin{proof}
We first argue that if the algorithm rejects $\Phi$, then
$\Phi$ has indeed no solution. 
First note that during the whole argument, the set of clauses 
$\Phi$ has the same satisfying tuples (i.e. the corresponding formulas are equivalent): 
Observe that only negative literals get removed from clauses, and
that a negative literal $t \neq {\bf 1}$ only gets removed from a clause when Inner-Res rejects $\Psi \cup \{
\overline{x}_1={\bf 1},\dots,\overline{x}_k={\bf 1},y_0={\bf 1},\dots,y_l={\bf 1}\}$ for each 
inner clause $\{x_1,\dots,x_k,\overline{y}_1,\dots,\overline{y}_l\}$ of $t$. By Lemma~\ref{lem:inner-res}, if Inner-Res rejects 
$\Psi \cup \{\overline{x}_1={\bf 1},\dots,\overline{x}_k={\bf 1},y_0={\bf 1},\dots,y_l={\bf 1}\}$ then $\Psi$ implies that
$x_1 \sqcup \dots \sqcup x_k \sqcup \overline{y}_1 \sqcup \dots \sqcup \overline{y}_l = {\bf 1}$. 
Hence, the positive unit clauses imply that
$t = {\bf 1}$ and therefore the literal $t \neq {\bf 1}$ can 
be removed from the clause without 
changing the set of satisfying tuples. Now the algorithm rejects if either Inner-Res rejects $\Psi$ or if it derives the empty clause. In both 
cases it is clear that $\Phi$ is not satisfiable.

Thus, it suffices to construct a solution when the algorithm accepts. 
Let $\Psi$ be the set of all inner clauses of terms from positive
unit clauses at the final stage, when the algorithm accepts. 
For each remaining negative outer literal $\{t \neq {\bf 1}\}$
and each remaining inner clause $D = \{x_1,\dots,x_k,\overline{y}_1,\dots, \overline{y}_l\}$ of $t$ there exists an assignment $\alpha_D$
from $V \rightarrow {\cal P}(\mathbb{N})$ that satisfies $\Psi \cup \{x_1 \sqcup \dots \sqcup x_k \sqcup \overline{y}_1 \sqcup \dots \sqcup \overline{y}_l \neq {\bf 1}\}$: otherwise, by 
Lemma~\ref{lem:inner-res}, 
the inner resolution algorithm would have rejected
$\Psi \cup \{\overline{x}_1={\bf 1},\dots,\overline{x}_k={\bf 1},y_0={\bf 1},\dots,y_l={\bf 1}\}$, and would have removed the inner clause $D$ from $t$.
Let $D_1,\dots,D_s$ be an enumeration of all remaining inner clauses $D$ that appear in all remaining negative outer literals. 

Write $i_s$ for the $s$-ary operation defined by $
(x_1,\dots,x_s) \mapsto i(x_1,i(x_2,\dots,i(x_{s-1},x_s) \cdots ))$  (where $i$ is as in Fact~\ref{fact:nat-iso}).
We claim that $s \colon V \rightarrow {\cal P}(\mathbb{N})$ given by
\begin{align*} 
x \mapsto i_s(\alpha_{D_1}(x),\dots,\alpha_{D_s}(x))
\end{align*}
satisfies all clauses in $\Phi$.
Let $C$ 
be a clause from $\Phi$. By assumption, 
at the final stage of the algorithm, the clause $C$ is still non-empty.
Also note that since all formulas in the input were Horn-Horn,
they contain at most one positive literal.
This holds in particular for $C$, 
and we therefore only have to distinguish the following cases:
\begin{itemize}
\item At the final state of the algorithm, 
$C$ still contains a negative literal
$t \neq {\bf 1}$. Since $t \neq {\bf 1}$ has not been removed, there must be a remaining inner clause
$D = \{x_1,\dots, x_k,\overline{y}_1,\dots, \overline{y}_l\}$ of $t$.
Observe that $s(x_1) \sqcup \dots \sqcup s(x_k) \sqcup \overline{s(y_1)} \sqcup \dots
\sqcup \overline{s(y_l)} = {\bf 1}$ if and only if $\alpha_{D_j}(x_1) \sqcup \dots \sqcup \alpha_{D_j}(x_k) \sqcup \overline{\alpha_{D_j}(y_1)}
\sqcup \dots \sqcup \overline{\alpha_{D_j}(y_l)} = {\bf 1}$ for all $1 \leq j \leq s$. 
Hence, and since
$\alpha_{D}(x_1) \sqcup \dots \sqcup \alpha_{D}(x_k) \sqcup \overline{\alpha_{D}(y_1)} \sqcup \dots
\sqcup \overline{\alpha_{D}(y_l)} \neq {\bf 1}$,
$s$ satisfies $t \neq {\bf 1}$. This shows that $s$ satisfies $C$.
\item All negative literals have been removed from $C$ during the algorithm. The positive literal $t_0 = {\bf 1}$ of 
$C$ is such that the inner clauses of $t_0$ are Horn.
They will be part of $\Psi$, 
and therefore $t_0 = {\bf 1}$ is satisfied by
$s$. Indeed, by assumption the assignments $\alpha_{D_j}$ satisfy $\Psi$, and $\Psi$ is preserved by $i$.
\end{itemize}
We conclude that $s$ is a solution to $\Phi$.
The inner resolution algorithm has a linear time
complexity; the outer resolution algorithm 
performs at most a linear number of calls to the
inner resolution algorithm, and it is straightforward
to implement all necessary data structures for
outer resolution to obtain a running
time that is quadratic in the input size.
\end{proof}

Combining Proposition~\ref{prop:core-reduction}
with Theorem~\ref{thm:outer-res}, we obtain the following.

\begin{thm}\label{thm:tractability}
Let $\Gamma$ be a finite set constraint language from $\mathcal EI$. Then $\Csp(\Gamma)$ can be solved in quadratic time.
\end{thm}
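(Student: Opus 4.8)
The plan is to combine the two major results already established in the paper, treating Theorem~\ref{thm:tractability} as a straightforward corollary. The statement asserts that for any finite set constraint language $\Gamma \subseteq \mathcal{EI}$, the problem $\Csp(\Gamma)$ is solvable in quadratic time. The natural strategy is a two-stage reduction: first transform an arbitrary $\Csp(\Gamma)$ instance into an equivalent (with respect to satisfiability) set of Horn-Horn clauses, and then run the polynomial-time decision procedure for Horn-Horn clauses on the result.

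Concretely, I would proceed as follows. Given an instance $\Phi$ of $\Csp(\Gamma)$ with variable set $V$, I invoke Proposition~\ref{prop:core-reduction}, which applies precisely because $\Gamma$ is finite and drawn from $\mathcal{EI}$. That proposition guarantees a \emph{linear-time} reduction producing a set $\Psi$ of Horn-Horn clauses such that $\Phi$ is satisfiable over $\mathfrak{S}$ if and only if $\Psi$ is satisfiable. The key subtlety, handled internally by that proposition, is that satisfiability of the original instance is equivalent to the existence of a satisfying \emph{core assignment}, which in turn is preserved by $ei$; this is what licenses replacing each constraint's defining formula $\phi_R$ by the Horn-Horn formula $\psi_R$ via Lemma~\ref{lem:strongly-reduced} and Proposition~\ref{prop:core-horn-horn}. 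Having obtained $\Psi$, I then apply Theorem~\ref{thm:outer-res}, which states that the algorithm Outer-Res decides satisfiability of a set of Horn-Horn clauses in quadratic time.

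Composing the two stages gives the result. The reduction from $\Phi$ to $\Psi$ runs in linear time, and since $\Gamma$ is fixed and finite, each relation $R \in \Gamma$ has a defining formula $\phi_R$ of bounded size, so the Horn-Horn formula $\psi_R$ produced for each constraint has size bounded by a constant; hence $|\Psi|$ is linear in $|\Phi|$. Feeding $\Psi$ into Outer-Res therefore costs time quadratic in $|\Psi|$, which is quadratic in $|\Phi|$. Since the overall running time is dominated by the quadratic Outer-Res stage, the composite algorithm runs in quadratic time and correctly decides satisfiability of $\Phi$.

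I expect essentially no genuine obstacle here, as all the substantive work has been discharged earlier: the correctness of the core-assignment reduction (the hard model-theoretic and universal-algebraic content, resting on the properties of $e$ and $i$ and their preservation of Horn-Horn structure) lives in Proposition~\ref{prop:core-reduction}, and the correctness and quadratic runtime of the nested two-level resolution procedure live in Theorem~\ref{thm:outer-res}. The only point requiring mild care is the complexity bookkeeping: one must confirm that fixing $\Gamma$ makes the per-constraint blowup in Lemma~\ref{lem:strongly-reduced} a constant rather than something depending on the instance, so that the linear-time claim for the reduction is legitimate and the final bound is genuinely quadratic and not higher. This is immediate because the number and size of the defining formulas $\phi_R$ depend only on the finite language $\Gamma$, not on $\Phi$.
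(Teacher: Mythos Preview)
Your proposal is correct and matches the paper's own treatment: the paper states this theorem immediately after Theorem~\ref{thm:outer-res} with only the remark that it follows by combining Proposition~\ref{prop:core-reduction} with Theorem~\ref{thm:outer-res}. Your additional bookkeeping about why the Horn-Horn instance $\Psi$ has size linear in $|\Phi|$ (because $\Gamma$ is fixed and finite) is a welcome clarification that the paper leaves implicit.
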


\section{Maximality}
\label{sect:maximal}
In this section we show that the class $\mathcal EI$ is a maximal tractable set constraint language.
More specifically, let $\Gamma$ be a set constraint language that
strictly contains all $\mathcal EI$ relations. 
We then show that
$\Gamma$ contains a finite set of relations $\Gamma'$ such that
already the problem $\Csp(\Gamma')$ is NP-hard (Theorem~\ref{thm:set-maximal}).

In our proof we use the so-called \emph{universal-algebraic approach} to the complexity of constraint satisfaction problems, 
which requires
that we re-formulate set CSPs as constraint satisfaction problems
for \emph{$\omega$-categorical} structures. For a more detailed
introduction to the universal-algebraic approach for $\omega$-categorical structures, see~\cite{Bodirsky-HDR}. 
A structure $\Gamma$ with a countable domain is called 
\emph{$\omega$-categorical}
if all countable structures that satisfy the same first-order sentences
as $\Gamma$ are isomorphic to $\Gamma$ (see e.g.~\cite{HodgesLong}). 
By the theorem of Ryll-Nardzewski, and for countable signatures, this is equivalent to requiring that every relation that is preserved by the automorphisms\footnote{An isomorphism of a structure $\Gamma$ with itself
is called an \emph{automorphism} of $\Gamma$.} of
$\Gamma$ is first-order definable in $\Gamma$ (see e.g.~\cite{HodgesLong}). The set of all automorphisms of $\Gamma$ is denoted by $\Aut(\Gamma)$.

It is well-known that all countable atomless\footnote{An \emph{atom} in a Boolean algebra is an element $x \neq {\bf 0}$ such that for all
$y$ with $x \cap y = y$ and $x \neq y$ we have $y=\bf 0$.
If a Boolean algebra does not contains atoms, it is called \emph{atomless}.} 
Boolean algebras are isomorphic (Corollary 5.16 in~\cite{KoppelbergBoolenAlgebras}; also see Example 4 on 
page 100 in~\cite{HodgesLong}); 
let $\mathfrak A$ denote such a countable atomless Boolean algebra.
Let $\mathbb A$ denote the domain of $\mathfrak A$. 
Again, we use $\sqcap$ and $\sqcup$ to denote join and meet in $\mathfrak A$, respectively.
Since the axioms of Boolean algebras and the property of not having atoms can all be written as first-order sentences, it follows that $\mathfrak A$ is $\omega$-categorical. 
A structure $\mathfrak B$ has \emph{quantifier elimination} if every first-order formula is over $\mathfrak B$ equivalent to a quantifier-free formula. 
It is well-known that $\mathfrak A$ has quantifier elimination (see Exercise 17 on Page 391 in~\cite{HodgesLong}). 
We will also make use of the following.

\begin{thm}[Corollary 5.7 in~\cite{MarriottOdersky}]
\label{thm:mo}
A quantifier-free formula is satisfiable in some infinite Boolean algebra if and only if it is satisfiable in all infinite Boolean algebras.
\end{thm}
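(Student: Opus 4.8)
The plan is to reduce satisfiability of a quantifier-free formula to a purely combinatorial condition that does not mention the ambient Boolean algebra, and then observe that this condition is decided identically by every infinite Boolean algebra. One direction is trivial since there is at least one infinite Boolean algebra, so I assume a quantifier-free formula $\phi(x_1,\dots,x_n)$ is satisfiable in some infinite Boolean algebra $\mathfrak C$ and fix an arbitrary infinite Boolean algebra $\mathfrak D$, aiming to satisfy $\phi$ in $\mathfrak D$ as well.

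First I would pass to the CNF form described after Theorem~\ref{thm:np}, so that $\phi$ is a conjunction of clauses whose literals are of the form $t = {\bf 1}$ or $t \neq {\bf 1}$. For the variables $x_1,\dots,x_n$ I introduce the $2^n$ \emph{cells} $\bigsqcap_i x_i^{\epsilon_i}$ (with $x^1 = x$ and $x^0 = \bar x$), which are the atoms of the subalgebra generated by the variables. Up to Boolean equivalence every term $t$ is the join of the cells it contains, so each literal translates into a condition on which cells equal ${\bf 0}$: writing $S_t$ for the set of cells comprising $\bar t$, the literal $t = {\bf 1}$ asserts that all cells in $S_t$ vanish, and $t \neq {\bf 1}$ asserts that some cell in $S_t$ is nonzero. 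Hence for any assignment $\beta$, whether $\phi$ holds under $\beta$ depends only on the \emph{zero pattern} $Z_\beta$, the set of cells evaluating to ${\bf 0}$: a clause is satisfied exactly when it contains a positive literal $t = {\bf 1}$ with $S_t \subseteq Z_\beta$ or a negative literal $t \neq {\bf 1}$ with $S_t \not\subseteq Z_\beta$.

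The key structural lemma I would isolate is that every infinite Boolean algebra admits, for each $k \geq 1$, a partition of ${\bf 1}$ into $k$ pairwise disjoint nonzero elements; this follows by contradiction, since a maximal such partition would have to consist of atoms, forcing the algebra to be finite. Granting this, a pattern $Z \subsetneq \mathrm{Cells}$ is realizable in a Boolean algebra $\mathfrak B$ precisely when $\mathfrak B$ admits a partition of ${\bf 1}$ into $|\mathrm{Cells} \setminus Z|$ nonzero pieces: assign these pieces to the nonzero cells, ${\bf 0}$ to the cells of $Z$, and recover each $x_i$ as the join of the appropriate cells. Thus in \emph{every} infinite Boolean algebra every pattern $Z \subsetneq \mathrm{Cells}$ is realizable, and $\phi$ is satisfiable there if and only if there exists a pattern $Z \subsetneq \mathrm{Cells}$ combinatorially satisfying all clauses in the sense above.

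This last condition is a finite combinatorial statement about subsets of the fixed set $\mathrm{Cells}$ and makes no reference to the algebra, so it holds for $\mathfrak C$ if and only if it holds for $\mathfrak D$. Since $\phi$ is satisfiable in $\mathfrak C$, such a $Z$ exists, and it then also witnesses satisfiability in $\mathfrak D$. I expect the main obstacle to be the width lemma together with the bookkeeping in the realizability step — in particular making precise that requiring \emph{exactly} the cells outside $Z$ to be nonzero (rather than merely a superset) is harmless, which is what lets the realizability question collapse to the single cardinality $|\mathrm{Cells} \setminus Z|$ that every infinite Boolean algebra can meet.
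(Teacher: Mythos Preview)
The paper does not give its own proof of this theorem; it simply cites it as Corollary~5.7 of Marriott and Odersky and uses it as a black box later in Section~\ref{sect:maximal}. So there is no argument in the paper to compare yours against.

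Your argument is correct. The reduction to the zero pattern of the $2^n$ cells is the standard device, and the observation that an infinite Boolean algebra realizes every proper subset $Z \subsetneq \mathrm{Cells}$ as a zero pattern (via a partition of ${\bf 1}$ into $|\mathrm{Cells} \setminus Z|$ nonzero pieces, then reading off each $x_i$ as the join of the appropriate pieces) is exactly what is needed to make satisfiability a purely combinatorial condition. One small point worth tightening: your justification of the width lemma is slightly compressed. A maximal finite partition of ${\bf 1}$ into nonzero pieces indeed consists of atoms, but ``consisting of atoms'' alone does not force finiteness; you also need that if ${\bf 1} = a_1 \sqcup \cdots \sqcup a_m$ with each $a_i$ an atom, then every element $x$ equals $\bigsqcup\{a_i : a_i \sqsubseteq x\}$, so the algebra has at most $2^m$ elements and is finite. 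This is routine, but it is the step that actually yields the contradiction.

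For context, the paper does prove a related but narrower statement, Fact~\ref{fact:ba}, which handles the special shape $t_1 = {\bf 1} \wedge t_2 \neq {\bf 1}$ and transfers satisfiability all the way down to the two-element algebra via an ultrafilter argument. Your cell-partition approach is more elementary (it avoids ultrafilters and the prime ideal theorem) and directly yields the general quantifier-free case over infinite algebras, at the cost of not reaching the two-element algebra --- which is fine here, since the two-element case genuinely fails for arbitrary quantifier-free formulas.
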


A fundamental concept in the complexity theory of constraint
satisfaction problems is the notion of \emph{primitive positive definitions}. A first-order formula is called \emph{primitive positive (pp)} 
if it is of the form $$ \exists x_1,\dots,x_n \, (\psi_1 \wedge \dots \wedge \psi_m)$$
where for each $i \leq m$ the formula $\psi_i$ is of the form
$R(y_1,\dots,y_l)$ or of the form $y_1 = y_2$, and where $R$ is a relation symbol and $y_1,y_2,\dots,y_l$ are either free variables or from $\{x_1,\dots,x_n\}$. We say that a $k$-ary relation $R \subseteq D^k$
is \emph{primitive positive definable (pp definable)} over a $\tau$-structure $\Gamma$ with domain $D$ iff there exists a primitive positive
formula $\phi(x_1,\dots,x_k)$ with the $k$ free variables $x_1,\dots,x_k$ such that a tuple $(b_1,\dots,b_k)$ is in $R$
if and only if $\phi(b_1,\dots,b_k)$ is true in $\Gamma$.

{\bf Example.} The relation 
$\{(x,y) \in {\cal P}({\mathbb N})^2 \; | \; x \sqsubset y \}$ is pp definable in $({\cal P}({\mathbb N}); S, \neq)$ where  $S = \{(x,y,z) \; | \; x \sqcap y \sqsubseteq z\}$. The pp definition is 
$S(x,x,y) \wedge x \neq y$ (the definition is even quantifier-free). \qed

{\bf Example.} The relation 
$\{(x_1,x_2,x_3,y) \in {\cal P}({\mathbb N})^4 \; | \; x_1 \sqcap x_2 \sqcap x_3 \sqsubseteq y \}$ is pp definable in $({\cal P}({\mathbb N}); S)$ where  $S = \{(x,y,z) \; | \; x \sqcap y \sqsubseteq z\}$. The pp definition is $\exists u \; (S(x_1,x_2,u) \wedge S(u,x_3,y))$. \qed

When
every relation of a structure $\Gamma$ is preserved by an operation $f$, then $f$ is called a \emph{polymorphism} of $\Gamma$. Note that polymorphisms of $\Gamma$ also preserve all relations that have a pp definition in $\Gamma$. 
The following has been shown for
finite domain constraint satisfaction in~\cite{JBK}; the easy
proof also works for infinite domain constraint satisfaction.

\begin{lem}\label{lem:pp-reduce}
Let $R$ be a relation with a primitive positive definition in 
a structure $\Gamma$. Then $\Csp({\Gamma})$
and the CSP for the expansion of $\Gamma$ by 
the relation $R$ are polynomial-time equivalent.
\end{lem}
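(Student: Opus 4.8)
The plan is to show that the CSP for the expanded structure, call it $\Gamma^+$, is polynomial-time equivalent to $\Csp(\Gamma)$ by establishing reductions in both directions. One direction is trivial: since $\Gamma$ is a reduct of $\Gamma^+$, every instance of $\Csp(\Gamma)$ is already an instance of $\Csp(\Gamma^+)$, so $\Csp(\Gamma)$ reduces to $\Csp(\Gamma^+)$ by the identity map. The substantive direction is the reduction of $\Csp(\Gamma^+)$ to $\Csp(\Gamma)$, and this is where the primitive positive definition of $R$ is used.

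For that direction, I would fix a pp definition $\phi(x_1,\dots,x_k) = \exists z_1 \dots \exists z_n \, (\psi_1 \wedge \dots \wedge \psi_m)$ of $R$ over $\Gamma$, where each conjunct $\psi_i$ is either an atomic $\Gamma$-constraint or an equality. Given an instance $\Phi$ of $\Csp(\Gamma^+)$ with variable set $V$, I would construct an instance $\Phi'$ of $\Csp(\Gamma)$ as follows: keep every constraint of $\Phi$ that already uses a symbol of $\Gamma$, and for each constraint $R(y_1,\dots,y_k)$ in $\Phi$, introduce $n$ fresh existential variables (a private copy for each such constraint) and add the conjuncts $\psi_1,\dots,\psi_m$ with the free variables $x_1,\dots,x_k$ substituted by $y_1,\dots,y_k$ and the bound variables $z_1,\dots,z_n$ substituted by the fresh copies. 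Equality conjuncts $y = y'$ are handled by identifying the two variables (replacing one consistently by the other throughout), so that the resulting $\Phi'$ is a genuine conjunction of $\Gamma$-atoms. This construction is clearly computable in polynomial time, since each of the polynomially many constraints contributes a constant number of new variables and atoms.

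The correctness claim is that $\Phi$ is satisfiable over $\Gamma^+$ if and only if $\Phi'$ is satisfiable over $\Gamma$. For the forward direction, a solution $s$ to $\Phi$ satisfies each $R(y_1,\dots,y_k)$, so by the definition of pp-definability there exist witnesses in the domain for the existentially quantified variables; assigning these witnesses to the corresponding fresh variables (and respecting the identifications coming from equality atoms) extends $s$ to a solution of $\Phi'$. Conversely, any solution to $\Phi'$ restricts to an assignment on $V$ under which each group of fresh variables witnesses the existential formula, so that $\phi(y_1,\dots,y_k)$ holds, i.e.\ $R(y_1,\dots,y_k)$ is satisfied; hence the restriction solves $\Phi$. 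I would then invoke Lemma~\ref{lem:pp-reduce}'s hypothesis only through this bookkeeping, noting that the argument is insensitive to the domain being infinite.

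I do not expect a genuinely hard obstacle here, as the proof is the standard substitution argument for pp-reductions; the only points requiring care are the correct handling of the equality atoms (via variable identification) and the use of \emph{fresh, constraint-local} copies of the existential variables, so that independent occurrences of $R$ do not accidentally share witnesses. As the excerpt already remarks, the argument is the same as in the finite-domain setting of~\cite{JBK} and uses nothing about finiteness of the domain, so it transfers verbatim to our infinite domain ${\cal P}({\mathbb N})$.
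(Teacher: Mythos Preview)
Your proposal is correct and is precisely the standard substitution argument the paper alludes to: the paper gives no proof of its own beyond remarking that the finite-domain argument from~\cite{JBK} carries over unchanged, and what you have written is exactly that argument. Your care with fresh constraint-local existential variables and with eliminating equality atoms by identification is appropriate, and nothing more is needed.
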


The following theorem is one of the reasons why it
is useful to work with $\omega$-categorical templates 
(when this is possible). 

\begin{thm}[from~\cite{BodirskyNesetrilJLC}]\label{thm:pp-pres}
Let $\Gamma$ be an $\omega$-categorical structure.
Then $R$ is primitive positive definable in $\Gamma$ 
if and only if $R$ is preserved by all polymorphisms of $\Gamma$.
\end{thm}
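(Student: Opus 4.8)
The plan is to prove the two directions separately, the forward direction being routine and the converse carrying all the content. For the forward direction, suppose $R$ has a primitive positive definition $\exists \bar z\,(\psi_1 \wedge \dots \wedge \psi_m)$ over $\Gamma$. Every polymorphism $f$ of $\Gamma$ preserves each atomic formula $\psi_i$ (by definition it preserves all relations of $\Gamma$, and it trivially preserves equality), hence it preserves their conjunction, and preservation is inherited when the existentially quantified variables $\bar z$ are projected away; thus $f$ preserves $R$. This is exactly the observation already recorded before the statement, and it uses no $\omega$-categoricity.

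For the converse, assume $R \subseteq D^k$ is preserved by all polymorphisms of $\Gamma$. I would first exploit $\omega$-categoricity via Ryll-Nardzewski: since every automorphism of $\Gamma$ is in particular a unary polymorphism, $R$ is invariant under $\Aut(\Gamma)$, hence a union of finitely many orbits of $k$-tuples and first-order definable; the same holds for $D^k \setminus R$. Because there are only finitely many orbits of $k$-tuples, there are only finitely many $k$-ary relations invariant under $\Aut(\Gamma)$, and a fortiori only finitely many primitive positive definable $k$-ary relations over $\Gamma$. Consequently the intersection $\bar R$ of all primitive positive definable relations containing $R$ is a \emph{finite} intersection of primitive positive definable relations, and so is itself primitive positive definable (conjoin the defining formulas). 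Since trivially $R \subseteq \bar R$, it suffices to prove $\bar R \subseteq R$.

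I would establish $\bar R \subseteq R$ by contraposition: fix $\bar b \in D^k \setminus R$ and aim to show $\bar b \notin \bar R$. Membership $\bar b \in \bar R$ means that $\bar b$ satisfies every primitive positive formula satisfied by all tuples of $R$, i.e. $\bar b$ realizes the primitive positive type of $R$. The key step converts this syntactic condition into a semantic one: from the fact that $\bar b$ is primitive positive implied by $R$ one produces a finitary operation $f$ on $D$ preserving all relations of $\Gamma$ together with tuples $\bar a^1,\dots,\bar a^n \in R$ such that $f(\bar a^1,\dots,\bar a^n) = \bar b$, where $f$ is applied componentwise as in the definition of preservation. Such an $f$ is a polymorphism, and since $R$ is preserved this forces $\bar b \in R$, a contradiction. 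Hence $\bar b \notin R$ entails $\bar b \notin \bar R$, giving $\bar R \subseteq R$ and therefore $R = \bar R$ is primitive positive definable.

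The main obstacle is precisely this construction of a single \emph{total} finitary polymorphism witnessing $\bar b$ as a componentwise image of tuples from $R$. When $R$ is infinite one cannot read $f$ off a finite matrix of rows as in the finite-domain Geiger argument; instead $f$ must be assembled from its restrictions to finite subsets of the domain by a compactness argument. Here $\omega$-categoricity is indispensable: finiteness of the number of orbits of $n$-tuples makes the tree of compatible partial operations finitely branching, so that a König's lemma (equivalently, a saturation or ultrafilter) argument yields a total operation whose restriction to every finite set preserves the relations of $\Gamma$, which is enough for it to be a genuine polymorphism. Checking that the partial operations are mutually consistent and that the limit operation both preserves $\Gamma$ and sends the chosen tuples of $R$ to $\bar b$ is the technical heart of the proof.
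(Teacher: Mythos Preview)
The paper does not prove this theorem; it is quoted from~\cite{BodirskyNesetrilJLC} and used as a black box. So there is no ``paper's own proof'' to compare against. Your outline is essentially the argument given in that reference: the forward direction is routine, and for the converse one takes the smallest pp-definable relation $\bar R \supseteq R$ (a finite intersection by Ryll--Nardzewski), and shows $\bar R \subseteq R$ by producing, for each $\bar b \in \bar R$, a polymorphism sending a fixed finite list of orbit representatives of $R$ to $\bar b$.

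One point deserves sharpening. You say the tree of partial operations is finitely branching because there are finitely many orbits of $n$-tuples. Taken literally the tree is \emph{not} finitely branching: a partial map into $D$ has infinitely many one-point extensions. What $\omega$-categoricity buys is that, modulo $\Aut(\Gamma)$, there are only finitely many extensions at each step, and any two extensions in the same orbit are interchangeable for the purpose of preserving the relations of $\Gamma$ and hitting $\bar b$. The K\"onig argument therefore runs on equivalence classes of partial operations under the diagonal action of $\Aut(\Gamma)$ on their graphs, and one picks representatives along the infinite branch. Once you make this explicit (and fix the orbit representatives $\bar a^1,\dots,\bar a^n$ of $R$ at the outset, so that the arity $n$ is determined before you start building $f$), the argument goes through exactly as in the cited source.
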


The previous and the next result together can be used to
translate questions about primitive positive definability into 
purely operational questions.
Let $D$ be a set, 
let ${\cal O}^{(n)}$ be $D^n \rightarrow D$, 
and let $\mathcal O$ be $\bigcup_{n=1}^\infty {\cal O}^{(n)}$ the set of operations on $D$ of finite arity.
An operation $\pi \in {\cal O}^{(n)}$ is called a \emph{projection} 
if for
some fixed $i \in \{1, \dots, n\}$ and 
for all $n$-tuples $(x_1,\dots,x_n) \in D^n$ we have the identity 
$\pi(x_1, \dots, x_n) = x_i$.
 The {\em composition} of a $k$-ary operation $f$ and $k$ operations
$g_1,\dots, g_k$ of arity $n$ is the $n$-ary operation defined by
\begin{align*}
& (f(g_1,\dots,g_k))(x_1,\dots,x_n) \\ 
= \quad & f\big(g_1(x_1,\dots,x_n),\dots,g_k(x_1,\dots,x_n)\big).\ \ 
\end{align*}

\begin{defn}
We say that ${\mathcal F} \subseteq \mathcal O$ \emph{locally generates} $f \colon D^n \rightarrow D$
if for every finite subset $A$ of $D^n$ there is an operation $g \colon D^n \rightarrow D$
that can be obtained from the operations in $\mathcal F$
and projection maps by composition such that $f(a)=g(a)$ for all $a \in A$.
\end{defn}

\begin{thm}[see~\cite{Szendrei}]\label{thm:loc-clos}
Let $\mathcal F \subseteq {\mathcal O}$ be a set of operations with domain $D$.
Then an operation $f \colon D^k \rightarrow D$ preserves all finitary
relations that are preserved by all operations in $\mathcal F$
if and only if $\mathcal F$ locally generates $f$.
\end{thm}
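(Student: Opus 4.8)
\section*{Proof proposal}

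The plan is to prove the two implications of this classical $\mathrm{Pol}$--$\mathrm{Inv}$ correspondence separately. The forward direction (local generation implies preservation) is a routine closure argument, whereas the converse (preservation implies local generation) is the substance of the theorem and is handled by a ``free algebra'' construction; the use of \emph{local} generation, rather than plain generation, is exactly what makes the statement correct when $D$ is infinite.

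First I would dispatch the easy direction. Suppose $\mathcal F$ locally generates $f \in {\cal O}^{(k)}$, and let $R \subseteq D^l$ be any relation preserved by every operation in $\mathcal F$. To see that $f$ preserves $R$, take $a^1,\dots,a^k \in R$ and consider the finite set $A = \{(a^1_j,\dots,a^k_j) \mid 1 \le j \le l\} \subseteq D^k$ of their coordinate tuples. By local generation there is an operation $g$ obtained from $\mathcal F$ and projections by composition with $f(a)=g(a)$ for all $a \in A$. Since projections preserve every relation and the set of operations preserving $R$ is closed under composition, $g$ preserves $R$, so $g(a^1,\dots,a^k) \in R$; as $f$ and $g$ agree on every coordinate tuple, this tuple equals $f(a^1,\dots,a^k)$. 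Hence $f$ preserves $R$.

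For the converse I assume that $f \in {\cal O}^{(k)}$ preserves every relation preserved by $\mathcal F$, and fix an arbitrary finite $A = \{\mathbf a^1,\dots,\mathbf a^m\} \subseteq D^k$. The key construction reads off, for each coordinate $j \le k$, the tuple $r_j := (\mathbf a^1_j,\dots,\mathbf a^m_j) \in D^m$, and takes $R \subseteq D^m$ to be the subalgebra of the power $(D;\mathcal F)^m$ generated by $r_1,\dots,r_k$. Concretely, $R$ is exactly the set of tuples $(h(\mathbf a^1),\dots,h(\mathbf a^m))$ where $h$ ranges over all $k$-ary operations generated from $\mathcal F$ and projections by composition. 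In particular $R$ is preserved by $\mathcal F$, so by hypothesis $f$ preserves $R$. Applying $f$ coordinatewise to $r_1,\dots,r_k \in R$ yields a tuple of $R$ whose $i$-th entry is $f(\mathbf a^i)$; by the explicit description of $R$ there is then an operation $g$ obtained from $\mathcal F$ and projections by composition with $g(\mathbf a^i)=f(\mathbf a^i)$ for all $i \le m$, that is, $f$ and $g$ agree on $A$. As $A$ was arbitrary, $\mathcal F$ locally generates $f$.

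The step I expect to require the most care is establishing the two properties of $R$ on which the argument pivots: that $R$ is invariant under $\mathcal F$ (immediate once one observes that a subalgebra of a power of $(D;\mathcal F)$ is closed under the coordinatewise action of $\mathcal F$), and that a tuple $(d_1,\dots,d_m)$ lies in $R$ \emph{precisely} when some generated operation $h$ satisfies $h(\mathbf a^i)=d_i$ for all $i$. This second equivalence is the heart of the proof, since it is what converts the abstract statement ``$f$ preserves the single relation $R$'' into a concrete operation $g$ agreeing with $f$ on the finite set $A$.
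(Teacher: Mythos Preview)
The paper does not supply a proof of this theorem at all; it is quoted from the literature with the attribution ``see~\cite{Szendrei}'' and used as a black box. There is therefore nothing in the paper to compare your argument against.

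That said, your proposal is correct and is precisely the standard argument one finds in the clone-theoretic literature (and presumably in the cited reference): the easy direction via closure of $\mathrm{Pol}(R)$ under composition and projections, and the substantive direction via the ``graph of the free algebra'' construction, taking $R$ to be the subpower of $(D;\mathcal F)^m$ generated by the coordinate columns $r_1,\dots,r_k$. Your identification of the two pivotal properties of $R$---invariance under $\mathcal F$, and the explicit description of its tuples as $(h(\mathbf a^1),\dots,h(\mathbf a^m))$ for term operations $h$---is exactly right, and the second of these is indeed the step that converts preservation into local interpolation on the given finite set. Nothing is missing.
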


In the following, we always consider sets of operations $\mathcal F$ that contain $\Aut(\mathfrak A)$, and therefore make the following convention. For ${\mathcal F} \subseteq {\mathcal O}$, we say
that $\mathcal F$ \emph{generates} $f \in {\mathcal O}$ if $\mathcal F \cup \text{Aut}(\mathfrak A)$ locally generates $f$.
We now define analogs of the operations $e$ and $i$, defined on $\mathbb A$ instead of ${\cal P}({\mathbb N})$.

\begin{prop}\label{prop:iso}
There is an isomorphism $\tilde i$ between ${\mathfrak A}^2$ and $\mathfrak A$.
\end{prop}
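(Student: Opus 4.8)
The plan is to exploit the uniqueness of the countable atomless Boolean algebra up to isomorphism, the fact already invoked in the text just before this statement (Corollary 5.16 in~\cite{KoppelbergBoolenAlgebras}): it suffices to show that the direct square ${\mathfrak A}^2$ is again a countable atomless Boolean algebra, for then ${\mathfrak A}^2 \cong {\mathfrak A}$ and we may take $\tilde i$ to be any witnessing isomorphism. This is the abstract analogue of Fact~\ref{fact:nat-iso}, where a concrete isomorphism $i$ between ${\mathfrak S}^2$ and ${\mathfrak S}$ was exhibited by hand. Working with the abstract atomless algebra ${\mathfrak A}$ rather than ${\mathfrak S}$ lets us avoid producing any explicit formula for $\tilde i$, which is exactly what we want, since later arguments will only use the existence of $\tilde i$ and the analogous operation $\tilde e$.

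First I would record the two routine points. Direct products of Boolean algebras are Boolean algebras, with all operations $\sqcap$, $\sqcup$, $c$, ${\bf 0}$, ${\bf 1}$ computed coordinatewise; hence ${\mathfrak A}^2$ is a Boolean algebra, the variety of Boolean algebras being closed under products. Since $\mathbb A$ is countable, so is $\mathbb A \times \mathbb A$. The only genuine verification is that ${\mathfrak A}^2$ has no atoms, and I would argue this directly from the definition of atom recalled in the footnote (an atom being a minimal nonzero element). Let $(a,b) \in \mathbb A^2$ with $(a,b) \neq ({\bf 0},{\bf 0})$; I must produce a strictly smaller nonzero element. By symmetry assume $a \neq {\bf 0}$. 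Since ${\mathfrak A}$ is atomless, there is $c \in \mathbb A$ with ${\bf 0} \sqsubsetneqq c \sqsubsetneqq a$. Then $(c,{\bf 0})$ satisfies $(c,{\bf 0}) \sqsubseteq (a,b)$, is distinct from $({\bf 0},{\bf 0})$ because $c \neq {\bf 0}$, and is distinct from $(a,b)$ because $c \neq a$. Thus $(a,b)$ is not an atom, and ${\mathfrak A}^2$ is atomless.

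Combining the three observations, ${\mathfrak A}^2$ is a countable atomless Boolean algebra and is therefore isomorphic to ${\mathfrak A}$; any such isomorphism serves as $\tilde i$. The only place where anything must be checked is the atomlessness of ${\mathfrak A}^2$, and even that reduces to splitting a single nonzero coordinate, which the atomlessness of ${\mathfrak A}$ supplies directly; I do not anticipate a real obstacle here, and the remainder is purely formal.
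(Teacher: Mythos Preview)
Your proposal is correct and follows exactly the same approach as the paper: the paper's entire proof is the single sentence ``It is straightforward to verify that ${\mathfrak A}^2$ is again a countable atomless Boolean algebra,'' relying on the uniqueness of the countable atomless Boolean algebra stated just before. You have simply spelled out the straightforward verification (countability, closure under products, and atomlessness via splitting a nonzero coordinate) that the paper leaves implicit.
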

\begin{proof}
It is straightforward to verify that ${\mathfrak A}^2$ is again
a countable atomless Boolean algebra.
\end{proof}

Motivated by the properties of $e$ described in Lemma~\ref{lem:e}, we make the following definition.

\begin{defn}
Let $\mathfrak B$ and ${\mathfrak B}'$ be two arbitrary
Boolean algebras with domains $B$ and $B'$,
respectively, and let 
$g \colon B \to B'$ be a function 
 that strongly preserves $\sqcap$, ${\bf 0}$, and ${\bf 1}$. 
  We say that
 $g$ \emph{forgets unions} if
 for all $k \geq 1$, $l \geq 0$, and
$x_1,\dots,x_k,y_1,\dots,y_l \in B$ 
 we have $$e(x_1) \sqcup \cdots \sqcup e(x_k) \sqcup \overline{e(y_1)} \sqcup \cdots \sqcup \overline{e(y_l)} = {\bf 1}$$ if and only if 
 there exists an $i \leq k$ such that 
$x_i \sqcup \overline{y_1} \sqcup \cdots \sqcup \overline{y_l} = {\bf 1}$.
\end{defn}


\begin{prop}~\label{prop:e-atomless-exists}
There exists an injection $\tilde e \colon {\mathbb A} \rightarrow {\mathbb A}$ that 
strongly preserves $\sqcap$, ${\bf 0}$, and ${\bf 1}$ 
in $\mathfrak A$, and that forgets unions.
\end{prop}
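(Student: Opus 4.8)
The plan is to imitate the construction of $e$ from Section~\ref{sect:ei}, but with the ``points'' $\mathbb N$ of the atomic algebra $\mathcal P(\mathbb N)$ replaced by the points of the Stone space of $\mathfrak A$, and then to cut the resulting (atomic) target algebra down to a countable atomless subalgebra so that both source and target are copies of $\mathfrak A$. Since $\mathfrak A$ is a countable atomless Boolean algebra, I identify it with the algebra of clopen subsets of its Stone space $X$; as $X$ is homeomorphic to the Cantor space, it has no isolated points, so every nonempty clopen set is infinite and splits into two nonempty clopen sets. For $a\in\mathbb A$ I write $\hat a\subseteq X$ for the corresponding clopen set.

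First I would define the analogue of the ``finite subsets'' map. Let $\mathcal K$ be the set of finite nonempty subsets of $X$, and define $\Phi\colon\mathbb A\to\mathcal P(\mathcal K)$ by $\Phi(a)=\{P\in\mathcal K\mid P\subseteq\hat a\}$. Exactly as for $e$ one checks that $\Phi$ is injective (the singletons in $\Phi(a)$ recover $\hat a$), that $\Phi(a\sqcap b)=\Phi(a)\cap\Phi(b)$, $\Phi({\bf 0})=\emptyset$ and $\Phi({\bf 1})=\mathcal K$, and that $\Phi$ forgets unions. For the last property, preservation of $\sqcap$ lets me rewrite the defining condition as $\Phi(y_1\sqcap\cdots\sqcap y_l)\subseteq\Phi(x_1)\cup\cdots\cup\Phi(x_k)$, so it suffices to check, for $k\ge 1$, that $\Phi(y)\subseteq\bigcup_{i\le k}\Phi(x_i)$ holds if and only if $y\sqsubseteq x_i$ for some $i$; the nontrivial implication is precisely the argument of Lemma~\ref{lem:e}: if $\hat y\not\subseteq\hat{x_i}$ for every $i\le k$, then picking $p_i\in\hat y\setminus\hat{x_i}$ yields $\{p_1,\dots,p_k\}\in\Phi(y)\setminus\bigcup_i\Phi(x_i)$.

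The main obstacle is that $\mathcal P(\mathcal K)$ is atomic, whereas $\tilde e$ must map $\mathbb A$ into an atomless copy of $\mathfrak A$. I would therefore let $B'\subseteq\mathcal P(\mathcal K)$ be the Boolean subalgebra generated by $\{\Phi(a)\mid a\in\mathbb A\}$, which is countable since $\mathbb A$ is, and prove that $B'$ is \emph{atomless}. A nonzero $b\in B'$ already lies in the subalgebra generated by finitely many $\Phi(a_1),\dots,\Phi(a_n)$; if $c_1,\dots,c_m$ are the atoms of the finite subalgebra of $\mathfrak A$ generated by $a_1,\dots,a_n$, then whether $P\subseteq\hat a_j$ depends only on which $\hat c_r$ the set $P$ meets, so the atoms of this finite subalgebra of $B'$ are the nonempty classes $E_T=\{P\in\mathcal K\mid P\text{ meets exactly the }\hat c_r\text{ with }r\in T\}$, and $b$ contains some $E_T$. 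To split $E_T$ inside $B'$, choose $r_0\in T$, write $\hat c_{r_0}=d_0\sqcup d_1$ with $d_0,d_1$ nonempty clopen, and take $a'\in\mathbb A$ with $\hat{a'}=\overline{\hat c_{r_0}}\cup d_0$. Then $E_T\cap\Phi(a')$ (the members of $E_T$ whose intersection with $\hat c_{r_0}$ lies in $d_0$) and $E_T\setminus\Phi(a')$ (those meeting $d_1$) are both nonempty elements of $B'$, so $b$ is not an atom. Hence $B'$ is atomless, and the perfectness of the Stone space is exactly what makes this step work.

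Finally, being a countable atomless Boolean algebra, $B'$ is isomorphic to $\mathfrak A$; I fix an isomorphism $\theta\colon B'\to\mathfrak A$ and set $\tilde e:=\theta\circ\Phi\colon\mathbb A\to\mathbb A$. As $\theta$ preserves $\sqcap$, ${\bf 0}$, ${\bf 1}$, the order $\sqsubseteq$ and finite joins, the map $\tilde e$ inherits from $\Phi$ injectivity and preservation of $\sqcap$, ${\bf 0}$, ${\bf 1}$ (hence \emph{strong} preservation, since $\tilde e$ is injective and an injective map that preserves a function strongly preserves it), as well as the forgetting-unions property, which is a statement about the order and finite joins transported through the isomorphism $\theta$. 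Everything apart from the atomlessness of $B'$ is a routine transcription of the properties of $e$ established in Section~\ref{sect:ei}.
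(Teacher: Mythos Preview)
Your argument is correct and complete; the one imprecision is the claim that the atoms of the finite subalgebra of $B'$ generated by $\Phi(a_1),\dots,\Phi(a_n)$ are exactly the nonempty $E_T$'s. In general those atoms are only \emph{unions} of $E_T$'s (e.g.\ with $n=1$ and $a_1\neq{\bf 0},{\bf 1}$ one atom is $E_{\{2\}}\cup E_{\{1,2\}}$). This does not matter: from your correct observation that membership in each $\Phi(a_j)$ depends only on the type $T(P)$ it follows that every nonzero $b$ in that finite subalgebra contains some $E_T$, and your splitting of $E_T$ by $\Phi(a')$ then shows $b\sqcap\Phi(a')$ and $b\setminus\Phi(a')$ are both nonzero in $B'$.

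Your route is genuinely different from the paper's. The paper does \emph{not} build $\tilde e$ globally: it invokes K\"onig's tree lemma for $\omega$-categorical structures, so that it only has to produce, for each finite substructure $\mathfrak B$ of $\mathfrak A$ with domain $B$, an injection $f\colon B\to\mathbb A$ that strongly preserves $\sqcap,{\bf 0},{\bf 1}$ and forgets unions. It does this by setting $g(x)=\{z\in B\mid z\neq{\bf 0},\,z\sqsubseteq x\}$ (with $g({\bf 1})={\bf 1}$), landing in the finite power-set algebra ${\cal P}(B)$, and then composing with any embedding of ${\cal P}(B)$ into $\mathfrak A$. Your construction replaces the compactness step by Stone duality: you build one global $\Phi$ with values in ${\cal P}(\mathcal K)$ and then have to argue that the generated subalgebra is atomless, which is where the perfectness of the Stone space does real work. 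The trade-off is that the paper's proof is shorter and leans on the ambient $\omega$-categoricity machinery already set up in Section~\ref{sect:maximal}, while yours is self-contained and yields an explicit $\tilde e$ without any limit argument.
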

\begin{proof}
The construction of $\tilde e$ is by a standard application of K\"onig's tree lemma for $\omega$-categorical structures (see e.g.~\cite{Bodirsky-HDR}); 
it suffices to show that there is an injection $f$ 
from every finite induced substructure
$\mathfrak B$ of $\mathfrak A$ to $\mathfrak A$ such that $f$ strongly preserves $\sqcap$, ${\bf 0}$, and ${\bf 1}$, and forgets unions. 

So let $\mathfrak B$ be such a finite substructure of $\mathfrak A$, and let $B$ be the domain of $\mathfrak B$.
Let ${\mathfrak C} = ({\cal P}(B); \sqcap, \sqcup, c,{\bf 0},{\bf 1})$ be the Boolean
algebra of the subsets of $B$. 
We claim that $g \colon B \rightarrow {\cal P}(B)$ given by
$g({\bf 1}) = {\bf 1}$ and $g(x) = \{ z \; | \; z \neq {\bf 0} \wedge z \sqsubseteq^{\mathfrak B} x\}$ for $x \neq {\bf 1}$
\begin{itemize}
\item preserves ${\bf 0}$ and ${\bf 1}$: this is by definition;
\item preserves $\sqcap$: for $x,y \in B$ (including the case that $x = {\bf 1}$ or $y = {\bf 1}$) we have 
\begin{align*}
g(x) \sqcap^{\mathfrak C} g(y) \, = & \, \{z \; | \; z \neq {\bf 0} \wedge z \sqsubseteq^{\mathfrak B} x \wedge z \sqsubseteq^{\mathfrak B} y\} \\
 = & \, \big \{ z \; | \; z \neq {\bf 0} \wedge z \sqsubseteq^{\mathfrak B} (x \sqcap^{\mathfrak B} y) \big \} \\
 = & \, g(x \sqcap^{\mathfrak B} y) \; ;
\end{align*}
\item is injective: if $x,y \in B$ such that $g(x)=g(y)$, then $x \sqsubseteq^{\mathfrak B} y$ and $y \sqsubseteq^{\mathfrak B} x$, and hence $x=y$; 
\item strongly preserves $\sqcap$: this follows from the previous two items;
\item forgets unions: 
This can be shown analogously to the proof of
Lemma~\ref{lem:e}. 
\end{itemize}
%
Clearly, there is an embedding $h$ from $\mathfrak C$ into $\mathfrak A$. 
Then $f:=h(g)$
is a homomorphism from $\mathfrak B$ to $\mathfrak A$ that forgets unions.
\end{proof}

\begin{prop}~\label{prop:e-atomless-correct}
Let $\phi$ be a quantifier-free first-order formula over the signature
$\{\sqcap,\sqcup,c,{\bf 0},{\bf 1}\}$. Then $e$ preserves $\phi$
over $\mathfrak S$ if and only if $\tilde e$ preserves $\phi$
over $\mathfrak A$. Moreover, every operation from ${\mathbb A} \rightarrow {\mathbb A}$ that strongly preserves
$\sqcap$, $\bf 0$, and $\bf 1$ and forgets unions generates
$\tilde e$, and is generated by $\tilde e$.
\end{prop}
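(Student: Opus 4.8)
The plan is to reduce the entire proposition to a single satisfiability question over Boolean algebras, and then to invoke Theorem~\ref{thm:mo} together with the clone-theoretic machinery of Theorem~\ref{thm:loc-clos}. The common engine is the following observation, valid for \emph{any} operation $f$ on \emph{any} infinite Boolean algebra $\mathfrak B$ that strongly preserves $\sqcap$, $\mathbf 0$, $\mathbf 1$ and forgets unions: whether $f$ preserves a quantifier-free formula $\phi$ is governed by a single quantifier-free formula $\phi^\ast$ over $\{\sqcap,\sqcup,c,\mathbf 0,\mathbf 1\}$ built from $\phi$ alone, which does not mention $f$.

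To construct $\phi^\ast$, first put $\phi=\bigwedge_C\bigvee_{L\in C}L$ in outer CNF, where each literal $L$ is $t=\mathbf 1$ or $t\neq\mathbf 1$ with $t=\bigsqcap_i c_i$ in inner CNF and each inner clause $c_i$ a union of literals. The key step is to rewrite, for each inner clause $c_i$, the condition ``$c_i(f(\bar a))=\mathbf 1$'' as a quantifier-free condition $\theta_{c_i}(\bar a)$ over $\mathfrak B$ not involving $f$: for a clause with at least one positive variable this is the disjunction, over its positive variables $x$, of the single-variable clauses $x\sqcup\overline{y_1}\sqcup\cdots\sqcup\overline{y_l}=\mathbf 1$ (the $y_j$ being the negative variables), exactly as in the forgets-unions identity (Lemma~\ref{lem:e} for $e$, Proposition~\ref{prop:e-atomless-exists} for $\tilde e$); for a purely negative clause the condition reduces, via strong preservation of $\sqcap$ and $\mathbf 0$, to the corresponding condition on $\bar a$. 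Hence ``$f(\bar a)$ violates the clause $C$'' is a quantifier-free formula $\Theta_C(\bar a)$, and $f$ violates $\phi$ if and only if $\phi^\ast:=\phi\wedge\bigvee_C\Theta_C$ is satisfiable over $\mathfrak B$. Since $\Theta_C$ is manufactured purely from the forgets-unions reformulation, the formula $\phi^\ast$ is literally the same for $e$ (over $\mathfrak S$, using Proposition~\ref{prop:nat-core} and Lemma~\ref{lem:e}) and for $\tilde e$ (over $\mathfrak A$, using Proposition~\ref{prop:e-atomless-exists}).

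The first assertion then follows immediately: $\mathfrak S$ and $\mathfrak A$ are infinite Boolean algebras, so by Theorem~\ref{thm:mo} the formula $\phi^\ast$ is satisfiable over $\mathfrak S$ if and only if it is satisfiable over $\mathfrak A$; thus $e$ violates $\phi$ over $\mathfrak S$ exactly when $\tilde e$ violates $\phi$ over $\mathfrak A$, which is the stated equivalence. The same argument, applied to two operations on the single algebra $\mathfrak A$, yields the uniform version I will need: \emph{every} operation $f\colon{\mathbb A}\to{\mathbb A}$ that strongly preserves $\sqcap,\mathbf 0,\mathbf 1$ and forgets unions preserves exactly the same quantifier-free definable relations as $\tilde e$.

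For the two generation statements I would pass to Theorem~\ref{thm:loc-clos}. Because $\mathfrak A$ is $\omega$-categorical and has quantifier elimination, the relations preserved by $\Aut(\mathfrak A)$ are precisely the quantifier-free definable ones. Applying Theorem~\ref{thm:loc-clos} to $\mathcal F=\{f\}\cup\Aut(\mathfrak A)$, the operation $\tilde e$ is generated by $f$ if and only if $\tilde e$ preserves every quantifier-free definable relation preserved by $f$; by the uniform version of the first assertion these are exactly the quantifier-free relations preserved by $\tilde e$, so the condition holds and $f$ generates $\tilde e$. Exchanging the roles of $f$ and $\tilde e$ in the same argument shows that $\tilde e$ generates $f$. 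The main obstacle is the construction of $\phi^\ast$ in the second paragraph: one must verify that ``forgets unions'' really does eliminate $f$ from $t(f(\bar a))=\mathbf 1$ uniformly, manage the inner-CNF bookkeeping and the $k=0$ edge case, and confirm that the resulting $\phi^\ast$ is one fixed formula common to $e$ and $\tilde e$, so that Theorem~\ref{thm:mo} can be applied to it. Everything after that is a formal consequence of Theorems~\ref{thm:mo} and~\ref{thm:loc-clos} and $\omega$-categoricity.
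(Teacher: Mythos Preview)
Your proposal is correct, and for the second (generation) assertion it is essentially identical to the paper's argument: both use quantifier elimination plus $\omega$-categoricity to reduce ``preserves the same relations'' to ``preserves the same quantifier-free definable relations'', and then invoke Theorem~\ref{thm:loc-clos}.

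For the first assertion your route differs from the paper's in an interesting way. The paper argues at the level of \emph{tuples}: given $\bar a$ in $\mathfrak A$, it produces $\bar b$ in $\mathfrak S$ with the same quantifier-free type (via the elementary fact that every finite Boolean algebra is a powerset algebra), and then observes directly from Lemma~\ref{lem:e} and the definition of ``forgets unions'' that the quantifier-free type of $e(\bar b)$ and of $\tilde e(\bar a)$ is determined by, and computed identically from, the common type of $\bar a$ and $\bar b$. You instead argue at the level of \emph{formulas}: you compile the condition ``$f$ violates $\phi$'' into a fixed quantifier-free $\phi^\ast$, valid uniformly for any $f$ with the stated properties, and then invoke Theorem~\ref{thm:mo} to transfer satisfiability of $\phi^\ast$ between $\mathfrak S$ and $\mathfrak A$. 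Both arguments rest on the same computation (the forgets-unions rewriting of each inner clause), so the distance between them is small; your version has the advantage of making the uniformity over all such $f$ explicit from the outset, which streamlines the passage to the generation statement, while the paper's version is a bit quicker and avoids the bookkeeping of assembling $\phi^\ast$ and the appeal to Theorem~\ref{thm:mo}.
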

\begin{proof}
Let $\bar a$ be a tuple of elements from $\mathbb A$.
Clearly, there exists a tuple $\bar b$ of elements from ${\cal P}({\mathbb N})$ such that $\bar a$ and $\bar b$ satisfy the same set $\psi$ of quantifier-free first-order formulas; this follows from the fact that every finite Boolean algebra is the Boolean algebra of subsets of a finite set.
Now observe that whether or not 
the tuple $e(\bar b)$
satisfies a quantifier-free first-order formula $\phi$ 
only depends on $\psi$, by Lemma~\ref{lem:e}. 
Since $\tilde e$ strongly preserves $\sqcap$, ${\bf 0}$, and ${\bf 1}$, and forgets unions, 
the same is true
for the quantifier-free first-order formulas that hold on $\tilde e(\bar a)$.
Hence, $\tilde e$ preserves $\phi$ over $\mathfrak A$ if and only if
$e$ preserves $\phi$ over $\mathfrak S$. 

To prove the second part of the statement, we use Theorem~\ref{thm:loc-clos}.
 Suppose that $\bar c$ and $\bar d$ are tuples
of elements from ${\mathbb A}$ that 
satisfy the same quantifier-free first-order formulas. By the equivalent characterization 
of $\omega$-categoricity mentioned above,
and the fact that ${\mathfrak A}$ has quantifier-elimination, there exists an automorphism
$\alpha$ of ${\mathfrak A}$ that maps 
$\bar c$ to $\bar d$. By the above observations
and Theorem~\ref{thm:loc-clos}, this implies that all operations that strongly preserve
$\sqcap$, $\bf 0$, and $\bf 1$, and forget unions generate each other.
\end{proof}

Let $\widetilde{ei}$ be the operation $(x,y) \mapsto \tilde e(\tilde i(x,y))$.  The following can be shown similarly to Proposition~\ref{prop:e-atomless-correct}. 

\begin{prop}~\label{prop:ei-atomless-correct}
Let $\phi$ be a quantifier-free first-order formula over the signature
$\{\sqcap,\sqcup,c,{\bf 0},{\bf 1}\}$. Then $ei$ preserves $\phi$
over $\mathfrak S$ if and only if $\widetilde{ei}$ preserves $\phi$
over $\mathfrak A$. Moreover, every binary operation $g$ that strongly
preserves $\sqcap$, ${\bf 0}$, and ${\bf 1}$, and forgets unions generates $\widetilde{ei}$, and is generated by $\widetilde{ei}$. \qed
\end{prop}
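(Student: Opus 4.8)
The plan is to mirror the proof of Proposition~\ref{prop:e-atomless-correct}, exploiting the factorizations $\widetilde{ei} = \tilde e \circ \tilde i$ and $ei = e \circ i$. The structural point is that $\tilde i$ is an isomorphism between $\mathfrak A^2$ and $\mathfrak A$ (Proposition~\ref{prop:iso}) and $i$ one between $\mathfrak S^2$ and $\mathfrak S$ (Fact~\ref{fact:nat-iso}), so each carries quantifier-free types faithfully; meanwhile the effect of $\tilde e$ (resp.\ $e$) on the quantifier-free type of its argument is completely governed by its strong preservation of $\sqcap$, ${\bf 0}$, ${\bf 1}$ together with the forgetting of unions. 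Consequently, whether $\widetilde{ei}(\bar a,\bar a')$ satisfies a quantifier-free formula $\phi$ depends only on the quantifier-free type of the combined tuple $(\bar a,\bar a')$ in $\mathfrak A^2$, and the analogous statement holds for $ei$ over $\mathfrak S$. This single observation drives both assertions.

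For the equivalence of preservation, I would take tuples $\bar a,\bar a'$ from $\mathbb A$ and, exactly as in Proposition~\ref{prop:e-atomless-correct}, choose tuples $\bar b,\bar b'$ from ${\cal P}({\mathbb N})$ so that $(\bar a,\bar a')$ and $(\bar b,\bar b')$ satisfy the same quantifier-free formulas; this is possible because the finite Boolean subalgebra generated by all entries of $\bar a,\bar a'$ is the powerset algebra of a finite set and hence embeds into both $\mathfrak A$ and $\mathfrak S$. Since $\tilde i$ and $i$ are isomorphisms of the respective squares onto the base algebras, the images $\tilde i(\bar a,\bar a')$ and $i(\bar b,\bar b')$ realize the same quantifier-free type. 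Then, invoking Lemma~\ref{lem:e} together with the strong preservation of $\sqcap$, ${\bf 0}$, ${\bf 1}$ and the forgetting of unions for $\tilde e$ and $e$, whether $\tilde e$ of the former and $e$ of the latter satisfy $\phi$ depends only on that common type, and hence the two verdicts agree. This yields that $\widetilde{ei}$ preserves $\phi$ over $\mathfrak A$ if and only if $ei$ preserves $\phi$ over $\mathfrak S$.

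For the generation statement I would apply Theorem~\ref{thm:loc-clos} as in Proposition~\ref{prop:e-atomless-correct}, reading ``forgets unions'' and ``strongly preserves $\sqcap$, ${\bf 0}$, ${\bf 1}$'' for a binary $g$ as properties of $g$ regarded as a map on the product algebra $\mathfrak A^2$. By the same type-analysis, such a $g$ produces, on any finite set of input pairs, outputs whose quantifier-free type is determined solely by the quantifier-free type of the inputs, and this is precisely the type produced by $\widetilde{ei}$. Since $\mathfrak A$ is $\omega$-categorical and has quantifier elimination, any two tuples of the same quantifier-free type are related by an automorphism of $\mathfrak A$; hence on each finite input set the values of $g$ and $\widetilde{ei}$ differ only by such an automorphism, and Theorem~\ref{thm:loc-clos} then gives that $g$ and $\widetilde{ei}$ generate one another.

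The main obstacle I anticipate is the bookkeeping of the \emph{joint} quantifier-free type of the two argument tuples. Lemma~\ref{lem:e} is phrased for a single tuple, so one must verify that, after passing through the isomorphism $\tilde i$, the forgetting-of-unions property on $\mathfrak A$ genuinely reduces every statement about $\widetilde{ei}$-images to a statement about the product $\mathfrak A^2$, i.e.\ a conjunction of conditions on $\bar a$ and on $\bar a'$ separately. Confirming that this reduction is tight -- that no union information survives beyond what the product type already records -- is the delicate step; once it is in place, both the transfer between $\mathfrak A$ and $\mathfrak S$ and the mutual generation follow formally.
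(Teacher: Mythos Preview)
Your proposal is correct and follows precisely the route the paper intends: the paper's own ``proof'' is simply the remark that Proposition~\ref{prop:ei-atomless-correct} is shown similarly to Proposition~\ref{prop:e-atomless-correct}, and your write-up spells out exactly that adaptation, factoring through the isomorphisms $i$ and $\tilde i$ and then invoking the type-determination argument via Lemma~\ref{lem:e}, quantifier elimination, and Theorem~\ref{thm:loc-clos}. The point you flag as the ``main obstacle'' --- that for a binary $g$ one must read ``forgets unions'' and strong preservation of $\sqcap,{\bf 0},{\bf 1}$ relative to the product algebra $\mathfrak A^2$ --- is indeed the only thing requiring care, and you handle it correctly.
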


We now give the central argument for the maximal tractability of $\mathcal EI$, stated in
universal-algebraic language.
We say that an operation from ${\mathbb A}^k \rightarrow {\mathbb A}$ \emph{depends} on the argument
$i \in \{1,\dots,k\}$ if there is no $(k{-}1)$-ary operation $f'$ such that for all $x_1,\dots,x_k \in {\mathbb A}$
$$f(x_1,
\dots, x_k) = f'(x_1, \dots, x_{i-1}, x_{i-1}, x_{i+1},\dots, x_k)\;.$$ 
We
can equivalently characterize $k$-ary operations that depend on the
$i$-th argument by requiring that there are $x_1, \dots,
x_k \in {\mathbb A}$ and $x_i' \in {\mathbb A}$ such that $$f(x_1, \dots, x_k) \neq f(x_1, \dots,
x_{i-1}, x'_i, x_{i+1}, \dots, x_k) \; .$$

\begin{thm}\label{thm:minimal}
Let $f$ be an operation generated by $\{\widetilde{ei}\}$. 
Then either $\{f\}$ generates $\widetilde{ei}$, or 
$f$ is generated by $\{\tilde e \}$.
\end{thm}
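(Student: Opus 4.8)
The plan is to analyze an arbitrary operation $f$ generated by $\{\widetilde{ei}\}$ by tracking how it acts on tuples, using the structural properties of $\widetilde{ei} = \tilde e \circ \tilde i$ established earlier. Since $\widetilde{ei}$ strongly preserves $\sqcap$, $\bf 0$, and $\bf 1$ and forgets unions, and since $\tilde i$ is an isomorphism ${\mathfrak A}^2 \cong {\mathfrak A}$ while $\tilde e$ merely embeds, any composition $f$ will itself strongly preserve $\sqcap$, $\bf 0$, and $\bf 1$. The key dichotomy I expect is about how many arguments $f$ genuinely depends on, in the sense defined just above the theorem. If $f$ depends on at most one argument, I claim $f$ is essentially unary and generated by $\{\tilde e\}$; if $f$ depends on at least two arguments, I claim $\{f\}$ generates $\widetilde{ei}$.

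First I would set up the unary case. If $f$ depends on no argument, it is constant, and a constant strongly preserving $\bf 0$ and $\bf 1$ forces a contradiction unless it collapses trivially, so this degenerate case is handled directly; more importantly, if $f$ depends on exactly one argument, say the $i$-th, then by definition $f(x_1,\dots,x_k) = f'(x_i)$ for a unary $f'$. This unary $f'$ is built from $\widetilde{ei}$ and projections, so it strongly preserves $\sqcap$, $\bf 0$, $\bf 1$; the remaining task is to show it forgets unions, whence by Proposition~\ref{prop:e-atomless-correct} it generates and is generated by $\tilde e$, giving the second alternative. Forgetting unions for $f'$ should follow from an induction on the composition length: a one-variable composite of $\widetilde{ei}$ with projections reduces to iterating $\tilde e$ (since collapsing both coordinates of $\tilde i$ to the same input produces, up to automorphism, an operation with the union-forgetting property), and iterates of a union-forgetting map that strongly preserves $\sqcap$ again forget unions.

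Next, for the case where $f$ depends on at least two arguments, the goal is to recover a binary operation $g$ that strongly preserves $\sqcap$, $\bf 0$, $\bf 1$ and forgets unions, so that Proposition~\ref{prop:ei-atomless-correct} yields that $g$ generates $\widetilde{ei}$ and hence $\{f\}$ does too. The strategy is to identify two arguments $i,j$ on which $f$ depends and specialize the remaining arguments to suitable fixed constants (in $\{{\bf 0},{\bf 1}\}$ or generic elements), obtaining a binary $g(x,y) = f(\dots,x,\dots,y,\dots)$. Strong preservation of $\sqcap$, $\bf 0$, $\bf 1$ is inherited from $f$. The substantive point is that $g$ must genuinely mix its two arguments rather than factoring through one coordinate, and that this mixing, combined with the union-forgetting inherited up the composition tree, forces $g$ to forget unions in the precise sense required.

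The hard part will be this last step: proving that a dependent binary restriction $g$ of $f$ actually forgets unions, rather than merely strongly preserving $\sqcap$, $\bf 0$, $\bf 1$. The difficulty is that compositions of $\widetilde{ei}$ can in principle build operations whose union-behavior is subtle, and I must rule out the possibility of a binary dependent operation that strongly preserves meets but fails the union-forgetting biconditional. I expect to handle this by a careful induction on the structure of the composition tree defining $f$: the outermost application is either $\tilde i$ (an isomorphism, so it transports the union-forgetting property of its subterms) or $\tilde e$ (which forgets unions by construction), and in the binary-dependent case at least one genuine $\tilde i$-branching must occur to create the dependence on two inputs, which is exactly what injects the $\widetilde{ei}$-like union-forgetting behavior into $g$. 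Verifying that this branching cannot be undone by later applications — i.e.\ that union-forgetting is stable under the operations in the generating set and under taking dependent binary minors — is where the real work lies, and I would isolate it as the central lemma underlying the theorem.
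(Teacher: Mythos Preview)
Your overall dichotomy (essentially unary versus depending on $\geq 2$ arguments) matches the paper's, but there is a genuine gap in how you plan to execute both cases: you repeatedly invoke ``induction on the composition length'' or ``the structure of the composition tree defining $f$'', and no such tree exists. By the paper's convention, ``$f$ is generated by $\{\widetilde{ei}\}$'' means that $\{\widetilde{ei}\} \cup \Aut(\mathfrak A)$ \emph{locally} generates $f$: for each finite $A \subseteq \mathbb A^k$ there is a term $T_A$ (possibly different for each $A$) agreeing with $f$ on $A$. So $f$ need not be any single composition, and an induction on a composition tree for $f$ does not get off the ground.

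The paper's proof handles this with a finite-witness argument. In the unary case, either $f'$ agrees with an automorphism on every finite set (then $f$ is generated by $\Aut(\mathfrak A)$, hence by $\{\tilde e\}$, and we are done --- note that in this branch $f'$ does \emph{not} forget unions, contrary to what your plan asserts), or there is a finite $S$ on which no automorphism matches $f'$. Given any $u_1,\dots,u_m,v_1,\dots,v_n$, one picks a term $T$ agreeing with $f'$ on $S \cup \{u_1,\dots,v_n\}$; the presence of $S$ forces $T$ to have the form $a \circ \widetilde{ei}(T_1,T_2)$ for some automorphism $a$, and union-forgetting for that particular tuple then follows from union-forgetting of $\widetilde{ei}$ and strong preservation of $\sqcap$ by $T_1$. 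The binary case is similar, with an extra case split on which arguments $T_1$ and $T_2$ depend on.

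A secondary issue: obtaining the binary $g$ by plugging in constants from $\{{\bf 0},{\bf 1}\}$ or generic elements can destroy strong preservation of ${\bf 0}$ and ${\bf 1}$ (e.g.\ $g({\bf 0},{\bf 0}) = f({\bf 0},c,{\bf 0},\dots)$ need not be ${\bf 0}$). The paper instead identifies the non-depended-on arguments with a depended-on one, setting $f'(x_1,\dots,x_l) = f(x_1,\dots,x_l,x_l,\dots,x_l)$ and $g(x,y) = f'(x,y,\dots,y)$, and uses injectivity of $f'$ (established via local generation together with the finite witnesses to dependence on each argument) to ensure that $g$ is injective and hence genuinely depends on both arguments.
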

\begin{proof}
To show the statement of the theorem, 
let $f$ be a $k$-ary operation generated by $\{\widetilde{ei}\}$.
For the sake of notation, let $x_1,\dots,x_l$ be the arguments on which $f$ depends, for $l \leq k$. Let
$f' \colon {\mathbb A}^l \rightarrow {\mathbb A}$ be the operation given by $f'(x_1,\dots,x_l) = f(x_1,\dots,x_l,\dots,x_l)$. Clearly, $f'$ must be injective
(since it is generated from an injective operation and depends on all arguments).
Since $f'$ is generated
by $\widetilde{ei}$ it preserves $\sqcap$, ${\bf 0}$, ${\bf 1}$, and since $f'$ is injective,
it also \emph{strongly} preserves those functions. 

Consider first the case that $l=1$, i.e., $f'$ is unary. 
If for all finite subsets of $\mathbb A$,
the operation $f'$ equals an automorphism of $\mathfrak A$, then
$f$ is generated by $\Aut(\mathfrak A)$ and there is nothing to show.
So assume otherwise; that is,
assume that there is a finite set $S \subseteq {\mathbb A}$ such that there is no
$a \in \Aut(\mathfrak A)$ with $f'(x) = a(x)$ for all $x \in S$. We claim that 
$f'$ forgets unions. 
To see this, let $u_1,\dots,u_m,v_1,\dots,v_n$ be from $\mathbb A$ such that
$f'(u_1) \sqcup \cdots \sqcup f'(u_m) \sqcup \overline{f'(v_1)} \sqcup \cdots \sqcup \overline{f'(v_n)} = {\bf 1}$.
Since $f'$ is generated by $\{\tilde{ei}\}$,
there is a term composed from automorphisms of $\mathfrak A$ and $\widetilde{ei}$ such that $f'(x) = T(x)$ for all $x \in S \cup \{u_1,\dots,u_m,v_1,\dots,v_n\}$. By the choice of $S$, this term cannot be composed of automorphisms alone, and hence
there must be $a \in \Aut(\mathfrak A)$ and operational terms $T_1,T_2$ composed from automorphisms of $\mathfrak A$ and $\widetilde{ei}$ such that $f'(x) = a(\widetilde{ei}(T_1(x),T_2(x)))$ for all $x \in S$. 
As $\widetilde{ei}$ forgets unions, there exists an $i \leq k$
such that $T_1(u_i) \sqcup \overline{T_1(v_1)} \sqcup \cdots \sqcup \overline{T_l(v_n)} = {\bf 1}$. Since $T_1$ strongly preserves $\sqcap$ we conclude
that there exists an $i$ such that
$u_i \sqcup \overline{v}_1 \sqcup \cdots \sqcup \overline{v}_n = {\bf 1}$  (see the proof of Proposition~\ref{prop:inner-horn}), which is what we wanted to show. 
By Proposition~\ref{prop:e-atomless-correct} it follows that $f'$ is generated by
$\tilde e$. But then $f$ is generated by $\tilde e$ as well. 

Next, consider the case that $l>1$. Let $g$ be the binary operation
defined by $g(x,y) = f'(x,y,\dots,y)$; since $f'$ is injective, 
the operation $g$ will also be injective, and in particular depends on both arguments,
and strongly preserves ${\bf 0}$, ${\bf 1}$, and $\sqcap$. We claim that
$g$ forgets unions. 
Let $u_1 = (u^1_1,u_1^2),\dots, u_m =(u^1_m,u^2_m), v_1 = (v_1^1,v_1^2),\dots,v_n = (v_n^1,v^2_n)$ be from ${\mathbb A}^2$ such that
$g(u_1) \sqcup \cdots \sqcup g(u_m) \sqcup \overline{g(v_1)} \sqcup \cdots \sqcup \overline{g(v_n)} = {\bf 1}$.
Since $g$ is generated by $\widetilde{ei}$
and cannot be generated by the automorphisms of $\mathfrak A$ alone,
there is a term of the form 
$T(x,y) = a(\widetilde{ei}(T_1(x,y),T_2(x,y)))$ where 
\begin{itemize}
\item $a \in \text{Aut}(\mathfrak A)$, 
\item $T_1$ and $T_2$ are operational terms composed from automorphisms 
of $\mathfrak A$ and $\widetilde{ei}$,
\item $g(x,y) = T(x,y)$ for all 
$(x,y) \in \{u_1,\dots,u_m,v_1,\dots,v_n\}$. 
\end{itemize}
Since $\widetilde{ei}$ forgets unions,
there exists an $i \leq k$ such that
$T_1(u_i) \sqcup \overline{T_1(v_1)} \sqcup \cdots \sqcup \overline{T_1(v_n)} = {\bf 1}$
and $T_2(u_i) \sqcup \overline{T_2(v_1)} \sqcup \cdots \sqcup \overline{T_2(v_n)} = {\bf 1}$. Suppose first that $T_1$ depends on both arguments. Then $T_1$ defines an injective
operation and strongly preserves $\sqcap$. It follows 
that $u_i \sqcup \overline{v}_1 \sqcup \cdots \sqcup \overline{v}_n = {\bf 1}$ in ${\mathfrak A}^2$ since these equations are inner Horn.
We can argue similarly if $T_2$ depends on
both arguments, and in those cases 
we have established that $g$ forgets unions. 
Suppose now that each of $T_1$ and $T_2$
does \emph{not} depend on both arguments. Consider 
first the case that $T_1$ only depends on the first argument. Then the function $x \mapsto T_1(x,x)$ is injective and strongly preserves $\sqcap$, and 
from $T_1(u_i) \sqcup \overline{T_1(v_1)} \sqcup \cdots \sqcup \overline{T_1(v_n)} = {\bf 1}$
we derive as above that $u^1_i \sqcup \overline{v^1_1} \sqcup \cdots \sqcup \overline{v^1_n} = {\bf 1}$ holds in ${\mathfrak A}$. In this case, $T_2$ must depend on the second argument, since $T$ depends on both arguments. We therefore also have
that $u^2_i \sqcup \overline{v^2_1} \sqcup \cdots \sqcup \overline{v^2_n} = {\bf 1}$ holds in ${\mathfrak A}$. 
The situation that 
$T_1$ only depends on the second argument
and $T_2$ only depends on the first argument is analogous. So $g$ forgets unions. 
By Proposition~\ref{prop:ei-atomless-correct}, $g$ generates 
$\widetilde{ei}$. Consequently, also $f$
generates $\widetilde{ei}$.
\end{proof}

\begin{defn}
Let $U, I \subseteq {\mathbb A}^3$ be the following relations
\begin{align*}
U := & \{(x,y,z)  \; | \; (x \sqcup y = z) \} \\
I := & \{ (x,y,z)  \; | \; (x \sqcap y = z) \}
\end{align*}
\end{defn}

Note that both $U$ and $I$ are preserved by $\tilde i$. 
The following demonstrates that when $\Gamma$ has the polymorphism
$\tilde i$, this does not suffice for tractability of $\Csp(\Gamma)$.

\begin{prop}\label{prop:hard}
Let $\Gamma$ be the structure with domain $\mathbb A$ and three
relations $U$, $I$, and $\neq$. Then $\Csp(\Gamma)$ is NP-hard.
\end{prop}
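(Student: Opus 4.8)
The plan is to prove NP-hardness by reducing from a standard NP-complete problem whose structure is naturally captured by the relations $U$ (union), $I$ (intersection), and $\neq$ over the atomless Boolean algebra $\mathfrak A$. A convenient source is \textbf{positive one-in-three SAT} or, more directly, \textbf{graph $3$-colorability}; the latter tends to be cleaner here because the Boolean-algebra structure lets us encode a small finite gadget and then propagate it. First I would observe that $\mathfrak A$ contains elements behaving like the values of a small finite Boolean algebra, so I can pin down a fixed finite set of ``colors'' as distinguished algebra elements using only $U$, $I$, and $\neq$.

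The key steps, in order, are as follows. First, I would use $U$ and $I$ together with $\neq$ to primitive-positive define auxiliary relations that let me force variables to take one of finitely many values. The crucial point is that $U(x,y,z)$ asserts $x \sqcup y = z$ and $I(x,y,z)$ asserts $x \sqcap y = z$, so by introducing existentially quantified auxiliary variables and fixing certain of them to the constants $\mathbf 0$ and $\mathbf 1$ (which are themselves pp-definable: $\mathbf 0$ is the unique element with $U(x,x,x)$ and $I(x,\mathbf 1,x)$ type relations, and dually for $\mathbf 1$), I can express complementation, containment $x \sqsubseteq y$, and disjointness. Second, using complementation and $\neq$ I would carve out a two-element or few-element substructure and force each variable of the input instance to lie in it, effectively reducing the infinite domain to a finite palette of colors. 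Third, I would express the ``proper coloring'' or ``not-all-equal'' constraints of the chosen NP-complete problem as pp formulas over $U$, $I$, $\neq$, so that a satisfying assignment of the constructed $\Csp(\Gamma)$ instance corresponds exactly to a solution of the source problem. By Lemma~\ref{lem:pp-reduce}, any relation pp-definable in $\Gamma$ may be added without changing the complexity, so it suffices to exhibit such pp definitions.

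I expect the main obstacle to be controlling the \emph{infinitude and atomlessness} of $\mathfrak A$: unlike a two-element Boolean algebra, there are infinitely many elements and no atoms, so a naive encoding does not automatically restrict variables to a finite set of colors. The hard part will be engineering pp definitions that nail down a finite gadget despite this, i.e.\ forcing the relevant variables into a finite Boolean subalgebra using only $U$, $I$, and $\neq$. The likely resolution is to fix a few ``coordinate'' elements $p_1,\dots,p_r$ via pp constraints (pairwise disjoint, jointly covering $\mathbf 1$, each nonempty) and then require every variable to be a union of a subset of these atoms of the gadget; the relations $U$ and $I$ suffice to state closure under the Boolean operations on this finite frame, while $\neq$ supplies the distinctness needed to separate colors. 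Once the finite palette is secured, encoding $3$-colorability (edges as $\neq$ constraints, vertices restricted to three values) is routine, completing the NP-hardness reduction.
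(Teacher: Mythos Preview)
Your plan has a real gap at the step you yourself flag as the hard one. First, the constants ${\bf 0}$ and ${\bf 1}$ are \emph{not} pp-definable in $(\mathbb A; U, I, \neq)$: for any $a$ with ${\bf 0} \sqsubsetneq a \sqsubsetneq {\bf 1}$ the interval $[{\bf 0},a]$ is again a countable atomless Boolean algebra, and an isomorphism $\mathfrak A \to [{\bf 0},a]$ is an injective lattice homomorphism of $\mathfrak A$ into itself that preserves $U$, $I$, $\neq$ but sends ${\bf 1}$ to $a$. Your suggested witnesses $U(x,x,x)$ and $I(x,{\bf 1},x)$ hold for every $x$. Second, and more seriously, the ``finite palette'' step cannot be carried out at all. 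Any relation that would pin a variable to one of finitely many prescribed values---for instance the $4$-ary relation $v \in \{p_1,p_2,p_3\}$, or the binary relation ``$v$ does not split $p$'' (i.e.\ $v \sqcap p \in \{{\bf 0},p\}$)---is violated by the isomorphism $\tilde i \colon \mathfrak A^2 \to \mathfrak A$. Since $\tilde i$ preserves $U$, $I$, and $\neq$, it preserves every conjunction of such constraints; hence from two solutions in which a vertex variable takes two different colours one obtains a third solution in which it takes neither. So no instance over $\{U,I,\neq\}$ can force its variables into a finite set of colours, and the backward direction of your $3$-colouring reduction breaks.

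The paper sidesteps both issues. It reduces directly from $3$-SAT, introducing \emph{variables} $t$ and $f$ (not the constants ${\bf 1},{\bf 0}$) constrained only by $t \sqcap f = f$ and $t \neq f$; each propositional variable $x$ becomes a complementary pair $x_t, x_f$ in the interval $[f,t]$ via $U(x_t,x_f,t)$ and $I(x_t,x_f,f)$, and a clause becomes a chain of $U$-constraints with result $t$. The key idea you are missing is that the backward direction makes no attempt to force a finite range. Instead, given an arbitrary solution $\beta$, one passes to the \emph{finite} Boolean subalgebra generated by the values $\beta(V)$, picks an atom $a$ below $\beta(t)$, and reads off a Boolean assignment by $\alpha(x)=1 \Leftrightarrow a \sqsubseteq \beta(x_t)$. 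Atomicity of $a$ is exactly what makes the clause constraints translate back to propositional satisfaction. If you want to keep your source problem (positive $1$-in-$3$ SAT works cleanly), you must replace the palette idea with this atom-selection argument.
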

\begin{proof}
The proof is by reduction from 3SAT. We compute from
a given 3SAT instance $\Phi$ with variable set $V$ an instance $\Psi$ of $\Csp({\Gamma})$ (in polynomial time) as follows. There are distinguished variables $t$ and $f$ in $\Psi$. 
For each variable $x$ of $\Phi$ there are two variables 
$x_t$ and $x_f$ in $\Psi$.
For a clause $C := \{l_1,l_2,l_3\}$ of $\Phi$ (with literals $l_i$ either of the form $x$ or of the form $\neg x$) we create a fresh variable $u_C$, and add the constraints
$U(v_1,v_2,u_C)$ and $U(u_C,v_3,t)$ where $v_i := x_f$ if $l_i = \neg x$, and $v_i := x_t$ if $l_i = x$. Moreover, we add for each variable
$x \in V$ the constraints $U(x_t,x_f,t)$ and $I(x_t,x_f,f)$. 
Finally, add the constraint $t \neq f$ and $I(t,f,f)$. 

It is clear that if $\Phi$ has the satisfying assignment $\alpha \colon V \rightarrow \{0,1\}$ then the following assignment $\beta$ satisfies
all constraints in $\Psi$. 
Choose $S_f \subsetneq S_t$ arbitrarily. 
Then $\beta$ maps $t$ to $S_t$ and $f$ to $S_f$, 
it maps $x_t \in V$ to $S_t$ if $\alpha(x) = 1$
and to $S_f$ otherwise, 
it maps $x_f \in V$ to $S_f$ if $\alpha(x) = 1$
and to $S_t$ otherwise, 
and for every clause 
$C=\{l_1,l_2,l_3\}$ of $\Phi$ it maps 
$u_C$ to $S_t$ if $\alpha(l_1)=1$ or $\alpha(l_2)=1$, and to $S_f$ otherwise.

Conversely, suppose that $\beta$ maps the variables of $\Psi$ 
to the elements of $\Gamma$ satisfying all constraints of $\Psi$. 
Let $\mathfrak B$ be
the finite Boolean algebra that is generated 
by $\beta(V)$ in $\Gamma$. Since $\beta(f) \subsetneq \beta(t)$, 
we have that $\beta(t)$ is non-empty.
Select an arbitrary atom $a$ of $\mathfrak B$ 
that is contained in $\beta(t)$. 
Then we set $\alpha(x)$ for $x \in V$
to $1$ if $a \subseteq \beta(x_t)$ and to $0$ otherwise. In this way
all clauses $\{l_1,l_2,l_3\}$ of $\Phi$ are satisfied.
To see this, assume for simplicity of presentation that $l_1 = \neg x$ is negative and $l_2 = y$ and $l_3 = z$ are positive; the general case is analogous. 
Since we have the constraints $U(x_f,y_t,u_C)$ and $U(u_C,z_t,t)$,
and since $a$ is an atom of $\mathfrak B$, 
one of $\beta(x_f),\beta(y_t),\beta(z_t)$ must contain $a$.
If $a$ is in $\beta(y_t)$ or $\beta(z_t)$ then $\alpha(y)$ or $\alpha(z)$ is set to $1$. If $a$ is in $\beta(x_f)$, then the clause $I(x_t,x_f,f)$
forces that $a$ is not in $\beta(x_t)$, and hence $\alpha(x)$ is set to $0$. Thus, $\alpha$ sets at least one of $\neg x,y,z$ to $1$, and the clause $C$ is satisfied.
\end{proof}

\begin{thm}\label{thm:set-maximal}
Let $\Gamma$ be a set constraint language.
Suppose that $\Gamma$ contains all relations from $\mathcal EI$, and also contains a relation that is not from $\mathcal EI$. Then there is a finite sublanguage $\Gamma'$ of $\Gamma$ such that  $\Csp(\Gamma')$ is NP-hard.
\end{thm}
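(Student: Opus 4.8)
The plan is to run the universal-algebraic machinery over the countable atomless Boolean algebra $\mathfrak A$. First I would move the whole problem from $\mathfrak S$ to $\mathfrak A$: since every relation of $\Gamma$ is quantifier-free definable, each instance of $\Csp(\Gamma)$ is (the satisfiability of) a quantifier-free formula, and by Theorem~\ref{thm:mo} its satisfiability over $\mathfrak S$ and over $\mathfrak A$ coincide, so I may treat $\Gamma$ as a structure on the domain $\mathbb A$. Because $\Gamma$ contains $\sqsubseteq$, $I$, and the unary relations $\{{\bf 0}\},\{{\bf 1}\}$ (all of which lie in $\mathcal{EI}$), any permutation preserving these is a lattice automorphism of $\mathfrak A$ fixing ${\bf 0},{\bf 1}$, hence an automorphism of $\mathfrak A$; thus the reformulation is $\omega$-categorical with automorphism group $\Aut(\mathfrak A)$, and Theorems~\ref{thm:pp-pres} and~\ref{thm:loc-clos} apply.

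The heart of the argument is to pin down the polymorphism clone. Since $\Gamma$ contains every $\mathcal{EI}$ relation, i.e.\ every relation preserved by $\widetilde{ei}$ (Proposition~\ref{prop:ei-atomless-correct}), Theorem~\ref{thm:loc-clos} yields that every polymorphism of $\Gamma$ is generated by $\widetilde{ei}$. On the other hand, $\Gamma$ contains a relation $R\notin\mathcal{EI}$, so $\widetilde{ei}$ violates $R$, whence $\widetilde{ei}$ is not a polymorphism of $\Gamma$; as the polymorphisms of $\Gamma$ form a locally closed clone containing $\Aut(\mathfrak A)$, they cannot even \emph{generate} $\widetilde{ei}$ (otherwise $\widetilde{ei}$ would preserve $R$). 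Now I apply the dichotomy of Theorem~\ref{thm:minimal} to each polymorphism $f$: either $\{f\}$ generates $\widetilde{ei}$, which the previous sentence forbids, or $f$ is generated by $\tilde e$. Hence \emph{every} polymorphism of $\Gamma$ is generated by $\tilde e$.

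It remains to exhibit an NP-hard template that is pp-definable in $\Gamma$. The tempting choice $\{U,I,\neq\}$ of Proposition~\ref{prop:hard} is \emph{not} usable here: since $\tilde e$ forgets unions it violates $U$ (cf.\ Proposition~\ref{prop:nat-core}), so $U$ need not be preserved by the polymorphisms of $\Gamma$ and need not be pp-definable. Instead I would use Drakengren and Jonsson's hard relations, those defined by $x_1\neq y_1\vee\dots\vee x_k\neq y_k\vee u_1\disj v_1\vee\dots\vee u_k\disj v_k$. The key observation is that each of these is preserved by $\tilde e$: as $\tilde e$ is injective it preserves each disequality disjunct, and as $\tilde e$ strongly preserves $\sqcap$ and ${\bf 0}$ it preserves each disjointness disjunct $u\disj v$ (that is, $u\sqcap v={\bf 0}$); applied to a tuple in the relation it therefore keeps at least one disjunct true. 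Consequently these relations are preserved by every operation generated by $\tilde e$, hence by the whole polymorphism clone of $\Gamma$, and so are pp-definable in $\Gamma$ by Theorem~\ref{thm:pp-pres}.

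Finally I would assemble the finite sublanguage: Drakengren and Jonsson's NP-hardness uses only finitely many such relations, and each has a pp-definition in $\Gamma$ employing finitely many relations of $\Gamma$; collecting all of these into a finite $\Gamma'$ and invoking Lemma~\ref{lem:pp-reduce} shows that $\Csp(\Gamma')$ is NP-hard. The main obstacle I anticipate is \emph{not} the clone computation, where Theorem~\ref{thm:minimal} does the real work, but rather the realization that the natural gadget $\{U,I,\neq\}$ is unavailable and must be replaced by a template that survives $\tilde e$; the delicate points are thus verifying that the Drakengren--Jonsson disjointness-disjunctions are precisely such $\tilde e$-invariant templates, and carrying out the Galois-theoretic bookkeeping correctly (local closedness of the polymorphism clone, and the passage from ``$\{f\}$ generates $\widetilde{ei}$'' to a contradiction).
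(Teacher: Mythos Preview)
Your proposal is correct and follows essentially the same route as the paper: transfer to $\mathfrak A$ via Theorem~\ref{thm:mo}, use Theorem~\ref{thm:loc-clos} to see that every polymorphism is generated by $\widetilde{ei}$, rule out that the polymorphisms generate $\widetilde{ei}$, apply Theorem~\ref{thm:minimal} to conclude that every polymorphism is generated by $\tilde e$, and then pp-define an NP-hard template preserved by $\tilde e$.

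The only genuine difference is the choice of hard gadget. You reach for Drakengren--Jonsson's disjointness-disjunction relations and argue their $\tilde e$-invariance via injectivity and strong preservation of $\sqcap,{\bf 0}$; the paper instead uses the single ternary relation $\{(x,y,z)\mid x=y\neq z\ \vee\ x\neq y=z\}$, which is preserved by \emph{any} injection (hence by $\tilde e$ and everything it generates) and whose CSP is NP-complete by~\cite{ecsps}. The paper's choice is a touch cleaner---one relation, one citation, no need to check that the Drakengren--Jonsson reduction uses only finitely many $k$---while yours stays self-contained within set-constraint folklore. Your aside about why $\{U,I,\neq\}$ from Proposition~\ref{prop:hard} is unavailable is correct and well observed; the paper simply never invokes that proposition in this proof.
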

\begin{proof}
When $R_1,R_2,\dots$ are the relations of $\Gamma$,
let $\phi_1,\phi_2,\dots$ be quantifier-free first-order formulas
that define $R^{\Gamma}_1,R^{\Gamma}_2,\dots$ over $\mathfrak S = ({\cal P}({\mathbb N}); \sqcup,\sqcap,c,{{\bf 0}},{{\bf 1}})$. Let $R^{\mathfrak A}_1,R^{\mathfrak A}_2,\dots$ be the
relations defined by $\phi_1,\phi_2,\dots$ over $\mathfrak A$,
and let $\Delta$ be the relational structure with domain $\mathbb A$ and exactly those relations.
By Proposition~\ref{prop:ei-atomless-correct}, $\Delta$ 
contains a relation that is not preserved by $\widetilde{ei}$,
and contains all relations that are preserved by $\widetilde{ei}$.
Consider the set $\mathcal F$ of all polymorphisms of $\Delta$.

The set $\mathcal F$ does not contain $\widetilde ei$, since this would contradict by Theorem~\ref{thm:loc-clos} the fact that $\Delta$ contains a relation that is not preserved by $\widetilde{ei}$. Since $\mathcal F$ is locally closed, it 
follows from Theorem~\ref{thm:minimal} that
all operations $f \in \mathcal F$ are generated by $\tilde e$.
But then the relation $\{(x,y,z) \; |  \; x=y \neq z \vee x \neq y = zÊ\}$
is preserved by all operations in $\mathcal F$,
and hence pp definable in $\Gamma$ by Theorem~\ref{thm:pp-pres}.
This relation has an NP-complete CSP~\cite{ecsps}. 
Let $\Delta'$ be the reduct of $\Delta$ that contains exactly the relations that appear in those pp definitions. Clearly, there are finitely many such relations; we denote the corresponding relation symbols by $\tau' \subset \tau$. 
By Lemma~\ref{lem:pp-reduce}, $\Csp(\Delta')$ is NP-hard. 

This establishes also the hardness of $\Csp(\Gamma)$:
let $\Gamma'$ be the $\tau'$-reduct of $\Gamma$.
We claim that $\Csp(\Gamma')$ and $\Csp(\Delta')$ are the same computational problem. We have to show that a conjunction 
of atomic $\tau'$-formulas $\Phi$ is satisfiable in $\Gamma'$ 
if and only if it is true in $\Delta'$. Replacing each atomic $\tau'$-formula in $\Phi$ by its quantifier-free first-order definition, 
this follows from Theorem~\ref{thm:mo}. 
\end{proof}

\section{Concluding Remarks}
We have introduced the powerful set constraint language of 
$\mathcal EI$ set constraints, which in particular contains
all Horn-Horn set constraints and all previously studied tractable set constraint languages. Constraint satisfaction problems 
over $\mathcal EI$ 
can be solved in polynomial -- even quadratic -- time. Our
tractability result is complemented by a complexity result
which shows that tractability of $\mathcal EI$ set constraints
is best-possible within a large class of set constraint languages. 

It is not hard to see from the properties we prove 
for $\mathcal EI$ set constraints that there is an algorithm to test whether a given finite set constraint language
(where relations in the language are given by quantifier-free formulas over the signature $\{\sqcup,\sqcap,{\bf c},{\bf 0}, {\bf 1} \}$) is contained in $\mathcal EI$. This means that
the so-called \emph{meta-problem} for $\mathcal EI$ 
set constraints can be decided effectively. 

We would also like to remark that one can analogously obtain tractability for the class of 
constraints where the inner clauses of the positive outer literals are \emph{dual Horn} (i.e., have at most one \emph{negative} literal). 

\paragraph{Acknowledgements}
We want to thank Fran\c{c}ois Bossi\`ere who pointed out mistakes
 in the conference
version of the paper. 




\bibliography{global.bib}
\bibliographystyle{alpha} 

\end{document}